\newtheorem{thm}{Theorem}
\newtheorem{cor}{Corollary}
\newtheorem{lem}{Lemma}
\newtheorem{rem}{Remark}
\newtheorem{problem}{Problem}
\newcommand{\revo}[1]{{\color{black}{#1}}}
\newcommand{\rev}[1]{{\color{black}{#1}}}
\begin{document}

\title{Sensor Assignment Algorithms to Improve Observability while Tracking Targets}

\author{Lifeng~Zhou,~\emph{Student Member,~IEEE,} and~Pratap~Tokekar,~\emph{Member,~IEEE}}

\maketitle

\begin{abstract}
We study two sensor assignment problems for multi-target tracking with the goal of improving the observability of the underlying estimator. We consider various measures of the observability matrix as the assignment value function. We first study the general version where the sensors must form teams to track individual targets. If the value function is monotonically increasing and submodular then a greedy algorithm yields a $1/2$--approximation. We then study a restricted version where exactly two sensors must be assigned to each target. We present a $1/3$--approximation algorithm for this problem which holds for arbitrary value functions (not necessarily submodular or monotone). In addition to approximation algorithms, we also present various properties of observability measures. We show that the inverse of the condition number of the observability matrix is neither monotone nor submodular, but present other measures which are. \revo{Specifically, we show that the trace and rank of the symmetric observability matrix are monotone and submodular and the log determinant of the symmetric observability matrix is monotone and submodular when the matrix is non-singular.} If the target's motion model is not known, the inverse cannot be computed exactly. Instead, we present a lower bound for distance sensors. In addition to theoretical results, we evaluate our results empirically through simulations.
\end{abstract}

\IEEEpeerreviewmaketitle

\section{Introduction}
Assigning sensors to better track targets is a well-studied \rev{problem~\cite{grocholsky2002information,jiang2003optimal,spletzer2003dynamic,williams2007information,kamath2007triangulation,zavlanos2008dynamic,tekdas2010sensor,nam2015your,tzoumas2016sensor,tzoumas2016near,tzoumas2017scheduling,yang2017algorithm,chopra2017distributed}.} The typical setup is to formulate a one-to-one assignment problem which can be solved using bipartite graph matching algorithms~\cite{kuhn1955hungarian}. Unlike these works, we focus on scenarios where multiple sensors can be assigned to each target. Furthermore, the utility of assigning multiple sensors may not even be a simple addition of individual utilities, and can have diminishing returns. We study two versions of these many-to-one assignment problems and give constant-factor approximation algorithms for each.

We study these assignment problems in settings where assigning the right set of sensors can improve the observability in tracking potentially mobile targets using noisy sensors. The performance of the target state estimators can be improved by exploiting the \emph{observability} of the underlying system~\cite{gadre2004toward,papadopoulos2010cooperative,arrichiello2013observability,williams2015observability}. Our formulation is motivated by scenarios where sensing multiple targets \revo{by the same sensor} can be time-consuming \revo{as is the case in the following applications}. For example, Tokekar et al.~\cite{tokekar2011active} presented a system to track radio-tagged fish in which each radio tag is assigned a unique frequency. In order to get a measurement for one frequency, the sensor spends two seconds listening to the periodic emissions. Therefore, tracking multiple targets by the same sensor can be time-consuming. Motivated by these scenarios, we seek to assign sensors to track targets with the constraint that each sensor is assigned at most one target. However, multiple sensors can be assigned to track the same target. In fact, more sensors can often improve the tracking performance. 

We first study a general assignment problem where there is no restriction on the number of sensors assigned to a target. We let the algorithm decide the optimal configuration of sensor teams assigned to each target. If the value function is submodular and monotone,\footnote{We use ``monotone" and ``monotone increasing" interchangeably.} a greedy algorithm gives a $1/2$--approximation~\cite{fisher1978analysis}. \revo{In general, this bound is tighter, i.e., $1-1/e$ by using the randomized continuous greedy algorithm~\cite{vondrak2008optimal}. However, for a deterministic algorithm, $1/2$--approximation is the best-known result~\cite{vondrak2008optimal}.} We show that observability measures such as the trace and rank of the symmetric observability matrix are submodular and monotone and the log determinant is submodular and monotone when the symmetric observability matrix is non-singular. However, the inverse of the condition number, a commonly-used measure, is neither monotone nor submodular\revo{, which is shown by a counterexample.} 

We then study a \emph{restricted} version of the problem where the value function can be arbitrary (not necessarily monotone and submodular) but exactly two sensors must be assigned to a target. \revo{In fact, we show in \revo{ Corollaries \ref{cor:sensor_n_1} and \ref{cor:sensor_n_gre2} that at least two range sensors must be assigned for the inverse condition number to be greater than zero.} Even for the case of bearing sensors, it is known that two sensors are necessary; there exists work on assignment and placement of pairs of bearing sensors for improving target tracking~\cite{kamath2007triangulation,tekdas2010sensor}. We study the analogous problem for range sensors.} We prove that the greedy algorithm achieves a $1/3$--approximation of the optimal solution \revo{for assigning pairs of range sensors}. 

Observability is a basic concept in control theory and has been widely applied in robotics. Observability for range-only beacon sensors, in particular, has been closely studied for underwater navigation. Gadre and Stilwell~\cite{gadre2004toward} analyzed the local and global observability~\cite{hermann1977nonlinear} for the localization of an autonomous underwater vehicle by an acoustic beacon. The problems of single vehicle localization and multi-vehicle relative localization are studied in \cite{arrichiello2013observability} using an observability criterion introduced in \cite{krener2009measures}. In these works, it is the sensors that are moving. Consequently, the sensors know their control vectors and can thus, compute the observability matrix and its measures. \revo{When tracking a target, however, the control inputs for the targets may be unknown.} In recent work, Williams and Sukhatme~\cite{williams2015observability} studied a multi-sensor localization and target tracking problem where they showed how to leverage graph rigidity to improve the observability for sensor team localization and robust target tracking. 

Unlike these works, we consider scenarios where the control inputs for the targets are not known to the sensors. Consequently, we cannot compute the observability matrix exactly. Instead, we present a novel lower bound on the observability for the case of unknown target motion tracked by range-only sensors. Specifically, we show how to lower bound the \emph{condition number}~\cite{krener2009measures} of the partially known observability matrix using only the known part (Section~\ref{sec:lowerbound}). \revo{The result is specific to the problem at hand where the inputs
appear, linearly, on a single line of the observability matrix.}

In addition to theoretical results, we conduct simulations to evaluate the empirical performance of the algorithms. We find that sensors are assigned to targets almost uniformly using the greedy algorithm for the first problem (Section~\ref{sec:simulation}-A). The greedy algorithm for the second problem performs much better than $1/3$ in practice (Section~\ref{sec:simulation}-B). 

\section{Overview of the Problem and Results} \label{sec:pro_formulation}
\revo{In this section, we first formally define the problems that are studied and then summarize the main contributions of this work.}

\subsection{Problem Formulation}
We consider a scenario where there are $N$ sensors and $L$ targets in the environment.  We use  $\sigma(t_l)$ to represent the set of sensors assigned to target $t_l$. $\sigma^{-1}(s_i)$ gives the set of targets assigned to sensor $s_i$. We use $\sigma_i(t_l)$ to give the $i^{th}$ sensor assigned to target $t_l$. We order the assigned sensors by using their IDs such that $\sigma_1(t_l) < \sigma_2(t_l) < \sigma_3(t_l) < \ldots$.  Let $\omega(s_i, s_j, t_l)$, and $\omega(\mathcal{S}_{i}, t_l)$ be some measure of the observability of tracking $t_l$ with $s_i$ and $s_j$, and with a set of sensor(s) $\mathcal{S}_{i}$, respectively. \revo{We calculate the observability measure by taking account of the relative positions of sensors and targets (which will appear in the observability matrix) at every time step}. 

We start with the problem of assigning a set of sensors to each target. The sensors form teams of varying sizes to track individual targets. Sensors within a team can share measurements so as to better track the targets. We constrain each sensor to be assigned to only one target. This is motivated by scenarios where sensing multiple targets can be time consuming (as is the case with radio sensors~\cite{tokekar2011active,alvarez2018acoustic} or communicating multiple measurements with the team can be time and energy consuming.
\begin{problem}[General Assignment] Given a set of sensors, $\mathcal{S} := \{s_0,\ldots,s_N\}$ and a set of targets, $\mathcal{T} := \{t_0,\ldots,t_L\}$, find an assignment of sets of sensors to targets to
\begin{equation}
\text{maximize} \sum_{l=1}^L \omega(\sigma(t_l), t_l)
\end{equation}
with the added constraint that each sensor is assigned to at most one target. 
\label{prob:general}
\end{problem}

 Problem~\ref{prob:general} is the general assignment problem which is difficult to solve for arbitrary $\omega(\cdot)$. 
In order to solve the assignment problem for arbitrary value functions, we consider a restricted case where each target is tracked by exactly two sensors. 

\begin{problem}[\revo{Non-overlapping} Pair Assignment] Given a set of sensors, $\mathcal{S} := \{s_0,\ldots,s_N\}$ and a set of targets, $\mathcal{T} := \{t_0,\ldots,t_L\}$, find an assignment of \revo{non-overlapping} pairs of sensors to targets:
\begin{equation}
\text{maximize} \sum_{l=1}^L \omega(\sigma_1(t_l),\sigma_2(t_l), t_l)
\end{equation}
with the added constraint each sensor is assigned to at most one target. That is, for all $i = 1,\ldots, N$ we have $|\sigma^{-1}(s_i)|\leq 1$, assuming $N \geq 2 L$. 
\label{prob:unique}
\end{problem}

\subsection{Contributions}

\revo{The main contributions of this paper are summarized as follows. 
\begin{enumerate}
    \item We derive a lower bound on the inverse of the condition number of the observability matrix which is useful when the control input for the target is not known. 
    \item We show that the condition number of the observability matrix is neither submodular nor monotone increasing. We show that the trace and the rank of the symmetric observability matrix are submodular and monotone increasing. We also show the log determinant of the symmetric observability matrix is submodular and monotone when the symmetric observability matrix is non-singular. 
    \item We present a greedy assignment algorithm that yields a $1/2$--approximation for Problem~\ref{prob:general} and $1/3$--approximation for Problem \ref{prob:unique}. 
    \item We verify the performance of our proposed greedy algorithm with various observability measures (that not necessary submodular) through extensive simulations. 
\end{enumerate}
}
  
We start by showing how to bound the inverse of condition number (Section~\ref{sec:lowerbound}) and then present the assignment algorithms (Section~\ref{sec:algo})

\section{Bounding the Observability} \label{sec:lowerbound}
Consider a mobile target $t_l$ whose position is denoted by $p_{t_l}$. Suppose there are $N$ stationary sensors that can measure the distance\footnote{We use the square of the distance/range for mathematical convenience.} to the target. We have:
\begin{equation}
\left\{
                \begin{array}{ll}
                  \dot{p}_{t_l}=u_{t_l},\\
                  z_{s_i}=h_i(p_{t_l})=\frac{1}{2}\|p_{s_i}-p_{t_l}\|_{2}^{2}, ~i=1,...,N
                \end{array}
              \right.
\label{eqn:multi_sensor_target_system}
\end{equation}
where $p_{t_l}:=[x_{t_l},y_{t_l}]^{T}$ gives the 2D position of the target, and $u_{t_l}:=[u_{lx},u_{ly}]$ defines its control input, which is unknown to the sensor. Let $u_{t_l,\max}\triangleq\max\|u_{t_l}\|_{2}$. $z_{s_i}$ defines the range-only measurement from each sensor $s_i$ whose position is given by $p_{s_i}=[x_{s_i},y_{s_i}]^{T}$. For simplicity, we also assume that the target does not collide with any sensor, i.e., $\|p_{s_i}-p_{t_l}\|_{2} \neq 0$ and no two sensors are deployed at the same position. 

We analyze the weak local observability matrix, $O(p_{t_l},u_{t_l})$, of this multi-sensor target tracking system. We show how to lower bound the inverse of the condition number of $O(p_{t_l},u_{t_l})$, given by $C^{-1}(O(p_{t_l},u_{t_l}))$, independent of $u_{t_l}$. We also show that the lower bound, $\underline{C}^{-1}(O(p_{t_l},u_{t_l}))$, is tight.

We compute the local nonlinear observability matrix \rev{by using Lie derivatives}~\cite{williams2015observability,hermann1977nonlinear} for this system (Equation \ref{eqn:multi_sensor_target_system}) as, 
\begin{equation}
O(p_{t_l},u_{t_l})=\begin{bmatrix}
    \nabla L_{0}^{h_1} \\ 
     \nabla L_{1}^{h_1}\\ 
     \vdots\\
     \nabla L_{0}^{h_2} \\ 
     \nabla L_{1}^{h_2}\\ 
     \vdots\\
     \\
    \vdots\\
    \\
    \nabla L_{0}^{h_N} \\ 
     \nabla L_{1}^{h_N}\\ 
     \vdots
    
\end{bmatrix}=\begin{bmatrix}
    x_{t_l}-x_{s_1},y_{t_l}-y_{s_1} \\ 
     u_{lx},u_{ly}\\ 
     0,0\\
     \vdots\\
     x_{t_l}-x_{s_2},y_{t_l}-y_{s_2} \\ 
     u_{lx},u_{ly}\\ 
     0,0\\
     \vdots\\
     \\
    \vdots\\
    \\
    x_{t_l}-x_{s_N},y_{t_l}-y_{s_N} \\ 
     u_{lx},u_{ly}\\ 
     0,0\\
     \vdots
    
\end{bmatrix}.
\label{observability_multi_sensor_target}
\end{equation}
This equation can be rewritten as,
\begin{eqnarray}
O(p_{t_l},u_{t_l})&=&\begin{bmatrix}
    x_{t_l}-x_{s_1},y_{t_l}-y_{s_1} \\ 
     x_{t_l}-x_{s_2},y_{t_l}-y_{s_2} \\ 
    \vdots\\
    x_{t_l}-x_{s_N},y_{t_l}-y_{s_N} \\ 
     u_{lx},u_{ly}
\end{bmatrix}_{(N+1)\times 2}.
\label{eqn:new_multi_sensor_target}
\end{eqnarray}
The state of the target $t_l$ is \emph{weakly locally observable} if the local nonlinear observability matrix has full column rank~\cite{hermann1977nonlinear}. However, the rank test for the observability of the system does not tell the degree of the observability or how \emph{good} the observability is.  The \emph{condition number}~\cite{krener2009measures}, defined as the ratio of the largest singular value to the smallest, can be used to measure this degree of unobservability. We use the \emph{inverse of condition number} given as,
\begin{equation}
C^{-1}(O(p_{t_l},u_{t_l}))=\frac{\sigma_{\min}(O(p_{t_l},u_{t_l}))}{\sigma_{\max}(O(p_{t_l},u_{t_l}))}
\label{eqn:inverse_condition_number}.
\end{equation}
Note that, $C^{-1} \in [0,1]$. $C^{-1}=0$ means  $O(p_{t_l},u_{t_l})$ is singular and $C^{-1}=1$ means $O(p_{t_l},u_{t_l})$ is \emph{well conditioned}. A larger $C^{-1}$ means better observability (see more details in~\cite{arrichiello2013observability}).

In the local nonlinear observability matrix $O(p_{t_l},u_{t_l})$, $u_{t_l}$ is unknown and not controllable by the sensor. On the other hand, $p_{t_l}-p_{s_i}$, depends on the relative state between each sensor $s_i$ and target $t_l$ and is known to the sensor (assuming an estimate of the target's position is known). Thus, we first partition $O(p_{t_l},u_{t_l})$ into the known and unknown parts as, 
\begin{eqnarray}
O(p_{t_l},u_{t_l})=\begin{bmatrix}
    O(p_{t_l}) \\ 
     O(u_{t_l})  
\end{bmatrix},
\label{eqn:new_multi_sensor_target_partition}
\end{eqnarray}
where
\begin{equation}
O(p_{t_l}):=\begin{bmatrix}
    x_{t_l}-x_{s_1},y_{t_l}-y_{s_1} \\ 
     x_{t_l}-x_{s_2},y_{t_l}-y_{s_2} \\ 
    \vdots\\
    x_{t_l}-x_{s_N},y_{t_l}-y_{s_N}
\end{bmatrix},
\label{eqn: state_contribution_observability}
\end{equation}
and 
\begin{equation}
O(u_{t_l}):=\begin{bmatrix}
     u_{lx},u_{ly}   
\end{bmatrix},
\label{eqn: control_contribution_observability}
\end{equation}
indicate the contributions from the sensor-target relative state and target's control input, respectively. Since $O(u_{t_l})$ is unknown, we cannot compute $C^{-1}(O(p_{t_l},u_{t_l}))$ exactly. Instead, we compute its lower bound as below. 
\begin{thm}
For the multi-sensor-target system (Equation \ref{eqn:multi_sensor_target_system}) with the number of sensors, $N\geq 2$, the inverse of the condition number is lower bounded by $\dfrac{\sigma_{\min}(O(p_{t_l}))}{\sqrt{\sigma^{2}_{\max}(O(p_{t_l}))+ u_{t_l,\max}^{2}}}$.
\label{thm:lower_bound}
\end{thm}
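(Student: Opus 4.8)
The plan is to exploit the block-row structure of the observability matrix. Write $O := O(p_{t_l},u_{t_l})$ as the vertical stack of the known block $O(p_{t_l}) \in \mathbb{R}^{N\times 2}$ over the single unknown row $O(u_{t_l}) = [u_{lx},u_{ly}] \in \mathbb{R}^{1\times 2}$. Since $O$ has only two columns, $\sigma_{\min}(O)$ and $\sigma_{\max}(O)$ are exactly the minimum and maximum of $\|Ox\|_2$ over unit vectors $x\in\mathbb{R}^2$, and the same variational description applies to $O(p_{t_l})$. The key identity I would use is, for every $x$,
\[
\|Ox\|_2^2 = \|O(p_{t_l})\,x\|_2^2 + \big(O(u_{t_l})\,x\big)^2,
\]
which decouples the known and unknown contributions. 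I would then bound the numerator and denominator of $C^{-1}(O)=\sigma_{\min}(O)/\sigma_{\max}(O)$ separately.

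For the numerator, the extra term $\big(O(u_{t_l})x\big)^2$ is nonnegative, so $\|Ox\|_2 \geq \|O(p_{t_l})x\|_2$ for all $x$; minimizing over unit $x$ gives $\sigma_{\min}(O) \geq \sigma_{\min}(O(p_{t_l}))$. (Equivalently $O^\top O = O(p_{t_l})^\top O(p_{t_l}) + O(u_{t_l})^\top O(u_{t_l}) \succeq O(p_{t_l})^\top O(p_{t_l})$, and eigenvalue monotonicity under a positive-semidefinite perturbation yields the same.) For the denominator, I would bound the two terms of the identity independently: $\|O(p_{t_l})x\|_2 \leq \sigma_{\max}(O(p_{t_l}))$ and, by Cauchy--Schwarz, $|O(u_{t_l})x| \leq \|u_{t_l}\|_2 \leq u_{t_l,\max}$ for unit $x$. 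Hence $\|Ox\|_2^2 \leq \sigma_{\max}^2(O(p_{t_l})) + u_{t_l,\max}^2$ uniformly, so $\sigma_{\max}(O) \leq \sqrt{\sigma_{\max}^2(O(p_{t_l})) + u_{t_l,\max}^2}$.

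Combining, the left-hand numerator is at least the claimed numerator and the left-hand denominator is at most the claimed denominator, with all quantities positive because $N\geq 2$ and the no-collision assumption force $O(p_{t_l})\neq 0$; since $\sigma_{\min}(O(p_{t_l}))\geq 0$, the ratio can only increase, giving $C^{-1}(O) \geq \sigma_{\min}(O(p_{t_l}))/\sqrt{\sigma_{\max}^2(O(p_{t_l})) + u_{t_l,\max}^2}$, which is the statement.

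The argument is short precisely because $O(u_{t_l})$ is a single row, so $O(u_{t_l})^\top O(u_{t_l})$ is rank one with only nonzero eigenvalue $\|u_{t_l}\|_2^2$; this lets the worst-case input enter additively under the square root rather than coupling intricately with $O(p_{t_l})$. The one subtlety to watch is the denominator step, where I maximize $\|O(p_{t_l})x\|_2$ and $|O(u_{t_l})x|$ over possibly different directions, so the bound need not be met with equality in general. For the tightness claim foreshadowed in the text, I would exhibit the aligned configuration $u_{t_l} = u_{t_l,\max}\,v_{\max}$, where $v_{\max}$ is the top right-singular vector of $O(p_{t_l})$: in $2$D this vector is orthogonal to the bottom singular direction, so the numerator bound is attained simultaneously and both inequalities become equalities.
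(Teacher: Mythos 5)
Your proof is correct and follows essentially the same route as the paper: you use the same decomposition $O^{T}O = O^{T}(p_{t_l})O(p_{t_l}) + O^{T}(u_{t_l})O(u_{t_l})$ and bound $\sigma_{\min}$ from below and $\sigma_{\max}$ from above additively, which is exactly what the paper does via the Weyl and dual Weyl inequalities applied to the extreme eigenvalues of the symmetric observability matrix. The only cosmetic difference is that you derive these two extreme-eigenvalue bounds inline from the variational characterization (and Cauchy--Schwarz for the rank-one input row) rather than citing Weyl's inequalities, and your closing tightness observation with $u_{t_l}$ aligned to the top right-singular vector is in fact slightly stronger than the paper's remark, which only notes tightness for a stationary target.
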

We present the full proof for this and other results in the appendix.

We wish to improve the worst case, i.e., the lower bound of $C^{-1}(O(p_{t_l},u_{t_l}))$, by optimizing the sensor-target relative state by assigning a different subset of sensors to the target. In the following, we will show that at least two sensors are required to improve the lower bound. 
\begin{cor}
The lower bound of the observability metric in one-sensor-target system, $\underline{C}^{-1}(O_{i}(p_{t_l},u_{t_l}))$, \revo{ is identically zero. It} does not depend on the position of the sensor and therefore cannot be controlled by the sensors.
\label{cor:sensor_n_1}
\end{cor}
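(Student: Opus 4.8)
The plan is to specialize the lower bound of Theorem~\ref{thm:lower_bound} to the one-sensor case and observe that its numerator vanishes identically. First I would write out the observability matrix for a single sensor $s_i$: setting $N=1$ in Equation~\eqref{eqn:new_multi_sensor_target} gives a $2\times 2$ matrix $O_i(p_{t_l},u_{t_l})$ whose known part is the single row
\[
O_i(p_{t_l}) = \begin{bmatrix} x_{t_l}-x_{s_i} & y_{t_l}-y_{s_i}\end{bmatrix},
\]
a $1\times 2$ matrix.

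The key observation is that this $1\times 2$ matrix has rank at most one, so viewed as a map onto the two-dimensional state space it is rank-deficient and its smallest singular value is zero. Concretely, the Gram matrix $O_i(p_{t_l})\,O_i(p_{t_l})^{T} = \|p_{t_l}-p_{s_i}\|_2^{2}$ is a scalar, so $O_i(p_{t_l})$ has exactly one nonzero singular value, $\sigma_{\max}(O_i(p_{t_l})) = \|p_{t_l}-p_{s_i}\|_2$, while the second singular value associated with the two state dimensions is $\sigma_{\min}(O_i(p_{t_l})) = 0$. This matches the geometric intuition that a single range measurement constrains the target to a circle and leaves one direction unobservable.

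Substituting $\sigma_{\min}(O_i(p_{t_l})) = 0$ into the lower bound of Theorem~\ref{thm:lower_bound} then gives
\[
\underline{C}^{-1}(O_i(p_{t_l},u_{t_l})) = \frac{\sigma_{\min}(O_i(p_{t_l}))}{\sqrt{\sigma_{\max}^{2}(O_i(p_{t_l})) + u_{t_l,\max}^{2}}} = 0,
\]
where the denominator is strictly positive because $\|p_{t_l}-p_{s_i}\|_2 \neq 0$ by assumption (the target does not collide with the sensor). Since the numerator is identically zero for every choice of $p_{s_i}$, the bound is zero and independent of the sensor position, which is the claim. It is worth stressing that this is a statement about the \emph{worst-case} bound: the actual $2\times 2$ matrix $O_i(p_{t_l},u_{t_l})$ may well have full rank, but because we cannot observe $u_{t_l}$ the guaranteed bound collapses to zero.

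The main obstacle is a subtlety rather than a genuine difficulty: one must be careful about the singular-value convention, since a $1\times 2$ matrix nominally has only one singular value. The justification for treating the missing second value as zero is that observability of the two-dimensional target state requires full column rank (rank $2$) of the observability matrix; the relevant $\sigma_{\min}$ is therefore the smallest of the two singular values indexed by the two columns, and with a single row this is necessarily zero. This also explains the hypothesis $N\geq 2$ in Theorem~\ref{thm:lower_bound}: at least two (non-collinear) sensors are needed for the known part $O(p_{t_l})$ to attain full column rank and hence yield a nonzero lower bound, which motivates the pair-assignment formulation of Problem~\ref{prob:unique}.
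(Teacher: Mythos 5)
Your proposal is correct and follows essentially the same route as the paper: both arguments reduce to observing that the known part $O_i(p_{t_l})$ is a single row, so $\sigma_{\min}(O_i(p_{t_l}))=0$, and then substituting into the lower bound of Theorem~\ref{thm:lower_bound}. The only cosmetic difference is that the paper computes the eigenvalues of the $2\times 2$ matrix $O_i^{T}(p_{t_l})O_i(p_{t_l})$ (which are $0$ and $\|p_{t_l}-p_{s_i}\|_2^2$) to justify $\sigma_{\min}=0$, whereas you argue via rank deficiency and address the singular-value convention directly; both are sound.
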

When the number of sensors, $N\geq 2$, we have a positive result that shows that the sensors can improve the lower bound on the inverse of the condition number of optimizing their positions, \revo{even though the contribution to the observability matrix from the target's input, $O(u_{t_l})$, is unknown and cannot be controlled.}
\begin{cor}
Suppose that the number of sensors assigned to $t_l$, $N \geq 2$. If the sensors increase $C^{-1}(O(p_{t_l}))$ and $\sigma_{\min}(O(p_{t_l})\revo{)}$ (the inverse of condition number and the smallest singular number of the relative state contribution $O(p_{t_l})$), then the lower bound of $C^{-1}(O(p_{t_l},u_{t_l}))$ also increases. 
\label{cor:sensor_n_gre2}
\end{cor}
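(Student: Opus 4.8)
The plan is to express the lower bound from Theorem~\ref{thm:lower_bound} purely as a function of the two quantities that the hypothesis assumes to increase, and then to show that this function is strictly monotone increasing in each of them separately. Write $a \triangleq \sigma_{\min}(O(p_{t_l}))$, $b \triangleq \sigma_{\max}(O(p_{t_l}))$, and $r \triangleq C^{-1}(O(p_{t_l})) = a/b$, so that the bound of Theorem~\ref{thm:lower_bound} reads $\underline{C}^{-1} = a/\sqrt{b^{2}+u_{t_l,\max}^{2}}$. Since $N\ge 2$ ensures (generically) that $O(p_{t_l})$ has full column rank, we have $a>0$ and $r\in(0,1]$, so $b=a/r$ is well defined and we may eliminate $b$. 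Substituting $b^{2}=a^{2}/r^{2}$ and simplifying gives
\[
\underline{C}^{-1}=g(a,r)=\frac{r}{\sqrt{1+(u_{t_l,\max}^{2}/a^{2})\,r^{2}}}.
\]

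First I would verify monotonicity in $a$: holding $r$ fixed, the term $(u_{t_l,\max}^{2}/a^{2})\,r^{2}$ in the denominator is strictly decreasing in $a$, hence $g(a,r)$ is strictly increasing in $a$. Next I would verify monotonicity in $r$: writing $c\triangleq u_{t_l,\max}^{2}/a^{2}\ge 0$ and differentiating $f(r)=r\,(1+cr^{2})^{-1/2}$, a short computation yields $f'(r)=(1+cr^{2})^{-3/2}>0$, so $g(a,r)$ is strictly increasing in $r$ as well. Combining the two coordinatewise statements, if the sensors move to a configuration with larger $a$ and larger $r$, then $g(a_{2},r_{2})>g(a_{1},r_{2})>g(a_{1},r_{1})$, where the first inequality uses monotonicity in $a$ at fixed $r=r_{2}$ and the second uses monotonicity in $r$ at fixed $a=a_{1}$. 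This chain is exactly the claim that increasing both $C^{-1}(O(p_{t_l}))$ and $\sigma_{\min}(O(p_{t_l}))$ increases the lower bound.

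I expect the main obstacle to be conceptual rather than computational: the bound is most naturally written in terms of $\sigma_{\min}$ and $\sigma_{\max}$, but the hypothesis controls $\sigma_{\min}$ and the \emph{ratio} $\sigma_{\min}/\sigma_{\max}$, not $\sigma_{\max}$ itself. Because simultaneously increasing $\sigma_{\min}$ and the inverse condition number places no definite constraint on $\sigma_{\max}$ alone (the largest singular value may either grow or shrink), a naive ``the denominator $\sqrt{b^{2}+u_{t_l,\max}^{2}}$ decreases'' argument fails. The reparametrization that trades $b=\sigma_{\max}$ for $(a,r)$ is precisely what makes the joint monotonicity transparent. As a final check I would confirm the substitution $b=a/r$ is legitimate, i.e.\ that $r>0$ whenever the bound is to be improved; the degenerate case $\sigma_{\min}(O(p_{t_l}))=0$ (where the bound is identically zero) is already covered by Corollary~\ref{cor:sensor_n_1}.
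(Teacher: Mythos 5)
Your proposal is correct, and it reaches the conclusion by a genuinely different route than the paper. The paper works in eigenvalue coordinates ($\lambda_{\min}=\sigma_{\min}^{2}$, $\lambda_{\max}=\sigma_{\max}^{2}$), puts the two bounds over a common denominator, and observes that the numerator splits as $\bigl(\lambda'_{\min}\lambda_{\max}-\lambda'_{\max}\lambda_{\min}\bigr)+u_{t_l}^{2}\bigl(\lambda'_{\min}-\lambda_{\min}\bigr)$, where the first term is nonnegative by the condition-number hypothesis and the second by the $\sigma_{\min}$ hypothesis. You instead eliminate $\sigma_{\max}$ via the substitution $b=a/r$, rewrite the bound as $g(a,r)=r\,(1+(u_{t_l,\max}^{2}/a^{2})r^{2})^{-1/2}$, and check coordinatewise monotonicity in exactly the two quantities the hypothesis controls, finishing with the standard chain $g(a_{2},r_{2})\geq g(a_{1},r_{2})\geq g(a_{1},r_{1})$. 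Your reparametrization makes it transparent \emph{why} these two hypotheses (and not, say, a bound on $\sigma_{\max}$ alone) are the right ones --- your remark that $\sigma_{\max}$ may grow or shrink under the hypotheses is a genuine insight the paper does not articulate --- while the paper's argument is more elementary (a single algebraic identity, no calculus or change of variables) and, as its Remark~\ref{remark:rem2} notes, directly exposes the weaker sufficient condition on the numerator. Two small caveats: your claims of \emph{strict} monotonicity fail in degenerate cases (e.g., $g$ is constant in $a$ when $u_{t_l,\max}=0$), whereas the statement only needs the non-strict version the paper proves; and the substitution $b=a/r$ requires $r>0$, which you correctly flag and dispose of, since the $a=0$ case gives a zero bound that any admissible change can only improve.
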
 
\begin{rem}
The lower bound $\underline{C}^{-1}(O(p_{t_l},u_{t_l}))$ is tight when the target is known to be stationary. If $u_o\in\{0\}$, $O(p_{t_l},u_{t_l})=O(p_{t_l})$ by Equation~\ref{eqn:new_multi_sensor_target} and Equation~\ref{eqn: state_contribution_observability}. Thus, from Equation~\ref{eqn:inverse_condition_number} and Theorem~\ref{thm:lower_bound}, the lower bound  $\underline{C}^{-1}(O(p_{t_l},u_{t_l}))={C}^{-1}(O(p_{t_l},u_{t_l}))$, which implies that the lower bound is tight.
\end{rem}

Next, we use these results for assigning sensors to targets.
\section{Assignment Algorithms} \label{sec:algo}
So far, we have assumed that we know the true position, $p_{t_l}(k)$, of the target at time $k$. In practice, we only have an estimate, $\hat{p}_{t_l}(k)$, for $t_l$ along with its covariance $\Sigma_{t_l}(k)$. The estimate is obtained by fusing measurements using, for example, an Extended Kalman Filter (EKF). \revo{We use the estimated position of the target for the assignment algorithms in the simulation (Section~\ref{sec:simulation}).} 


\subsection{General assignment as Submodular Welfare Optimization}\label{sec:general_assign}
 We first study the General Assignment (Problem~\ref{prob:general}) where each target $t_l$ is tracked by a subset of sensors $\sigma(t_l)\subset \mathcal{S}, ~l\in\{1,2,...,L\}$ whose cardinality is not necessarily two. 
This is known as \emph{submodular welfare problem} in the literature~\cite{vondrak2008optimal} where the objective is to maximize $\sum_{i=1}^{n}w_i(\mathcal{S}_i)$ for independent sets $\{\mathcal{S}_i| \mathcal{S}_i\subseteq \mathcal{S}, ~i=\{1,2,...n\}\}$ by using monotone and
submodular utility functions $w_i$. A greedy algorithm~\cite{fisher1978analysis} yields a $1/2$--approximation for this problem. 
We first show that the lower bound of the inverse of the condition number is neither monotone nor submodular. 
\rev{This is not surprising since it has been shown that similar measures for analogous versions of the controllability matrix are also not submodular or monotone increasing~\cite{summers2016submodularity}.}

\begin{thm}
The lower bound of the inverse of condition number function $\omega(\cdot)$ is neither monotone increasing nor submodular. 
\label{Them:invcond_not_sub}
\end{thm}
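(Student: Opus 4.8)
The plan is to prove both negative claims by exhibiting explicit sensor configurations, since each property already fails on tiny instances. First I would rewrite the value function in a convenient form. Writing $v_i := p_{t_l}-p_{s_i}\in\mathbb{R}^{2}$ for the relative position of sensor $s_i$, the matrix $O(p_{t_l})$ of Equation~\ref{eqn: state_contribution_observability} has rows $v_i^{\top}$, so that $M_{\mathcal{S}}:=O(p_{t_l})^{\top}O(p_{t_l})=\sum_{s_i\in\mathcal{S}}v_i v_i^{\top}$ is a $2\times 2$ positive semidefinite matrix whose eigenvalues are exactly $\sigma_{\min}^{2}(O(p_{t_l}))$ and $\sigma_{\max}^{2}(O(p_{t_l}))$. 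By Theorem~\ref{thm:lower_bound}, the value function is then $\omega(\mathcal{S},t_l)=\dfrac{\sqrt{\lambda_{\min}(M_{\mathcal{S}})}}{\sqrt{\lambda_{\max}(M_{\mathcal{S}})+u_{t_l,\max}^{2}}}$. The structural fact driving everything is that adding a sensor augments $M_{\mathcal{S}}$ by a rank-one positive semidefinite term $v_i v_i^{\top}$, which can only raise both eigenvalues; since $\lambda_{\max}$ sits in the denominator, such an addition may \emph{decrease} $\omega$, and since $\lambda_{\min}$ becomes positive only once the relative vectors span $\mathbb{R}^{2}$, the numerator exhibits a rank threshold (a complementarity effect). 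These two observations motivate the two counterexamples.

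For non-monotonicity I would place the target at the origin and take two sensors with $v_1=(1,0)$ and $v_2=(0,1)$, giving $M=\mathrm{diag}(1,1)$ and $\omega=\dfrac{1}{\sqrt{1+u_{t_l,\max}^{2}}}$. Adding a third sensor far out along the $x$-axis, $v_3=(R,0)$ with $R$ large, yields $M=\mathrm{diag}(1+R^{2},1)$ and $\omega=\dfrac{1}{\sqrt{1+R^{2}+u_{t_l,\max}^{2}}}$, which is strictly smaller. Thus a superset has strictly smaller value, so $\omega$ is not monotone increasing, for every $u_{t_l,\max}\ge 0$.

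For non-submodularity I would exploit the rank threshold on a separate instance. Take collinear sensors $v_1=(2,0)$ and $v_2=(3,0)$ together with a fresh sensor $e$ with $v_e=(0,3)$ in the orthogonal direction, and compare the nested sets $A=\{s_1\}\subseteq B=\{s_1,s_2\}$ with $e\notin B$. Both $A$ and $B$ are rank deficient, so $\lambda_{\min}=0$ and $\omega(A,t_l)=\omega(B,t_l)=0$; the marginal gains therefore reduce to $\omega(A\cup\{e\},t_l)$ and $\omega(B\cup\{e\},t_l)$. A direct computation gives $M_{A\cup\{e\}}=\mathrm{diag}(4,9)$ and $M_{B\cup\{e\}}=\mathrm{diag}(13,9)$, so the marginal gain at the larger set, $\dfrac{3}{\sqrt{13+u_{t_l,\max}^{2}}}$, strictly exceeds the marginal gain at the smaller set, $\dfrac{2}{\sqrt{9+u_{t_l,\max}^{2}}}$, because $9(9+u_{t_l,\max}^{2})>4(13+u_{t_l,\max}^{2})$ reduces to $29+5u_{t_l,\max}^{2}>0$, which holds for all $u_{t_l,\max}\ge 0$. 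This reverses the diminishing-returns inequality and shows $\omega$ is not submodular.

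I expect the main obstacle to be book-keeping rather than conceptual: the lower bound carries the extra additive constant $u_{t_l,\max}^{2}$ in the denominator, so the counterexamples must be arranged so that the decisive scalar inequalities survive for \emph{every} admissible $u_{t_l,\max}$ and are not artifacts of the convenient value $u_{t_l,\max}=0$. The constructions above are chosen precisely so that the governing inequalities are either independent of or monotone-favourable in $u_{t_l,\max}^{2}$. I would also check the feasibility constraints from Section~\ref{sec:lowerbound}, namely $v_i\neq 0$ and distinct sensor positions, which hold trivially in both configurations.
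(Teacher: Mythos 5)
Your proof is correct and takes essentially the same approach as the paper: both establish the claim by exhibiting explicit counterexample configurations, one violating monotonicity (adding a sensor inflates $\lambda_{\max}$ in the denominator) and one violating diminishing returns (the rank threshold on $\lambda_{\min}$). The only difference is that the paper reports numerically computed values for specific coordinates with $u_{t_1,\max}=1$, whereas your diagonal constructions let you verify the inequalities symbolically for every $u_{t_l,\max}\ge 0$, which is a modest strengthening but not a different method.
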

\begin{proof}
We prove the claim by giving two counter-examples. 

\noindent\textbf{Case 1}: Given the sensors $s_1(0,0)$, $s_2(2\sqrt{3},-9)$, $s_3(\sqrt{3},3)$ and target $t_1(\sqrt{3},1)$ with $u_{t_{1},max}=1$ in 2-D plane, 
$\omega(\{s_1,s_3\})=0.5345>\omega(\{s_1,s_2, s_3\})=0.1823$, which shows $\omega(\cdot)$ is not monotone increasing. 

\noindent\textbf{Case 2}: Given the sensors $s_1(0,0)$, $s_2(2\sqrt{3},0)$, $s_3(\sqrt{3},0.1)$, $s_4(\sqrt{3},3)$ and target $t_1(\sqrt{3},1)$ with $u_{t_{1},max}=1$ in 2-D plane, 
 $\omega(\{s_1,s_2,s_3\})-\omega(\{s_1,s_2\})=0.3310-0.5345=-0.2035<\omega(\{s_1,s_2,s_4,s_3\})-\omega(\{s_1,s_2, s_4\})=0.8765-0.9258=-0.0493$, which shows $\omega(\cdot)$ is not submodular. 
\end{proof}

Therefore, we focus on other measures of observability and summarize the results in Theorem~\ref{thm:submodular_observability_measure}. 

\begin{thm}
\rev{The trace and rank of the symmetric observability matrix, $\mathbb{O}(p_{t_l},u_{t_l}):=O^{T}(p_{t_l},u_{t_l})O(p_{t_l},u_{t_l})$, and of the sensor-target relative state contribution matrix, $\mathbb{O}(p_{t_l}):=O^{T}(p_{t_l})O(p_{t_l})$, are submodular \revo{(modular)} and monotone increasing. The log determinant of the two matrices is submodular and monotone increasing if the corresponding matrix is non-singular.} \label{thm:submodular_observability_measure}
\end{thm}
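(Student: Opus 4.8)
The plan is to exploit the fact that each of the two symmetric observability matrices is an additive sum of rank-one positive semidefinite (PSD) contributions, one per assigned sensor. Writing $v_i := [\,x_{t_l}-x_{s_i},\; y_{t_l}-y_{s_i}\,]^{T}$ for the (transposed) row contributed by sensor $s_i$ in Equation~\eqref{eqn:new_multi_sensor_target}, any subset $\mathcal{A}\subseteq\mathcal{S}$ of assigned sensors gives
\begin{align}
\mathbb{O}(p_{t_l})\big|_{\mathcal{A}} &= \sum_{s_i\in\mathcal{A}} v_i v_i^{T}, \\
\mathbb{O}(p_{t_l},u_{t_l})\big|_{\mathcal{A}} &= \sum_{s_i\in\mathcal{A}} v_i v_i^{T} + u\,u^{T},
\end{align}
where $u := [\,u_{lx},\,u_{ly}\,]^{T}$ is the fixed, set-independent control contribution. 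Both matrices are PSD and differ only by the constant term $u u^{T}$, so I would treat them uniformly: let $A_{\mathcal{A}}$ denote either Gram matrix and $A_{\emptyset}$ its constant part ($0$ or $u u^{T}$).

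For the trace, linearity gives $\operatorname{tr}(A_{\mathcal{A}}) = \operatorname{tr}(A_{\emptyset}) + \sum_{s_i\in\mathcal{A}}\|v_i\|_2^{2}$. Each sensor contributes the fixed nonnegative scalar $\|v_i\|_2^{2}$ regardless of which other sensors are present, so the trace is \emph{modular}; modularity implies submodularity (the defining inequality holds with equality), and the nonnegative increments give monotonicity. For the rank, I would note $\operatorname{rank}(A_{\mathcal{A}}) = \dim\operatorname{span}\big(\{v_i : s_i\in\mathcal{A}\}\cup\mathcal{U}\big)$, by stacking the $v_i$ (and $u$) as columns of a matrix $M$ and using $\operatorname{rank}(MM^{T})=\operatorname{rank}(M)$, where $\mathcal{U}=\emptyset$ for $\mathbb{O}(p_{t_l})$ and $\mathcal{U}=\{u\}$ for $\mathbb{O}(p_{t_l},u_{t_l})$. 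This is exactly the rank function of the linear matroid on $\{v_i\}$ (with $u$ adjoined as a fixed element), which is monotone and satisfies the submodular exchange inequality $\operatorname{rank}(X\cup Y)+\operatorname{rank}(X\cap Y)\le\operatorname{rank}(X)+\operatorname{rank}(Y)$; adjoining the fixed element $u$ preserves both properties, which settles the rank.

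The log-determinant is the substantive case. Assuming the relevant matrix is nonsingular (hence positive definite), so that $f(\mathcal{A}):=\log\det(A_{\mathcal{A}})$ is finite, the matrix determinant lemma gives the marginal gain of adding a sensor $s_j\notin\mathcal{A}$ as
\begin{align}
f(\mathcal{A}\cup\{s_j\}) - f(\mathcal{A}) &= \log\det\!\big(A_{\mathcal{A}}+v_j v_j^{T}\big) - \log\det(A_{\mathcal{A}}) \nonumber\\
&= \log\!\big(1 + v_j^{T} A_{\mathcal{A}}^{-1} v_j\big).
\end{align}
Monotonicity follows immediately, since $A_{\mathcal{A}}^{-1}\succeq 0$ forces $v_j^{T}A_{\mathcal{A}}^{-1}v_j\ge 0$. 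For submodularity I would invoke the order-reversal of matrix inversion: if $\mathcal{A}\subseteq\mathcal{A}'$ then $A_{\mathcal{A}}\preceq A_{\mathcal{A}'}$ (their difference is a sum of PSD rank-one terms), and for positive definite matrices $A_{\mathcal{A}}\preceq A_{\mathcal{A}'}$ implies $A_{\mathcal{A}}^{-1}\succeq A_{\mathcal{A}'}^{-1}$; hence $v_j^{T}A_{\mathcal{A}}^{-1}v_j\ge v_j^{T}A_{\mathcal{A}'}^{-1}v_j$, so the marginal gain is nonincreasing in the current set, which is precisely the diminishing-returns form of submodularity.

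The main obstacle is this last step: establishing the order-reversal $A_{\mathcal{A}}\preceq A_{\mathcal{A}'}\Rightarrow A_{\mathcal{A}}^{-1}\succeq A_{\mathcal{A}'}^{-1}$ cleanly, and ensuring that the nonsingularity hypothesis makes every inverse in the marginal-gain identities well defined along each chain $\mathcal{A}\subseteq\mathcal{A}'$. Fortunately the latter propagates upward: if $A_{\mathcal{A}}\succ 0$ then $A_{\mathcal{A}'}\succeq A_{\mathcal{A}}\succ 0$ for any superset, so it suffices that the smallest set under consideration be nonsingular. Because the Gram matrices here are only $2\times 2$, I would, if necessary, verify the positive-definiteness bookkeeping directly rather than appealing to any genericity assumption.
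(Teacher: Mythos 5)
Your proposal is correct, and it shares the paper's starting point---the additive decomposition of both Gram matrices into rank-one positive semidefinite contributions, with the control term $uu^{T}$ entering as a fixed, set-independent summand---and your trace argument is essentially identical to the paper's (linearity of the trace gives modularity, hence submodularity and monotonicity). Where you genuinely diverge is in the other two measures. For the rank, the paper invokes the identity $\operatorname{rank}(Q_1+Q_2)=\operatorname{rank}(Q_1)+\operatorname{rank}(Q_2)-\dim(\operatorname{range}(Q_1)\cap\operatorname{range}(Q_2))$ and checks that the marginal gain $f_r(\mathcal{W})$ is monotone decreasing; you instead identify the function as the rank of the linear matroid spanned by the vectors $v_i$ (with $u$ adjoined as a fixed element), importing submodularity and monotonicity wholesale from matroid theory---shorter, and it makes transparent why adjoining the constant $u$ changes nothing. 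For the log determinant, the paper follows the continuous-interpolation technique of Summers et al.: it defines $\mathbb{O}(\gamma)=\mathbb{O}_{\mathcal{W}_1}+\gamma(\mathbb{O}_{\mathcal{W}_2}-\mathbb{O}_{\mathcal{W}_1})$, differentiates the marginal gain via $\frac{d}{d\gamma}\log\det X(\gamma)=\operatorname{trace}\big(X(\gamma)^{-1}\frac{d}{d\gamma}X(\gamma)\big)$, and shows the derivative is nonpositive. You use a discrete rank-one update instead: the matrix determinant lemma gives the marginal gain in closed form as $\log\big(1+v_j^{T}A_{\mathcal{A}}^{-1}v_j\big)$, and antitonicity of inversion on the positive-definite cone ($A\preceq B \Rightarrow A^{-1}\succeq B^{-1}$) yields diminishing returns directly. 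Both routes are valid; yours avoids calculus and produces an explicit marginal-gain formula, while the paper's path argument extends more readily to increments that are not rank one. Your treatment of the nonsingularity caveat---positive definiteness propagates up any chain of supersets, so only the smallest set in play needs to be nonsingular---is consistent with the paper's own discussion of the singular (single-sensor or empty) case.
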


\rev{The reason that we also study the measures of the symmetric observability matrix by sensor-target relative state contribution is that the control input of the target is unknown, and thus the symmetric observability matrix is not available.} 
The proof \rev{provided in the appendix} is similar to proving that the trace of the Gramian, the log determinant, and the rank of the Gramian are monotone submodular~\cite{summers2016submodularity}. \rev{Note that, if $\mathbb{O}(p_{t_l})$ is singular, its log determinant is $-\infty$. In our sensor assignment case, if a single sensor is assigned to target $t_l$, $\mathbb{O}(p_{t_l})$ is always singular, i.e., $\det(\mathbb{O}(p_{t_l}))=0$ (see the proof of \revo{Corollary~\ref{cor:sensor_n_1}}). If no sensors are assigned to a target, then the matrix is not defined. Thus, at least two sensors need to be assigned to a target to ensure non-singularity of $\mathbb{O}(p_{t_l})$.} 

\subsection{A $1/3$--approximation algorithm for Problem~\ref{prob:unique}} \label{sec:selfishmatching}
Next, we study a more specific assignment (Problem~\ref{prob:unique}) where each target $t_l$ is tracked by a pair of sensors but allow the value function to be arbitrary. 
We propose a greedy algorithm to solve this problem. In each round, we calculate the observability metric, $\omega(\sigma_1(t_l),\sigma_2(t_l), t_l)$ for
all triples $(\sigma_1(t_l),\sigma_2(t_l), t_l), ~\sigma_1(t_l),\sigma_2(t_l)\in \mathcal{S},~t_l\in\mathcal{T}$, and select the triple which has the maximum $\omega(\sigma_1(t_l),\sigma_2(t_l), t_l)$, then remove 
$\{\sigma_1(t_l),\sigma_2(t_l)\}$ from sensor set $\mathcal{S}$ and remove $t_l$ from target set $\mathcal{T}$, respectively. We present the greedy approach in Algorithm \ref{algorithm:unique_pair_assignment} where $\omega(\textrm{GREEDY})$ denotes total value obtained by the greedy approach. In this case, we can use the inverse of the condition number as $\omega(\cdot)$.
\begin{algorithm}
$k\leftarrow 0, ~\omega(\textrm{GREEDY})\leftarrow 0$\\
\While{true}{
Compute all possible 
$\omega(\sigma_1(t_l),\sigma_2(t_l), t_l)$.\\
Pick the triple $(\sigma_1(t_l),\sigma_2(t_l), t_l)$ with maximum $\omega(\sigma_1(t_l),\sigma_2(t_l), t_l)$ defined as $\omega_{\max}$.\\ $\omega(\textrm{GREEDY})\leftarrow \omega(\textrm{GREEDY})+\omega_{\max}$.\\
Remove $\{s_i,s_j\}$ from the sensor set $\mathcal{S}$ and remove $t_l$ from the target set $\mathcal{T}$.\\
$k\leftarrow k + 1$
}   
 \caption{Greedy \revo{Non-overlapping} Pair Assignment}
 \label{algorithm:unique_pair_assignment}
\end{algorithm}

\begin{thm}
$\omega({\revo{\emph{GREEDY}}}) \geq \frac{1}{3}\omega(\revo{\emph{OPT}})$ where \revo{\emph{OPT}} is the optimal algorithm for Problem~\ref{prob:unique}. The running time for Algorithm~\ref{algorithm:unique_pair_assignment} is $O(N^2L^2)$.
\label{thm:lem_optimal_fin}
\end{thm}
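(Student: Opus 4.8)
The plan is to treat Problem~\ref{prob:unique} as a weighted matching (equivalently, a weighted $3$-set packing) problem and bound GREEDY against the optimum by a charging argument. View each admissible triple $(\sigma_1(t_l),\sigma_2(t_l),t_l)$ as a set of three ground elements drawn from the combined universe $\mathcal{S}\cup\mathcal{T}$: two distinct sensors and one target. The feasibility requirement that each sensor and each target be used at most once is exactly the condition that the selected triples be pairwise disjoint in this universe. Thus both GREEDY and OPT are collections of pairwise-disjoint triples, and greedy repeatedly selects the maximum-weight triple consistent with its earlier choices.

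First I would show that every OPT triple shares a ground element with some greedy triple. Since $N\geq 2L$, greedy terminates only after all targets are assigned (no admissible triple remains otherwise). Hence if some OPT triple $o$ were disjoint from every greedy triple, its two sensors and its target would all remain free at termination, so $o$ would still be admissible, contradicting termination. This lets me define a charging map sending each $o\in\text{OPT}$ to the \emph{earliest} greedy triple $g$ that intersects it.

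The two key claims are then: (i) each greedy triple receives at most three charges, and (ii) whenever $o$ is charged to $g$ we have $\omega(o)\le\omega(g)$. For (i), a greedy triple $g=(s_i,s_j,t_l)$ has exactly three ground elements; since OPT is pairwise disjoint, at most one OPT triple contains $s_i$, at most one contains $s_j$, and at most one contains $t_l$, so at most three distinct OPT triples intersect $g$. For (ii), because $g$ is the \emph{first} greedy triple meeting $o$, all three elements of $o$ are still free in the round in which $g$ is chosen, so $o$ is a valid candidate in that round; as greedy picks the maximum-weight candidate, $\omega(g)\ge\omega(o)$. Combining, $\omega(\text{OPT})=\sum_{o}\omega(o)\le\sum_{g}\sum_{o \text{ charged to } g}\omega(o)\le\sum_{g}3\,\omega(g)=3\,\omega(\text{GREEDY})$, giving $\omega(\text{GREEDY})\ge\tfrac13\omega(\text{OPT})$. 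I expect claim (i), pinning the conflict factor to exactly three, to be the main point needing care, since this is where the non-overlapping pair structure (two sensors plus one target per set) is used; verifying maximality of greedy for the charging map to be total is the secondary subtlety.

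For the running time, the number of admissible triples is $\binom{N}{2}L=O(N^2L)$, so scanning for the maximum-weight triple in a single round costs $O(N^2L)$. Each round removes one target, so there are at most $L$ rounds, and the total running time is $O(N^2L^2)$.
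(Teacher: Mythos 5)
Your proof is correct and follows essentially the same charging argument as the paper: map each OPT triple to the first GREEDY triple it intersects, note that availability at that round forces $\omega(o)\le\omega(g)$, and bound the charges per GREEDY triple by three since a triple has only three ground elements and OPT's triples are disjoint. Your explicit verification that the charging map is total (via $N\ge 2L$ and greedy maximality) is a detail the paper leaves implicit, but the decomposition, the key inequalities, and the running-time analysis all coincide with the paper's proof.
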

\begin{proof}
\revo{We first give the proof for the approximation ratio of Algorithm~\ref{algorithm:unique_pair_assignment}.} Recall that $\omega(\textrm{GREEDY})$ and $\omega(\textrm{OPT})$ will be the sum of $\omega(\cdot)$ terms of triples consisting of one target and the two assigned sensors. As a shorthand, we will denote $\omega^g(l)$ and $\omega^*(l)$ to be the values of the triple assigned to $t_l$ by GREEDY and OPT, respectively.



We show that there exists a many-to-one mapping $\mathcal{M}: [1,\cdots, L] \to [1,\cdots, L]$ such that: 
\begin{enumerate}
\item $\omega^{*}(k)\leq \omega(\mathcal{M}(k))$; and \item $|\mathcal{M}^{-1}(\revo{y})|\leq 3$ \revo{for all $y\in \mathcal{Y}$ where $\mathcal{Y} \subseteq [1,..., L]$ is the range of $\mathcal{M}$.}
\end{enumerate}
That is, each triple in OPT is mapped to some triple in GREEDY whose value is at least as high and no triple in GREEDY has more than three terms in OPT mapped to it.

We first show that if such a mapping exists, then the main result holds. Then, we prove the existence of such a mapping by constructing a specific $\mathcal{M}$. 

If such a mapping $\mathcal{M}$ exists, we prove $\omega(\textrm{GREEDY}) \geq \frac{1}{3}\omega(\textrm{OPT})$ as below.
\begin{align}
\omega(OPT) &= \sum_{k=1}^{L} \omega^{*}(k)\nonumber\\ 
&\leq \sum_{k=1}^{L} \omega^{g}(\mathcal{M}(k))\nonumber\\
&\revo{= \sum_{y\in \mathcal{Y}} \omega^{g}(y) |\mathcal{M}^{-1}(y)|} \nonumber\\
& \revo{\leq 3 \sum_{y\in \mathcal{Y}} \omega^{g}(y)} \nonumber\\
&\leq 3 \sum_{l=1}^{L} \omega^{g}(l)\nonumber\\
&= 3 \omega(\text{GREEDY}).
\label{eqn:approxiamtion_ratio}
\end{align}
The first inequality is due to $\omega^{*}(k) \leq \omega^{g}(\mathcal{M}(k))$. \revo{The second equality is because $\mathcal{M}$ maps each item in $[1,...,L]$ to the set $\mathcal{Y}$. } The second inequality is due to $|\mathcal{M}^{-1}(\cdot)|\leq 3$. \revo{The third inequality is because $\mathcal{Y}\subseteq [1,...,L]$.}

Next, we show that such a mapping always exists by constructing one. We will define $\mathcal{M}$ in the order in which the triples are picked by GREEDY. Suppose GREEDY picks the triple $(s_i,s_j,t_l)$ in some round. Then $\omega^{g}(l) = \omega(s_i,s_j,t_l)$. We will map at most three triples in OPT to this triple in GREEDY. There are three cases.


\begin{figure}[tbh]
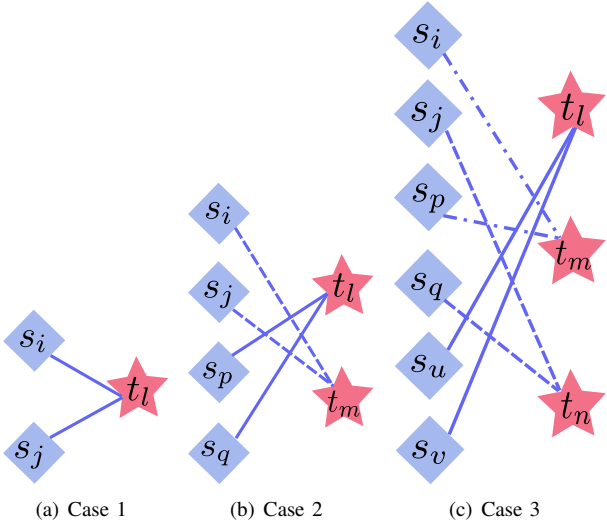

\centering{
\subfigure[Case 1]{
\includegraphics[width=0.25\columnwidth]{figs/opt_case1.eps}}
\subfigure[Case 2]{
\includegraphics[width=0.28\columnwidth]{figs/opt_case2.eps}}
\subfigure[Case 3]{
\includegraphics[width=0.32\columnwidth]{figs/opt_case3.eps}}
}
\rev{\caption{The optimal solution in three cases. \revo{In all cases, the GREEDY chooses the triple $(s_i,s_j,t_l)$. } Case 1: The OPT charges $\omega(s_i,s_j,t_l)$ to the same triple $(s_i,s_j,t_l)$ chosen by the GREEDY.  Case 2: The OPT charges $\omega(s_i,s_j,t_l)$ to at most two triples --- $(s_i,s_j,t_m)$ and  $(s_p,s_q,t_l)$. Case 3: The OPT charges  $\omega(s_i,s_j,t_l)$ to at most three triples --- $(s_i,s_p,t_m)$, $(s_j,s_q,t_n)$ and $(s_u,s_v,t_l)$. Here, $i\neq j \neq p \neq q \neq u \neq v$ and $l \neq m \neq n$. \label{fig:opt_choose}} }
\end{figure}
\begin{enumerate}
\item $(s_i,s_j,t_l)$ is also chosen by OPT (Figure~\ref{fig:opt_choose}-(a)). If $\mathcal{M}(l)$ has not been defined in previous rounds, we define $\mathcal{M}(l) = l$. Note that, here $\omega^g(l) = \omega^*(l)$ and $|\mathcal{M}^{-1}(l)| = 1$. Therefore, the two conditions for a valid mapping are satisfied.

\item  Exactly two of $(s_i,s_j,t_l)$ appear in a triple chosen by OPT (Figure~\ref{fig:opt_choose}-(b)). Consider the case where OPT chooses $(s_i,s_j,t_m)$ and $(s_p,s_q,t_l)$ where $m \neq l$. All other cases are symmetric. If $\mathcal{M}(m)$ has not been defined in a previous round, we define $\mathcal{M}(m) = l$. Note that, if $\mathcal{M}(m)$ was not defined in a previous round, then $\omega^*(m) \leq \omega^g(l)$. Otherwise, GREEDY would pick the triple $(s_i,s_j,s_m)$ in this round. Similarly, if $\mathcal{M}(l)$ is not defined in a previous round, we define $\mathcal{M}(l) = l$. By a similar argument, if $\mathcal{M}(l)$ was not defined in a previous round, $\omega^*(l) \leq \omega^g(l)$. Furthermore, $|\mathcal{M}^{-1}(l)|=2$. Therefore, the two conditions for a valid mapping are satisfied.

\item No two of $(s_i,s_j,t_l)$ appear in the same triple chosen by OPT. Suppose instead they appear in three distinct triples, $(s_u,s_v,t_l)$, $(s_i,s_p,t_m)$, and $(s_j,s_q,t_n)$ as shown in Figure~\ref{fig:opt_choose}-(c). We can use a similar argument as in the previous case. If any of $l,m$ and $n$ were not mapped in a previous round, then we will map them to $l$. Furthermore, since they were not mapped in a previous round, their value in OPT cannot be greater than $\omega^g(l)$. Finally, $|\mathcal{M}^{-1}(l)|\leq 3$. Therefore, the two conditions for a valid mapping are satisfied.
\end{enumerate}
Therefore, given such a mapping $\mathcal{M}$, it follows that $\omega(\textrm{GREEDY}) \geq \frac{1}{3}\omega(\textrm{OPT})$. 

\revo{We next prove the running time for Algorithm~\ref{algorithm:unique_pair_assignment}. The ``while" loop runs for $L$ rounds since all the targets must be tracked eventually. Inside the ``while" loop, we compute all possible triples and find the best one. This requires $O(N^2L)$ time. Overall, Algorithm~\ref{algorithm:unique_pair_assignment} runs in $O(N^2L^2)$ time.}
\end{proof}
\revo{

It is possible to generalize this result to the case where exactly $n$ sensors are to be assigned to a target with $n\geq 2$. We can obtain a generalized bound, $\omega(\text{GREEDY}) \geq \frac{1}{n+1}\omega(\text{OPT})$ by using a proof similar to that of Theorem~\ref{thm:lem_optimal_fin}. 
}

\section{Simulations}\label{sec:simulation}
We illustrate the performance of the assignment strategies for sensor selection using observability measure as the performance criterion. 
A video showing the algorithm in action is submitted as supplementary material.

\subsection{Greedy General Assignment}
\begin{figure*}[tbh]
\centering{
\subfigure[$k=1$ (initial time)]{
\includegraphics[width=0.65\columnwidth]{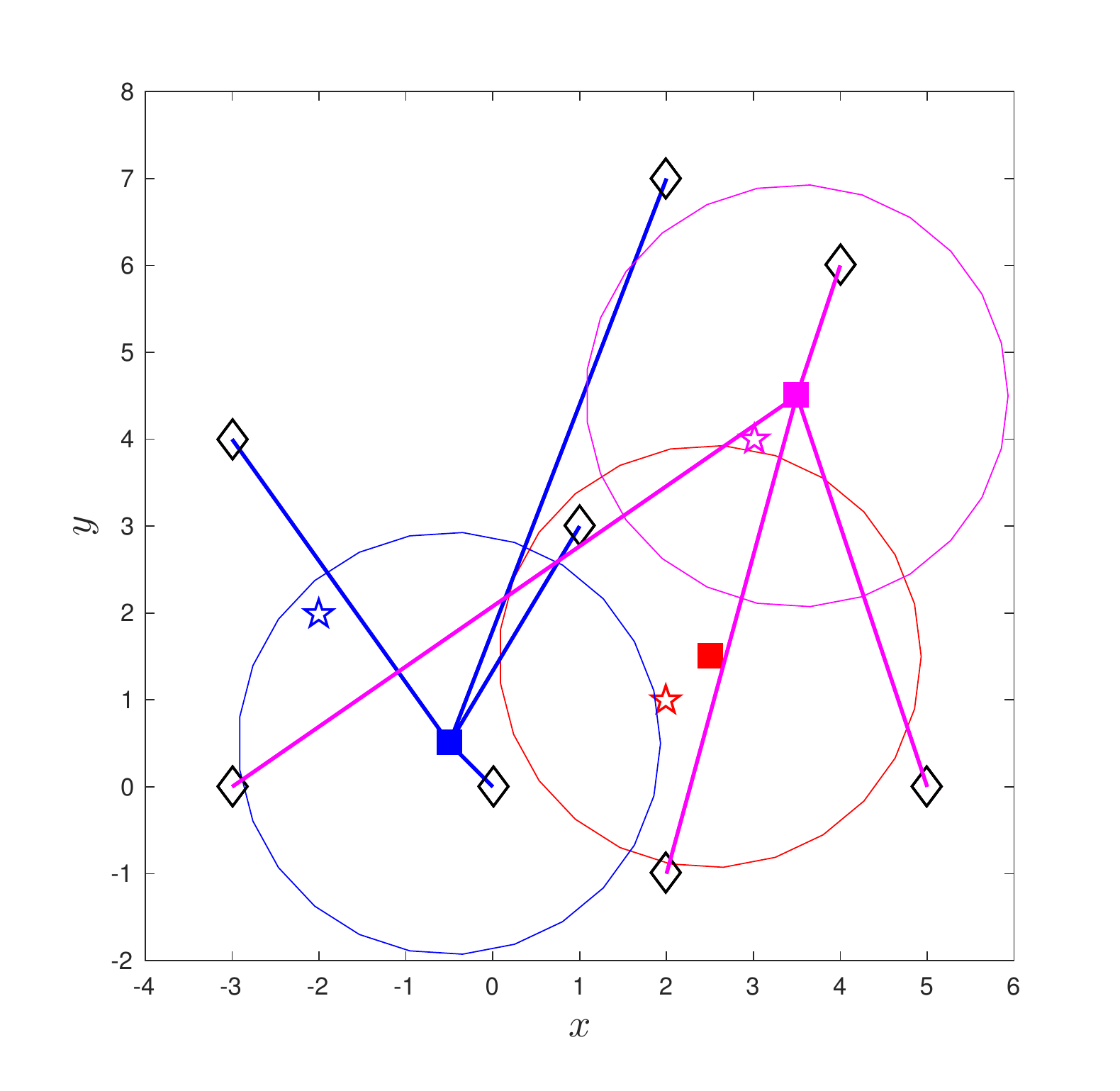}}
\subfigure[$k=60$]{
\includegraphics[width=0.65\columnwidth]{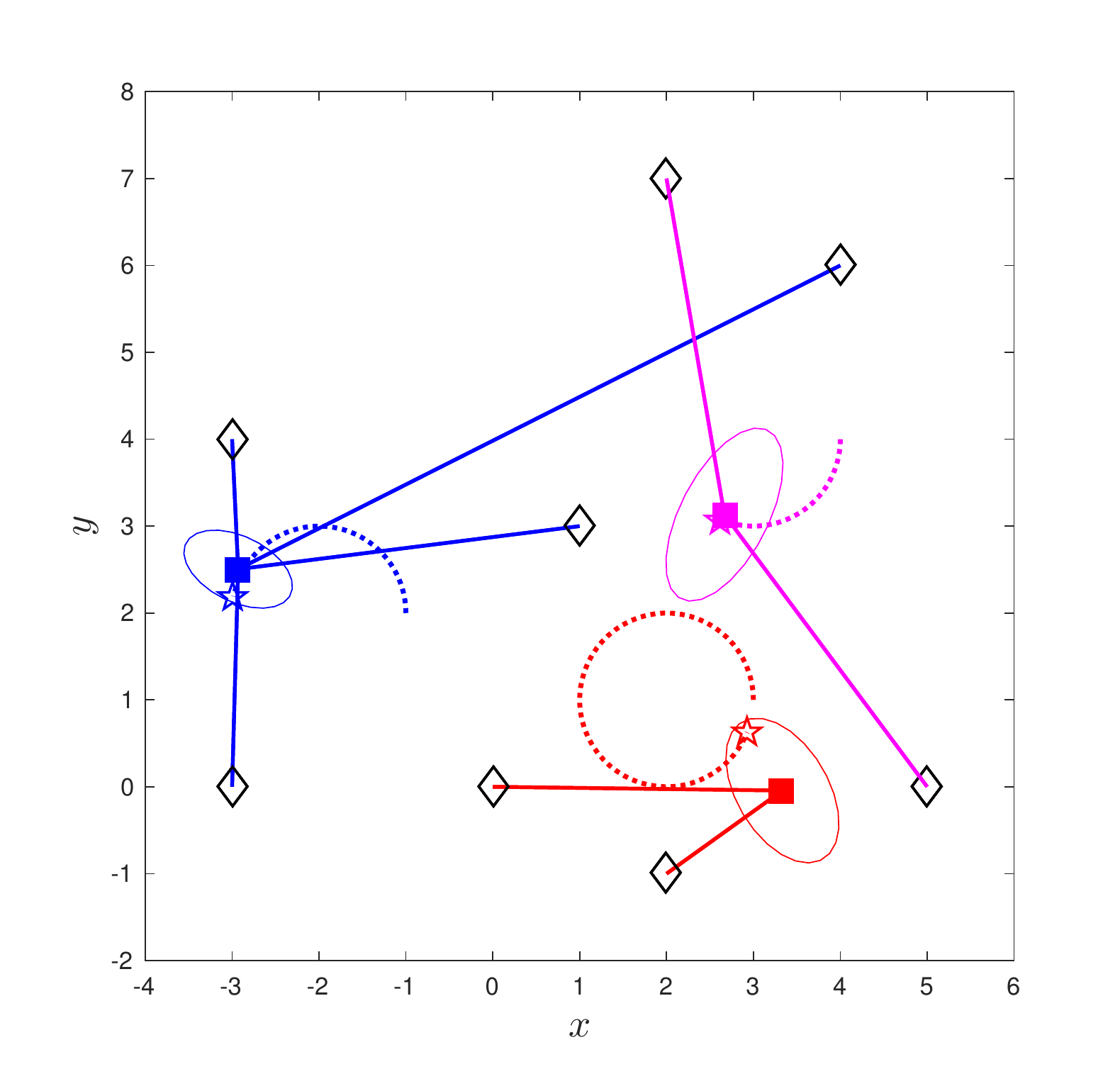}}
\subfigure[$k=100$]{
\includegraphics[width=0.65\columnwidth]{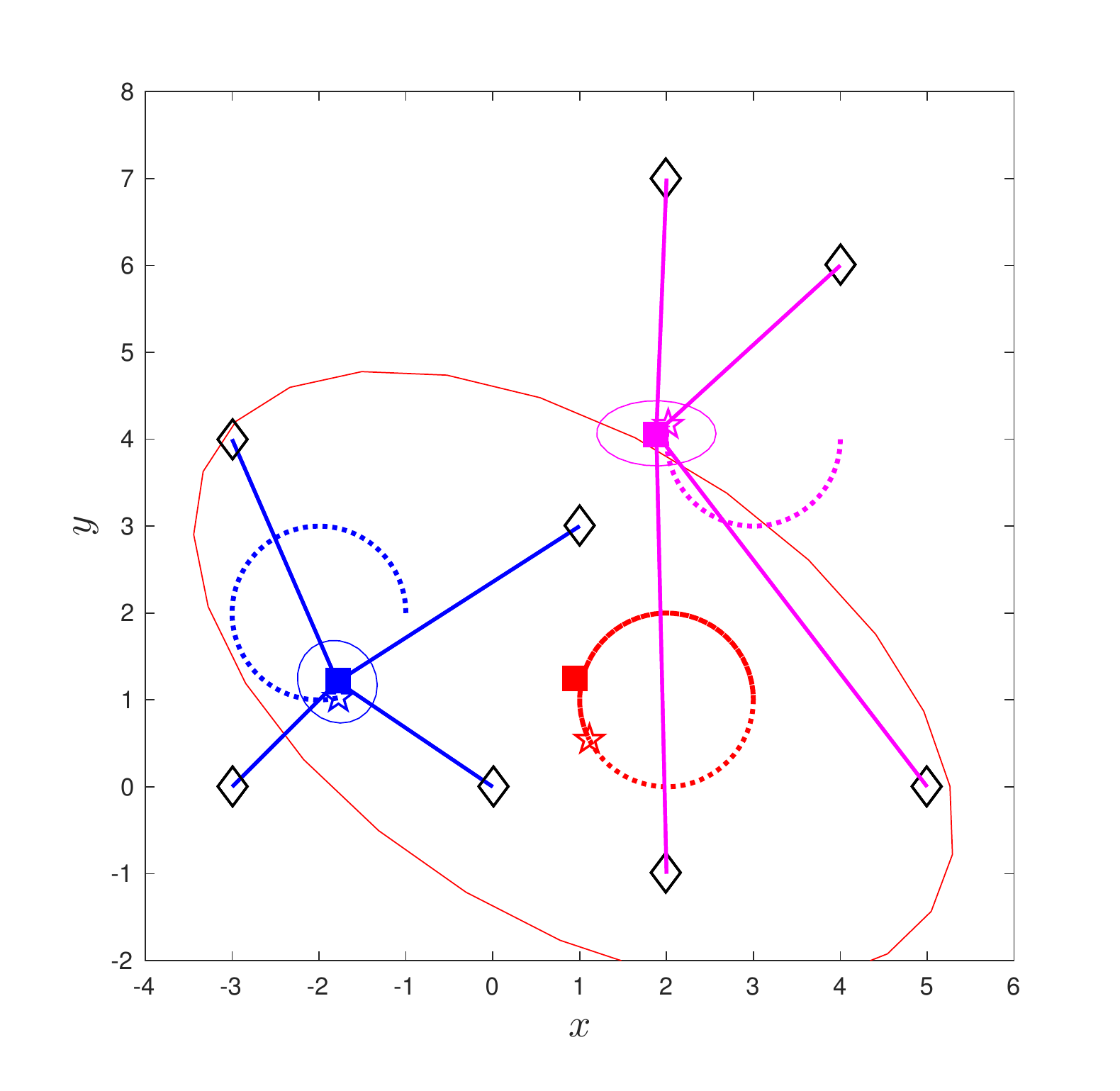}}
}
\caption{Greedy General Assignment~\cite{fisher1978analysis} in action for tracking three targets with circular motion by using $\text{trace}(\mathbb{O}(p_{t_l}))$. The three colors red, blue and magenta specify three targets, respectively. The pentagram, filled square, solid ellipse (sometimes, it looks like \revo{a} solid circle) and dotted circle indicates the true position, estimate mean position, variance and trajectory for the target, respectively. The black diamond indicates the sensor. The solid line joining the target and sensor indicates that the sensor is assigned to the target.\label{fig:gen_assign_tracking}}
\end{figure*}


We first simulate the greedy assignment~\cite{fisher1978analysis} for Problem~\ref{prob:general} by using \rev{the trace of the \emph{symmetric observability matrix by sensor-target relative state contribution}, $\text{trace}(\mathbb{O}(p_{t_l}))$}
as observability measure which is monotone and submodular \revo{(modular) as shown in Theorem~\ref{thm:submodular_observability_measure}. Notably, for maximizing a modular function, the greedy algorithm yields an optimal solution.} We start with the greedy assignment with 8 stationary sensors and 3 targets moving in a circle and $u_{t_l,\max}=1,~ l\in\{1,2,3\}$ within 100 time steps in a $10\times 10$ environment. At each timestep, we use the greedy approach~\cite{fisher1978analysis} to assign a set of sensors to each target, and use the measurements of the set of sensors to update the estimate of the target by EKF as shown in Figure~\ref{fig:gen_assign_tracking}.

\rev{}
In order to further evaluate the greedy approach for general assignment, we set the number of targets as $L=5$ and number of sensors, $N$, from 20 to 50. For each $N\in\{20,21,...,50\}$, the positions of sensors and targets are randomly generated within $[0,100]\times [0,100]\in\mathbb{R}^{2}$ for 30 trials. 
We compare the number of sensors assigned to one specific target, i.e., $t_l, l\in\{1,2,...L\}$ with the $N/L$ as shown in Figure~\ref{fig:trial_submodular_obs.eps}. It shows that the sensors are assigned to each target almost evenly.  

\begin{figure}[htb]
\centering{
\includegraphics[width=0.9\columnwidth]{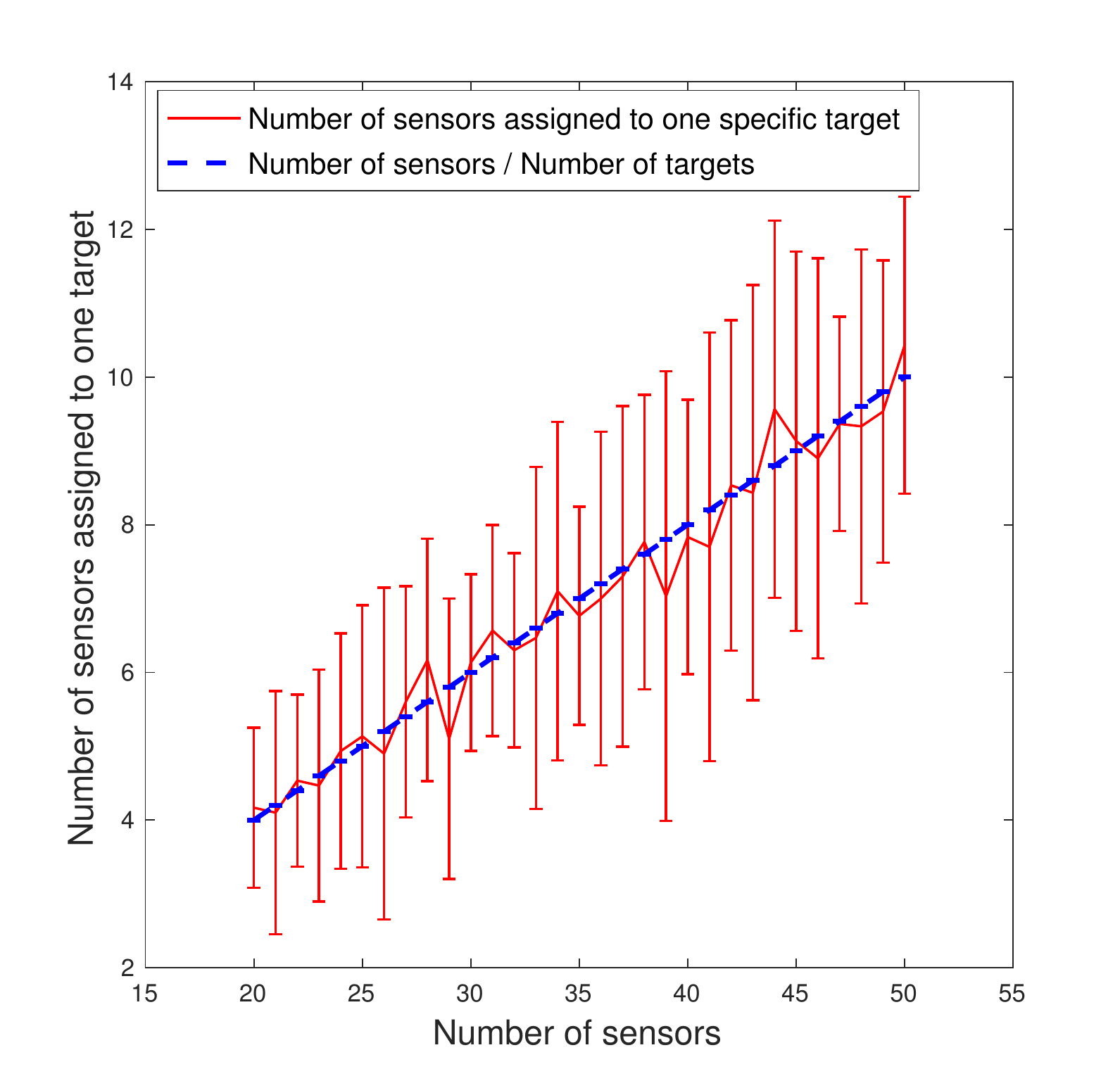}
}
\caption{Number of sensors assigned to each target.\label{fig:trial_submodular_obs.eps}} 
\end{figure}

\subsection{Greedy \revo{Non-overlapping} Pair Assignment}
We then simulate the greedy \revo{non-overlapping} pair assignment (Algorithm~\ref{algorithm:unique_pair_assignment}) for solving Problem~\ref{prob:unique}. With the same environment setting, we start with simulating Algorithm~\ref{algorithm:unique_pair_assignment} with $\text{trace}(\mathbb{O}(p_{t_l}))$  as the observability measure (Figure~\ref{fig:uni_assign_tracking_trace}). \rev{We know that the observability measure for greedy \revo{non-overlapping} pair assignment does not have to be monotone submodular. Therefore, we use the log determinant of the \emph{symmetric observability matrix by sensor-target relative state contribution},  $\log\det(\mathbb{O}(p_{t_l}))$,  and the lower bound on the inverse of the condition number of the observability matrix, $\underline{C}^{-1}(O(p_{t_l},u_{t_l}))$, for the assignment. The assignments for a specific scenario are shown in Figures~\ref{fig:uni_assign_tracking_logdet} and \ref{fig:uni_assign_tracking_invcond}. Note that, even though a pair of sensors is assigned to target $t_l$, $\mathbb{O}(p_{t_l})$ is not always non-singular.}

\begin{figure*}[thb]
\centering{
\subfigure[$k=1$ (initial time)]{
\includegraphics[width=0.65\columnwidth]{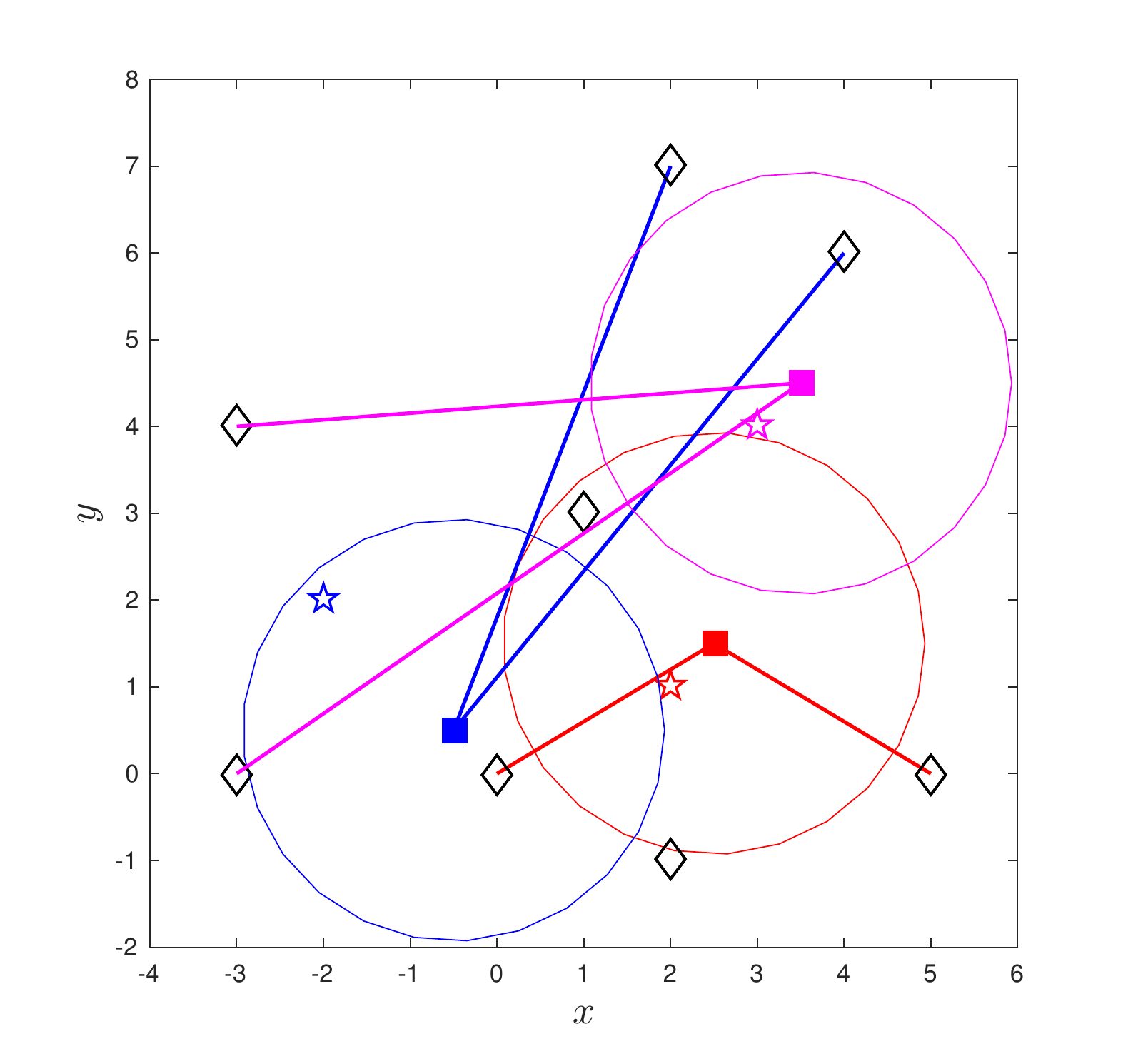}}
\subfigure[$k=60$]{
\includegraphics[width=0.65\columnwidth]{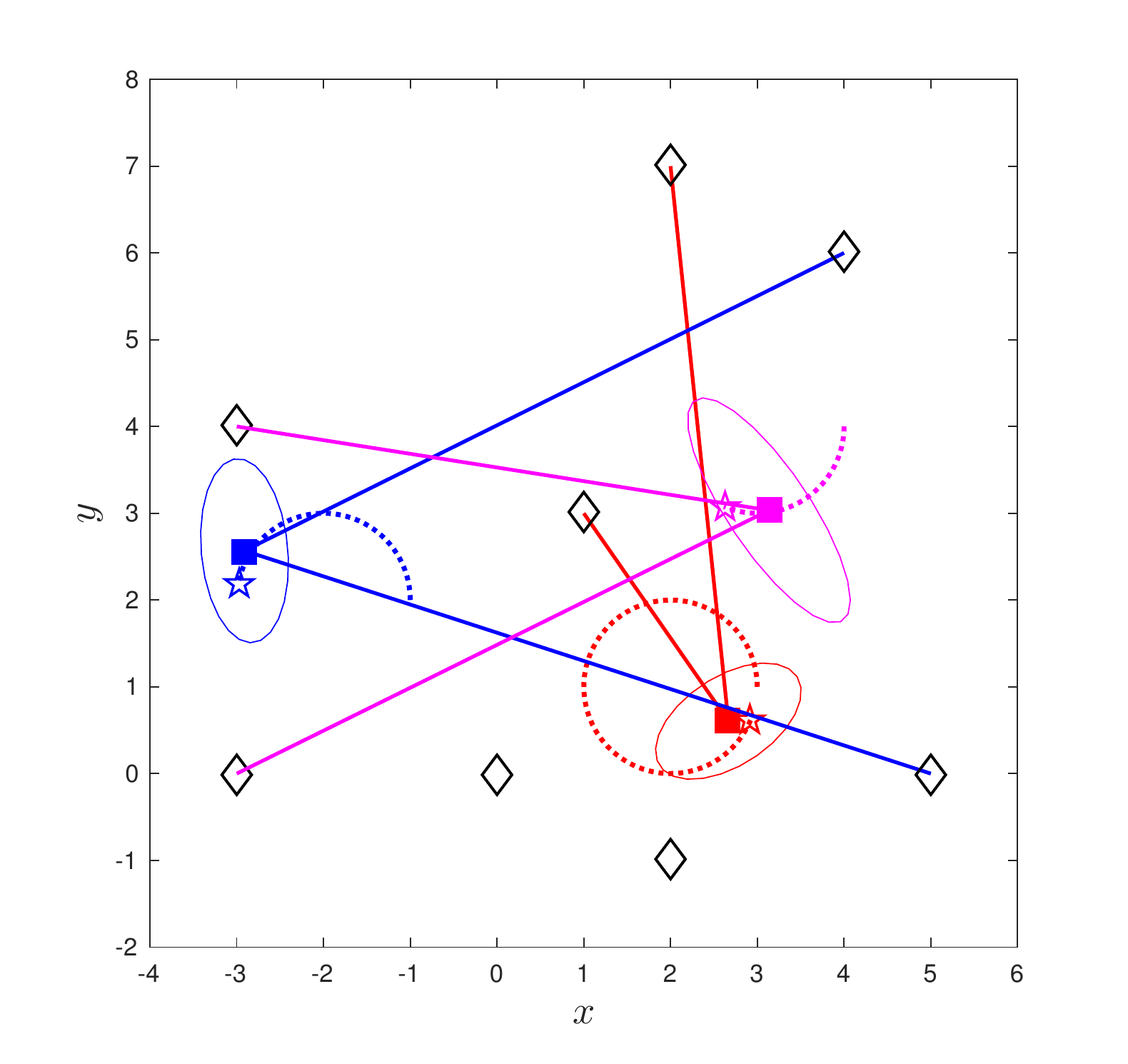}}
\subfigure[$k=100$]{
\includegraphics[width=0.65\columnwidth]{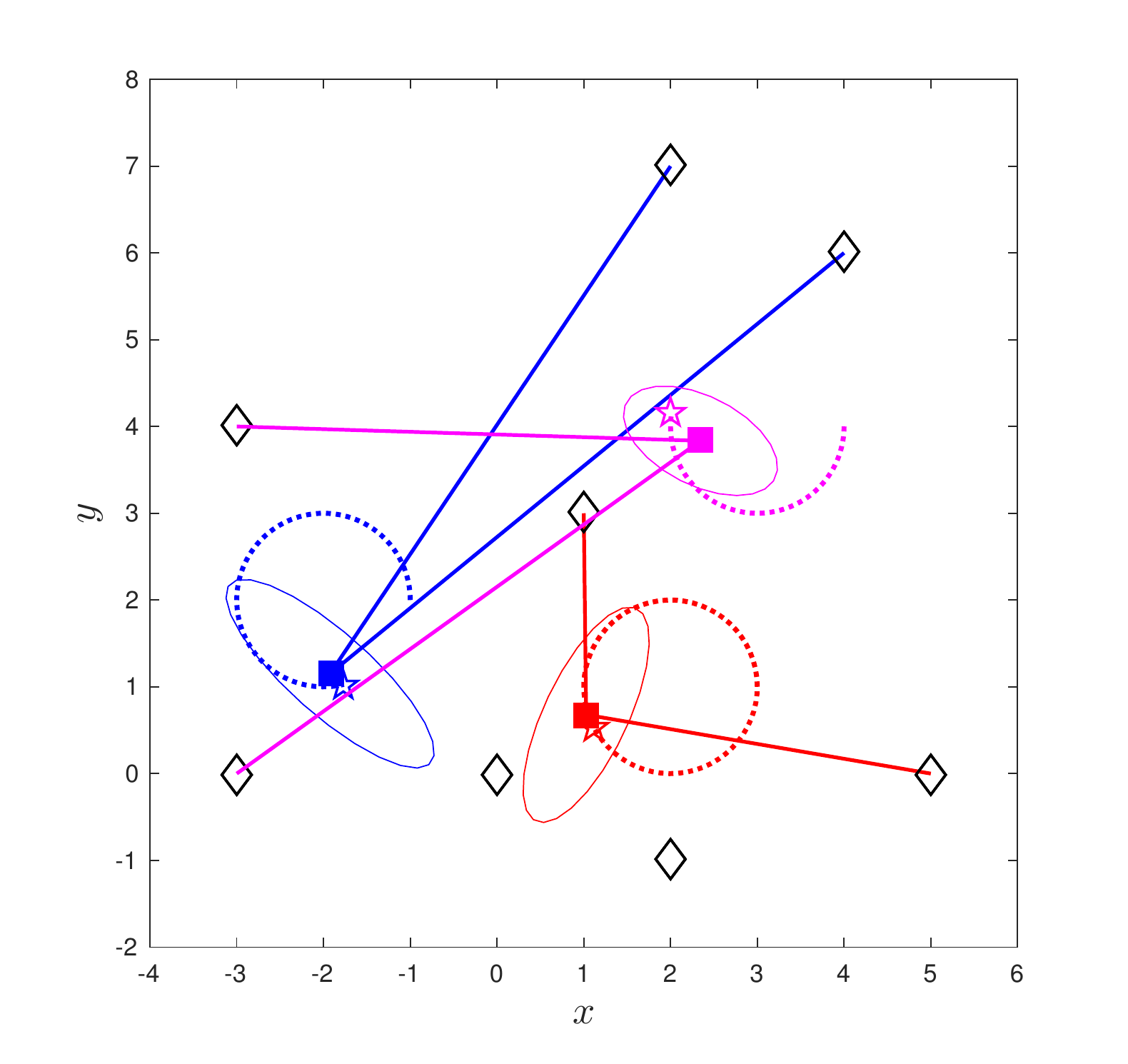}}
}
\caption{Greedy \revo{Non-overlapping} Pair Assignment (Algorithm~\ref{algorithm:unique_pair_assignment}) in action for tracking three targets with circular motion by using $\text{trace}(\mathbb{O}(p_{t_l}))$. The three colors red, blue and magenta specify three targets, respectively. The pentagram, filled square, solid ellipse (sometimes, it looks like \revo{a} solid circle) and dotted circle indicates the true position, estimate mean position, variance and trajectory for the target, respectively. The black diamond indicates the sensor. The solid line joining the target and sensor indicates that the sensor is assigned to the target.\label{fig:uni_assign_tracking_trace}}
\end{figure*}

\begin{figure*}[thb]
\centering{
\subfigure[$k=1$ (initial time)]{
\includegraphics[width=0.65\columnwidth]{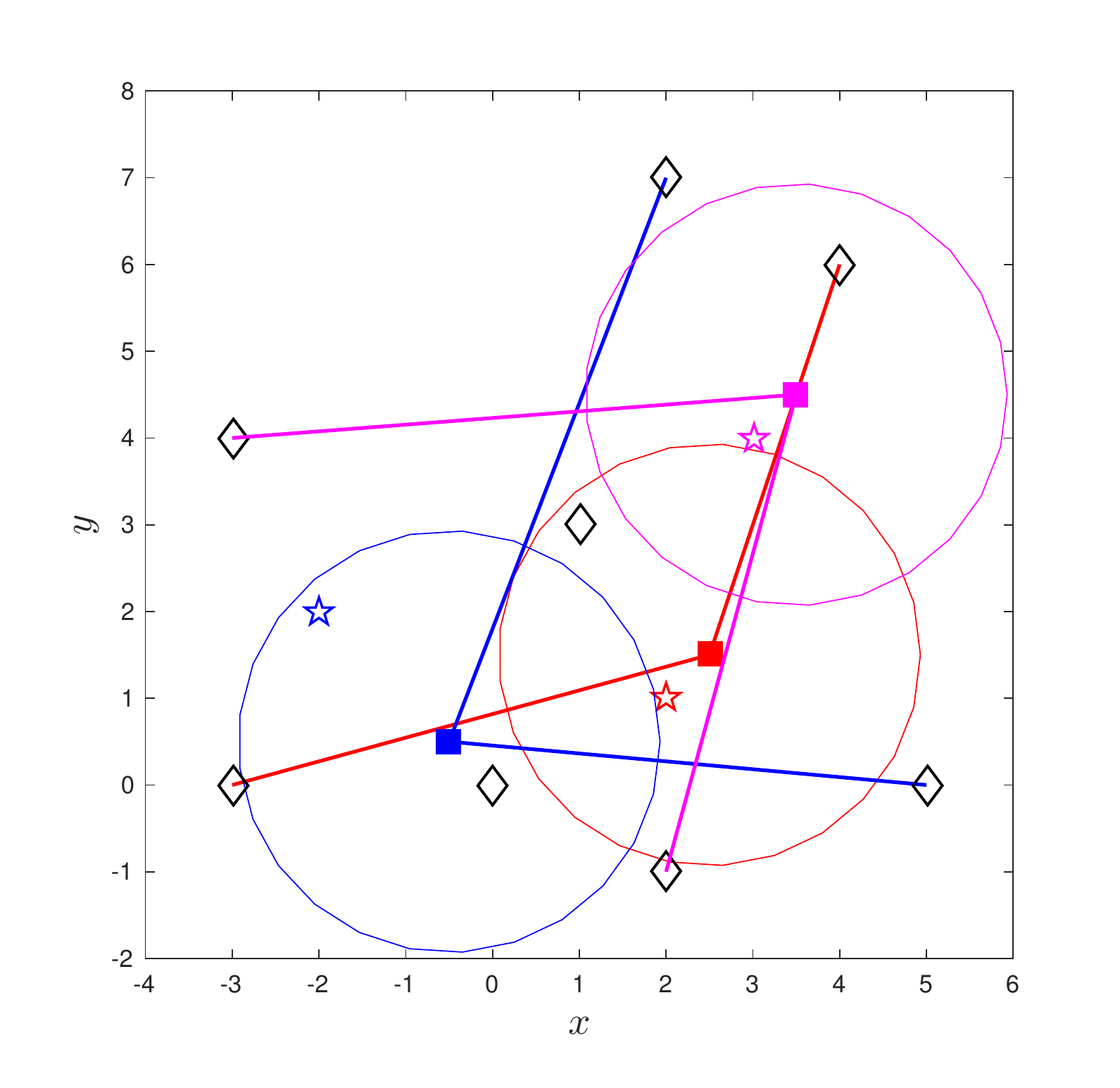}}
\subfigure[$k=60$]{
\includegraphics[width=0.65\columnwidth]{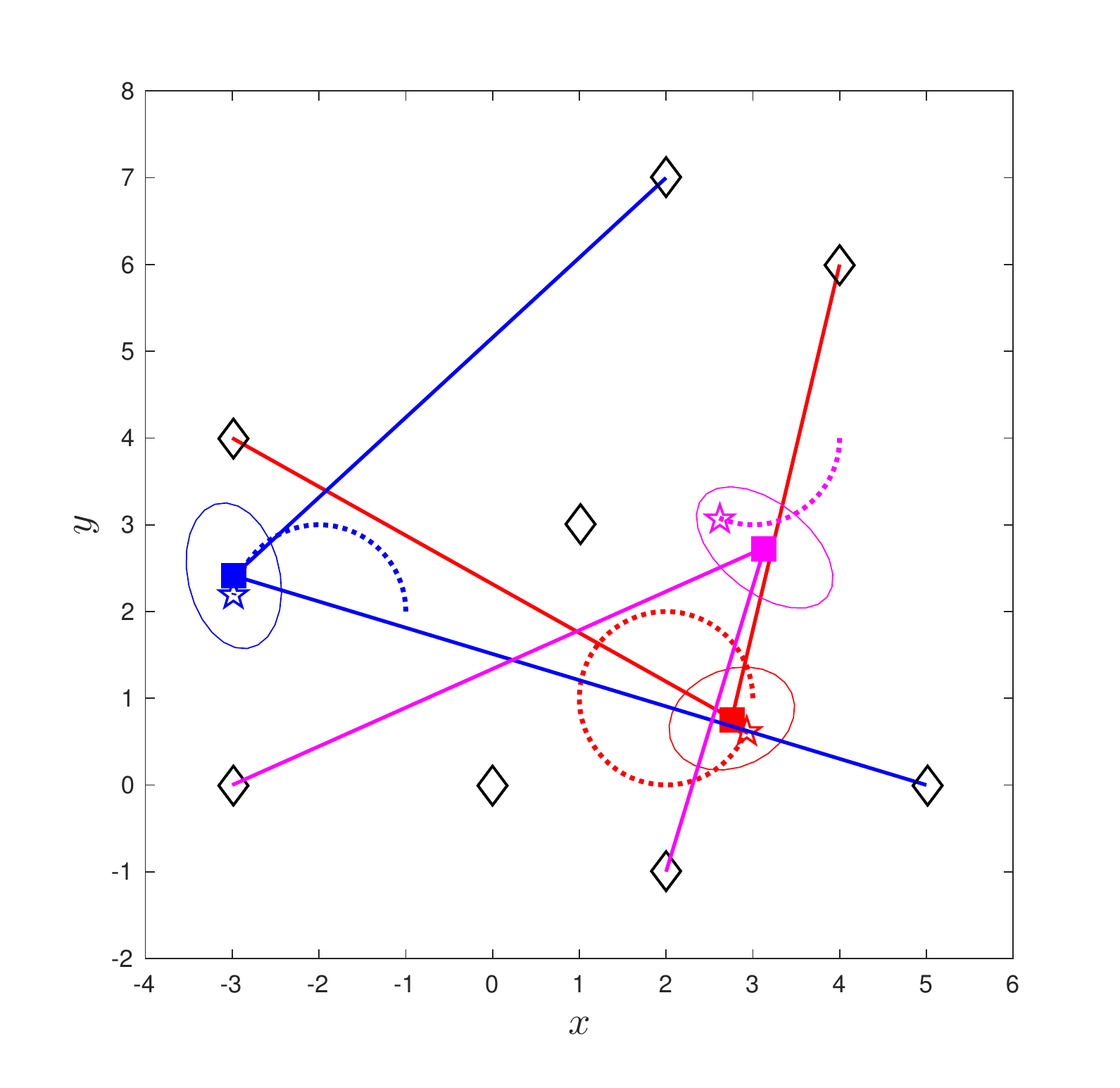}}
\subfigure[$k=100$]{
\includegraphics[width=0.65\columnwidth]{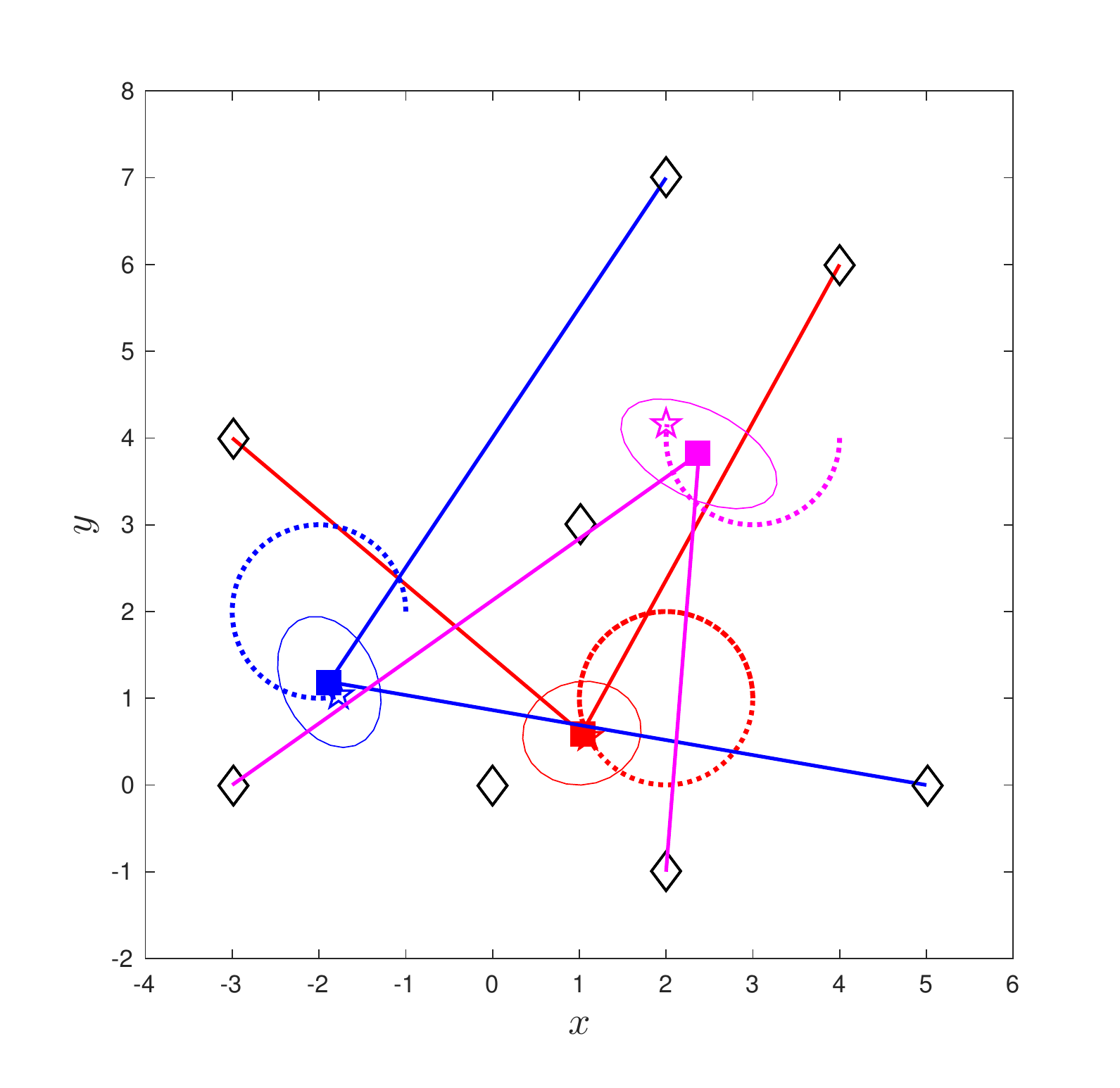}}
}
\caption{Greedy \revo{Non-overlapping} Pair Assignment (Algorithm~\ref{algorithm:unique_pair_assignment}) in action for tracking three targets with circular motion by using \rev{$\log\det(\mathbb{O}(p_{t_l}))$}. The three colors red, blue and magenta specify three targets, respectively. The pentagram, filled square, solid ellipse (sometimes, it looks like \revo{a} solid circle) and dotted circle indicates the true position, estimate mean position, variance and trajectory for the target, respectively. The black diamond indicates the sensor. The solid line joining the target and sensor indicates that the sensor is assigned to the target.\label{fig:uni_assign_tracking_logdet}}
\end{figure*}

For each target $t_l$, the estimate generated by EKF includes its mean position $\hat{p}_{t_l}$ and variance $\Sigma_{t_l}$, $l\in\{1,2,3\}$. To evaluate the tracking performance, denote the mean error as 
\begin{equation}
    \mathrm{err}_{t_l} = \|\hat{p}_{t_l}-p_{t_l}\|_2.
    \label{eqn:mean_error}
\end{equation}
 and the trace of covariance as $\mathrm{tr}(\Sigma_{t_l})$. Figure~\ref{fig:comparsion_gen_uni_log_mean_trace} shows $\mathrm{err}_{t_l}$ and $\mathrm{tr}(\Sigma_{t_l})$ for each target $t_l$ with the greedy algorithms to the two assignment problem and with $\text{trace}(\mathbb{O}(p_{t_l})\revo{)}$ as the measure. \rev{We can observe that the greedy general assignment performs better for some target (e.g., $t_2$), but performs worse for target $t_1$ as compared to the \revo{non-overlapping} pair assignment (Figure~\ref{fig:comparsion_gen_uni_log_mean_trace}-(b))}. \rev{Both algorithms maximize the sum of the observability measures for the three targets. In the \revo{non-overlapping} pair assignment, a pair of sensors is always assigned to each target whereas in the general assignment, there are situations where no sensors are assigned to a target. This can be due to the fact that it is profitable to assign more than two sensors to some target in order to maximize the sum. We can observe the happening in Figure~\ref{fig:gen_assign_tracking} where there exist some time steps (e.g., $k=100$) when less than two sensors are assigned to a target (red), which leads to a bad tracking performance for individual targets in the general assignment case (\revo{Corollary~\ref{cor:sensor_n_1}}). However, in all cases, the greedy general assignment~\cite{fisher1978analysis} ensures that the sum of individual measures is within a factor of 2 of the optimal sum.
 
Figure~\ref{fig:comparsion_uni_logcondtrace_mean_trace} shows the tracking performance with the inverse condition number and log det is comparable. The performance with trace is worse as compared to the other two.  However, we can only use the trace for the greedy general assignment, since it is submodular and monotone. This exactly reflects the importance of our greedy \revo{non-overlapping} pair assignment that better observability measures, i.e., log det and inverse condition number, which are not necessarily submodular and monotone are also allowed.}

\begin{figure*}[htb]
\centering{
\subfigure[$k=1$ (initial time)]{
\includegraphics[width=0.65\columnwidth]{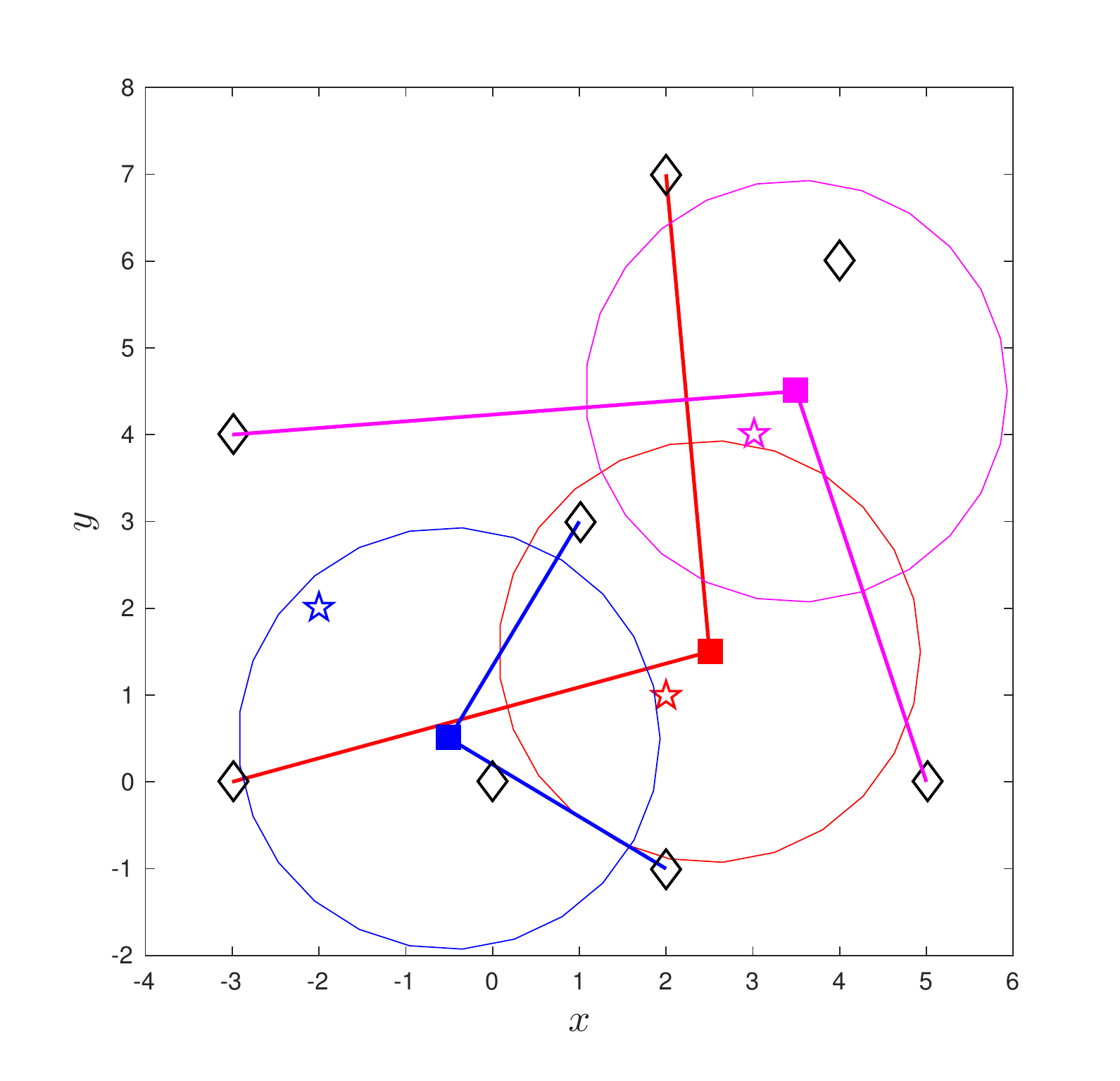}}
\subfigure[$k=60$]{
\includegraphics[width=0.65\columnwidth]{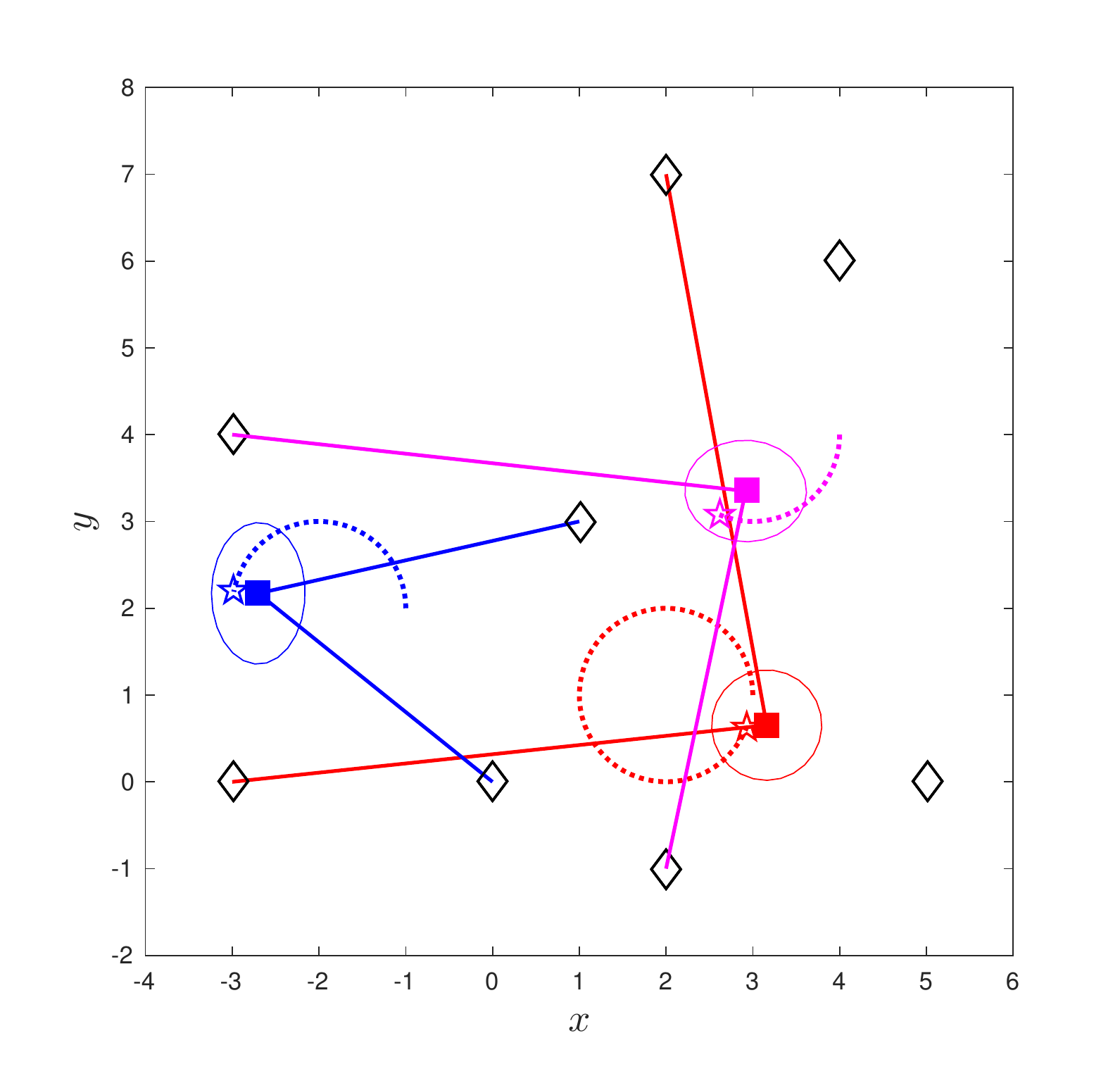}}
\subfigure[$k=100$]{
\includegraphics[width=0.65\columnwidth]{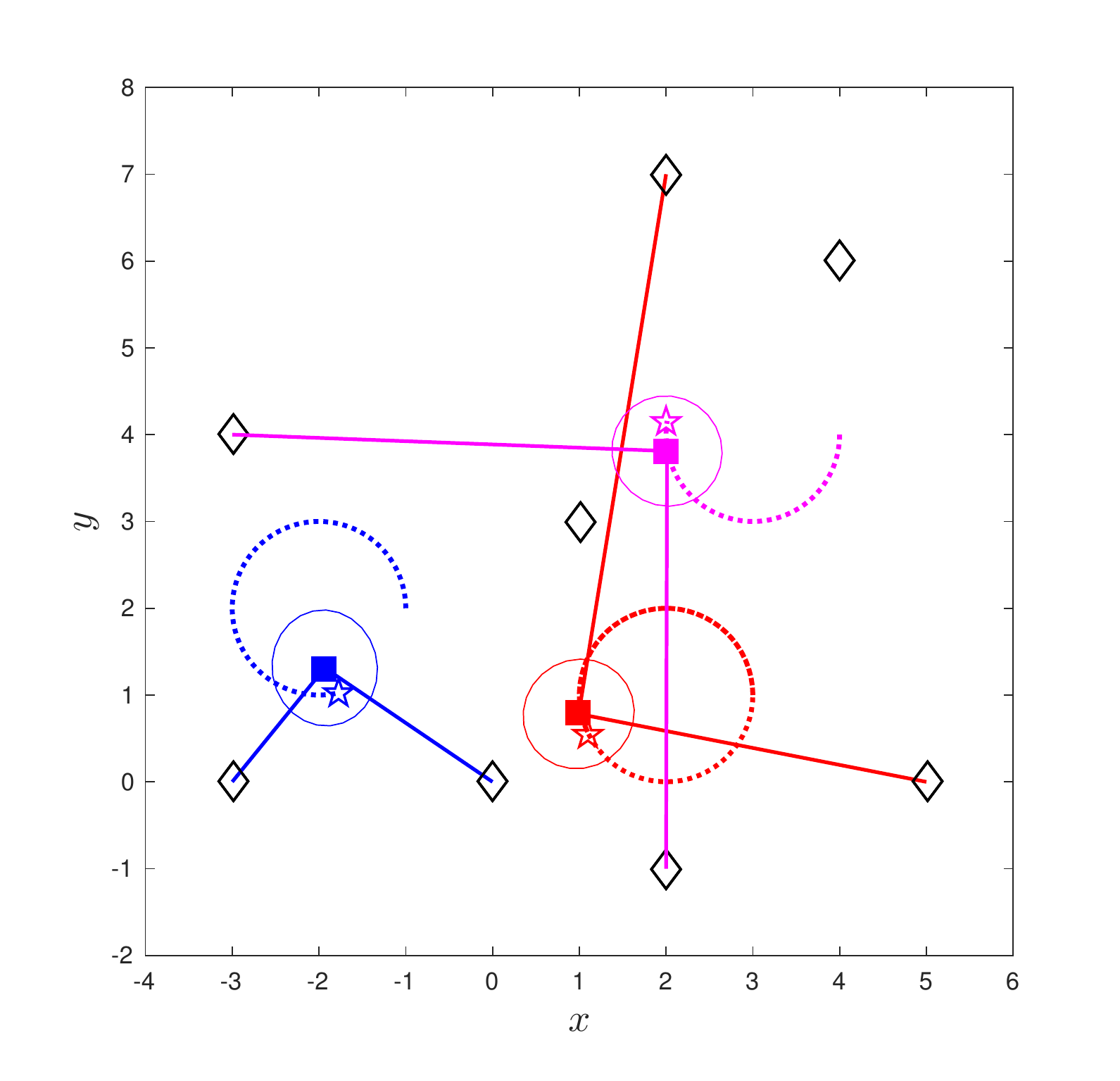}}
}
\caption{Greedy \revo{Non-overlapping} Pair Assignment (Algorithm~\ref{algorithm:unique_pair_assignment}) in action for tracking three targets with circular motion by using $\underline{C}^{-1}(O(p_{t_l},u_{t_l}))$. The three colors red, blue and magenta specify three targets, respectively. The pentagram, filled square, solid ellipse (sometimes, it looks like \revo{a} solid circle) and dotted circle indicates the true position, estimate mean position, variance and trajectory for the target, respectively. The black diamond indicates the sensor. The solid line joining the target and sensor indicates that the sensor is assigned to the target.\label{fig:uni_assign_tracking_invcond}}
\end{figure*}

\begin{figure}[htb]
\centering{
\subfigure[]{\includegraphics[width=0.9\columnwidth]{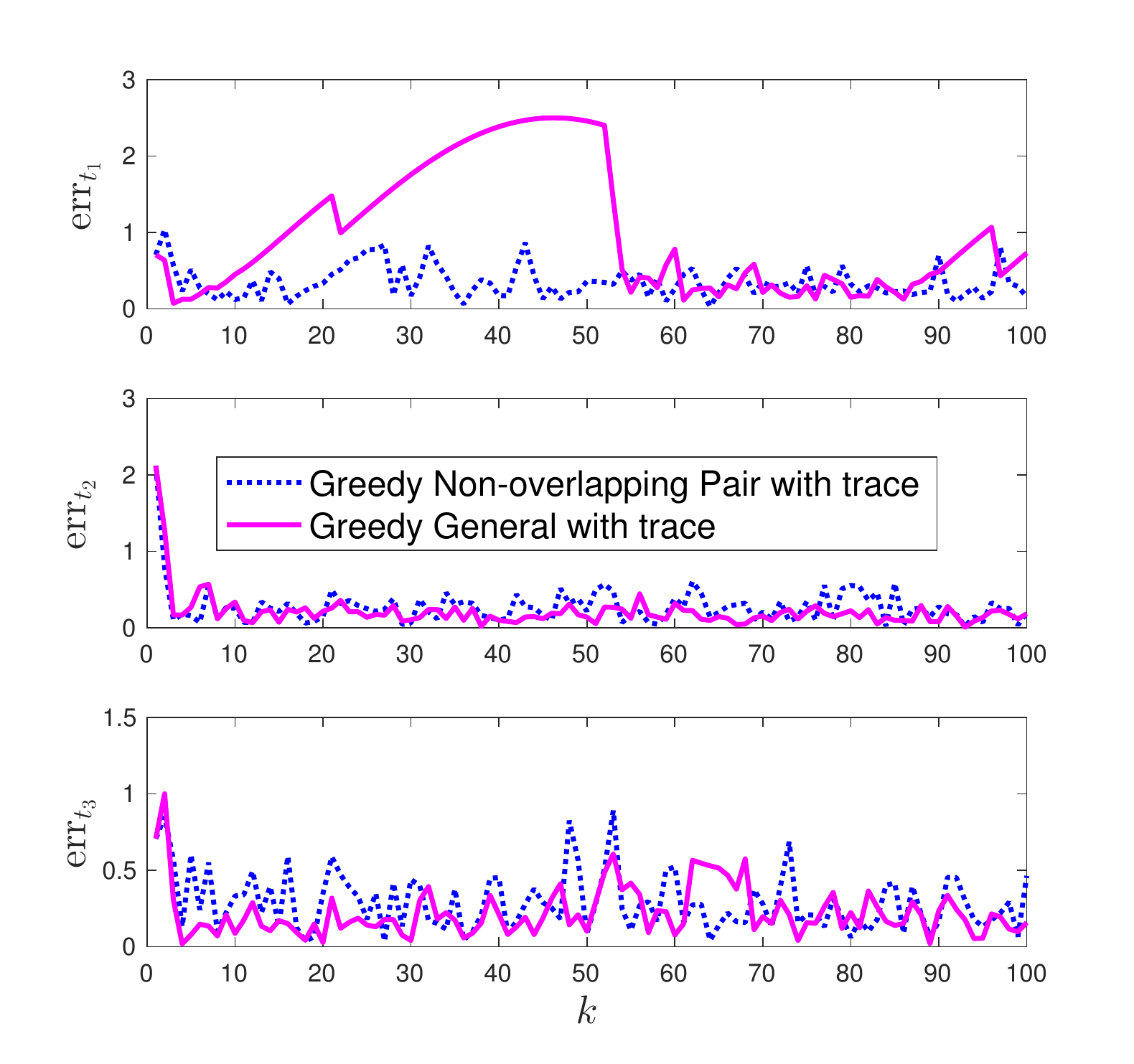}}
\subfigure[]{\includegraphics[width=0.9\columnwidth]{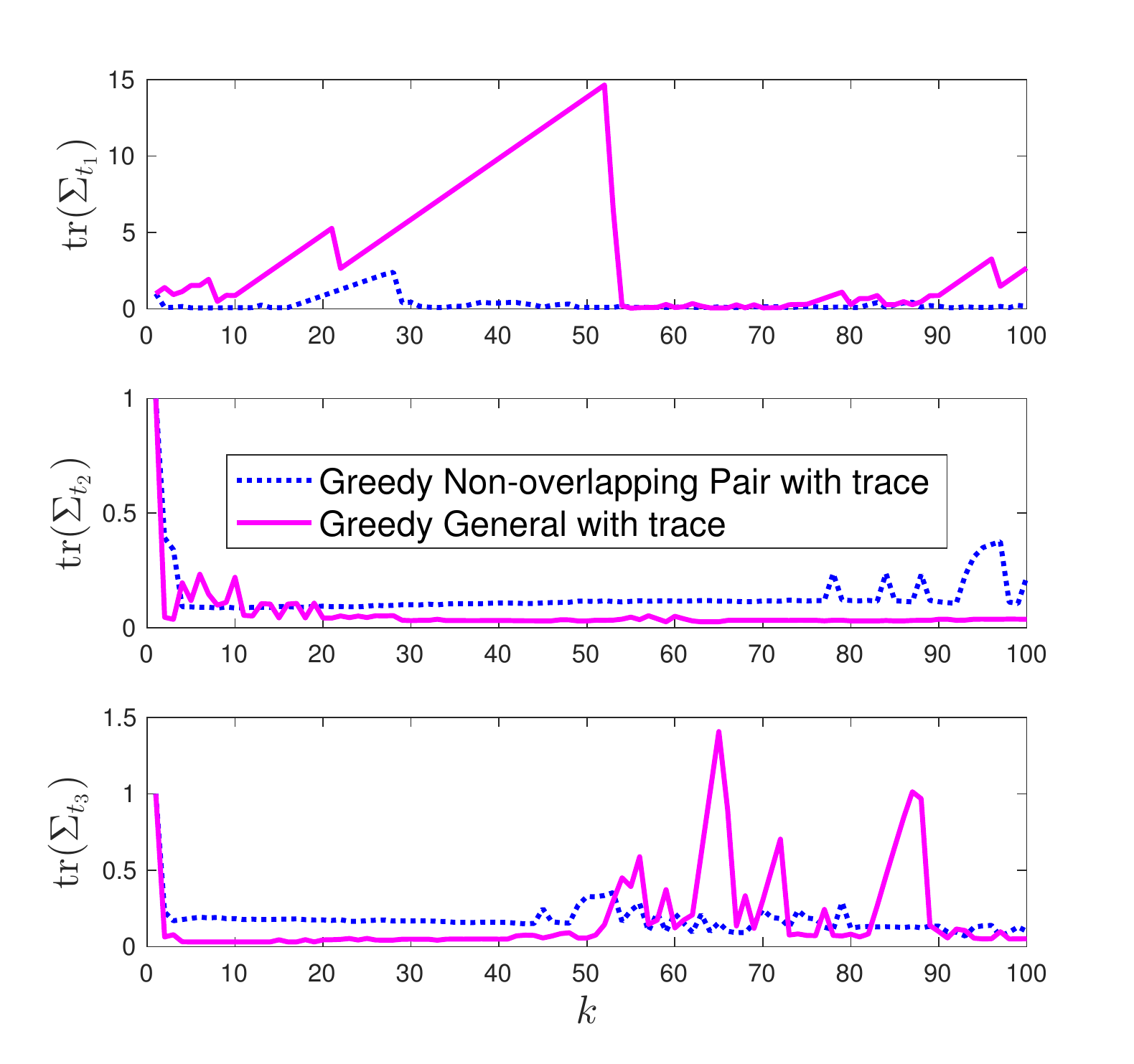}}
}
\caption{Comparison of mean error (a) and trace of covariance (b) for three-target tracking within 100 time steps by Greedy General Assignment~\cite{fisher1978analysis} with $\text{trace}(\mathbb{O}(p_{t_l}))$ and Greedy \revo{Non-overlapping} Pair Assignment (Algorithm~\ref{algorithm:unique_pair_assignment})  with $\text{trace}(\mathbb{O}(p_{t_l}))$. \label{fig:comparsion_gen_uni_log_mean_trace}} 
\end{figure}


\begin{figure}[htb]
\centering{
\subfigure[]{\includegraphics[width=0.9\columnwidth]{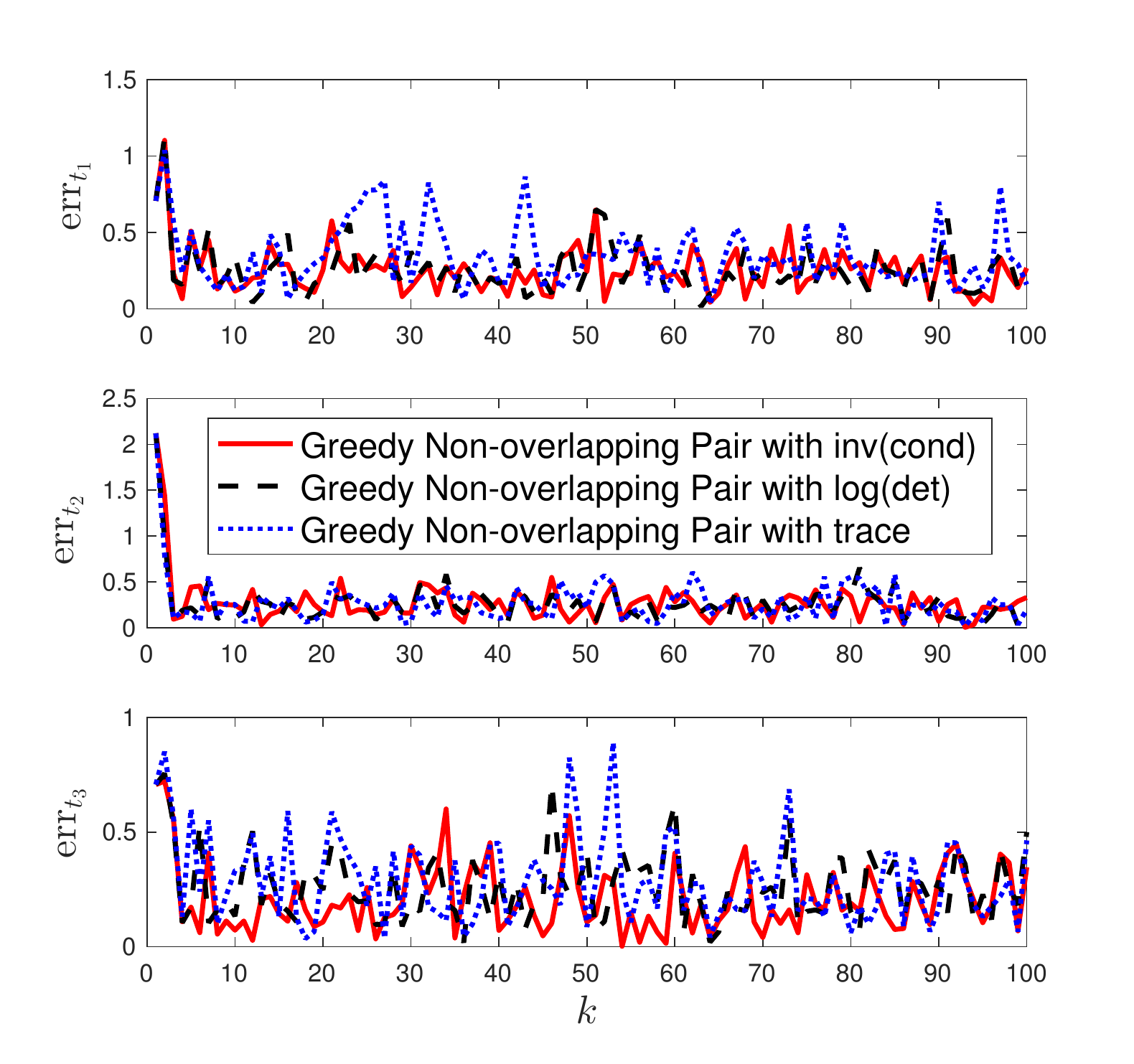}}
\subfigure[]{\includegraphics[width=0.9\columnwidth]{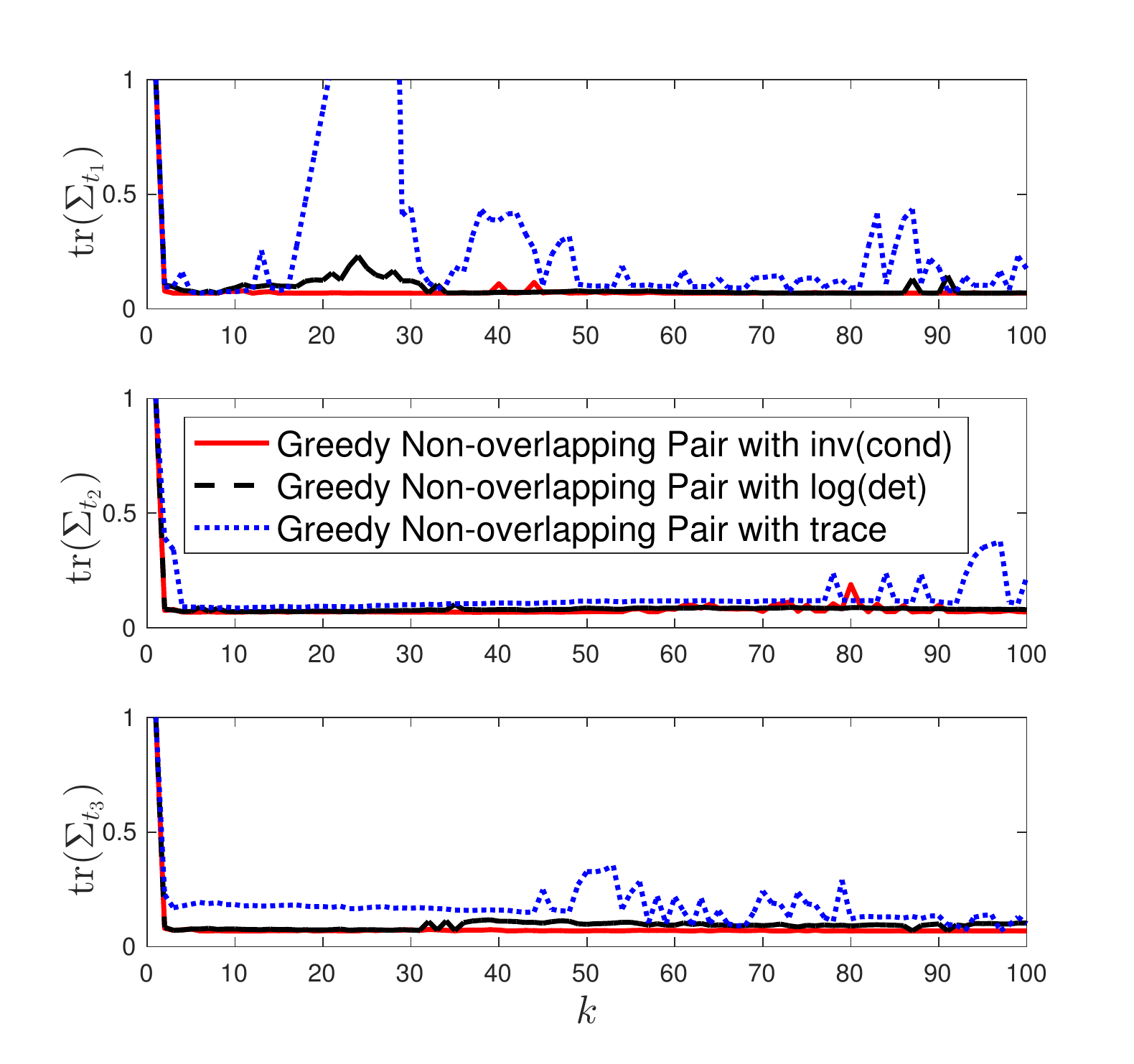}}
}
\caption{Comparison of mean error (a) and trace of covariance (b) for three-target tracking within 100 time steps by Greedy \revo{Non-overlapping} Pair Assignment (Algorithm~\ref{algorithm:unique_pair_assignment})  with $\underline{C}^{-1}(O(p_{t_l}))$, $\log\det(\mathbb{O}(p_{t_l}))$ and $\text{trace}(\mathbb{O}(p_{t_l}))$. \label{fig:comparsion_uni_logcondtrace_mean_trace}} 
\end{figure}


\revo{\subsection{Baseline Comparisons}}
The \revo{Non-overlapping} Pair Assignment (Problem~\ref{prob:unique}) is NP-Complete. Therefore, finding $\omega(\textrm{OPT})$ is infeasible in polynomial time. In order to empirically evaluate the Greedy \revo{Non-overlapping} Pair Assignment (Algorithm~\ref{algorithm:unique_pair_assignment}), \revo{we use two baselines. When the number of sensors and targets is small, we compute the optimal solution using brute-force. When the number of sensors and targets is large, we compute an upper bound on the optimal solution value by solving a relaxed version of Problem~\ref{prob:unique}. In this relaxed version, one sensor is allowed to be assigned to multiple targets (unlike Problem~\ref{prob:unique}). However, we still require a pair of sensors to be assigned to at most one specific target.} 

We formulate the new assignment as Relaxed Pair Assignment (Problem \ref{prob:perfect}). It is clear that solving Relaxed Pair Assignment problem optimally gives us an upper bound of the optimality for \revo{Non-overlapping} Pair Assignment problem. We can use this upper bound for the comparison of the greedy approach in \revo{Non-overlapping} Pair Assignment.
\begin{problem}[Relaxed Pair Assignment] Given a set of sensors, $\mathcal{S} := \{s_0,\ldots,s_N\}$ and a set of targets, $\mathcal{T} := \{t_0,\ldots,t_L\}$, find an assignment of \revo{non-repetition} pairs of sensors to targets:
\begin{equation}
\text{maximize} \sum_{l=1}^L \omega(\sigma_1(t_l),\sigma_2(t_l), t_l)
\end{equation}
with the added constraint that all pairs are \revo{non-repetition}, that is, $\forall k, l= 1,\ldots, L$, $k\neq l$, $\sigma_1(t_k) \neq \sigma_1(t_l)$ \revo{or} $\sigma_2(t_k) \neq \sigma_2(t_l)$. 
\label{prob:perfect}
\end{problem}

The Relaxed Pair Assignment can be solved optimally by using maximum weight perfect bipartite matching (MWPBM)~\cite{cormen2009introduction}. 
Note that a sensor can be matched in multiple distinct pairs and assigned to multiple targets. \revo{This violates the constraint in Problem~\ref{prob:unique} where each sensor can be matched to at most one pair and assigned at most once. } The MWPBM can be solved using the Hungarian algorithm~\cite{kuhn1955hungarian} in polynomial time. 

\revo{While the Relaxed Pair Assignment computes an upper bound for $\omega(\textrm{OPT})$, we can compute $\omega(\textrm{OPT})$ exactly using brute-force when $N$ and $L$ are small  by enumerating all the possibilities. There are $\prod_{l=0}^{L-1} \binom{N-2l}{2}$ possible cases. Thus, the brute-force algorithm has an exponential running time. 

Figure~\ref{fig:comparsion_perfect_unique_bf} shows the total value of the greedy algorithm, $\omega(\text{GREEDY})$, the brute-force algorithm, $\omega(\text{OPT})$, and the MWPBM, $\omega(\text{MWPBM})$ for log det and inverse condition number. We simulate the following environment: $N=2L$, the positions of sensors and targets are generated randomly within $[0,100]\times [0,100]\in\mathbb{R}^{2}$ for 30 trials for each $L$, and the target's maximum control input is $u_{o,\max}=1$.  

We run the comparison code on a MacBook Pro with 2.6 GHz Intel Core i5 and  8 GB Memory. When $L=7$ and $N=14$, MATLAB runs out-of-memory when running the brute-force algorithm (there are $681080400$ possible cases for each trial). When $L=6$ and $N=12$, brute-force could not finish after running for 25 hours. Thus, we only consider the case when $L$ is varied from $1$ to $5$. 

From Figure~\ref{fig:comparsion_perfect_unique_bf}, we observe that $\omega(\text{MWPBM})$ is the highest because the MWPBM gives the upper bound of  $\omega(\text{OPT})$. More importantly, $\omega(\text{GREEDY})$ is close to $\omega(\text{OPT})$ and much higher than the theoretical bound of $\frac{1}{3}\omega(\textrm{OPT})$. }

\revo{Next, we compare $\omega(\textrm{GREEDY})$ and $\omega(\textrm{MWPBM})$, without brute-force, for larger values of $L$ and $N$. We vary $L$ from 1 to 20 and set $N=2L$.}  For both observability measures, Figure~\ref{fig:comparsion_perfect_unique} shows that $\omega(\textrm{GREEDY})$ is close to $\omega(\textrm{MWPBM})$ and much higher than $\frac{1}{3}\omega(\textrm{MWPBM})$. Thus, even though we give a theoretical $1/3$--approximation for the greedy algorithm, it performs much better in practice.


\begin{figure}[htb]
\centering{
\subfigure[]{\includegraphics[width=0.8\columnwidth]{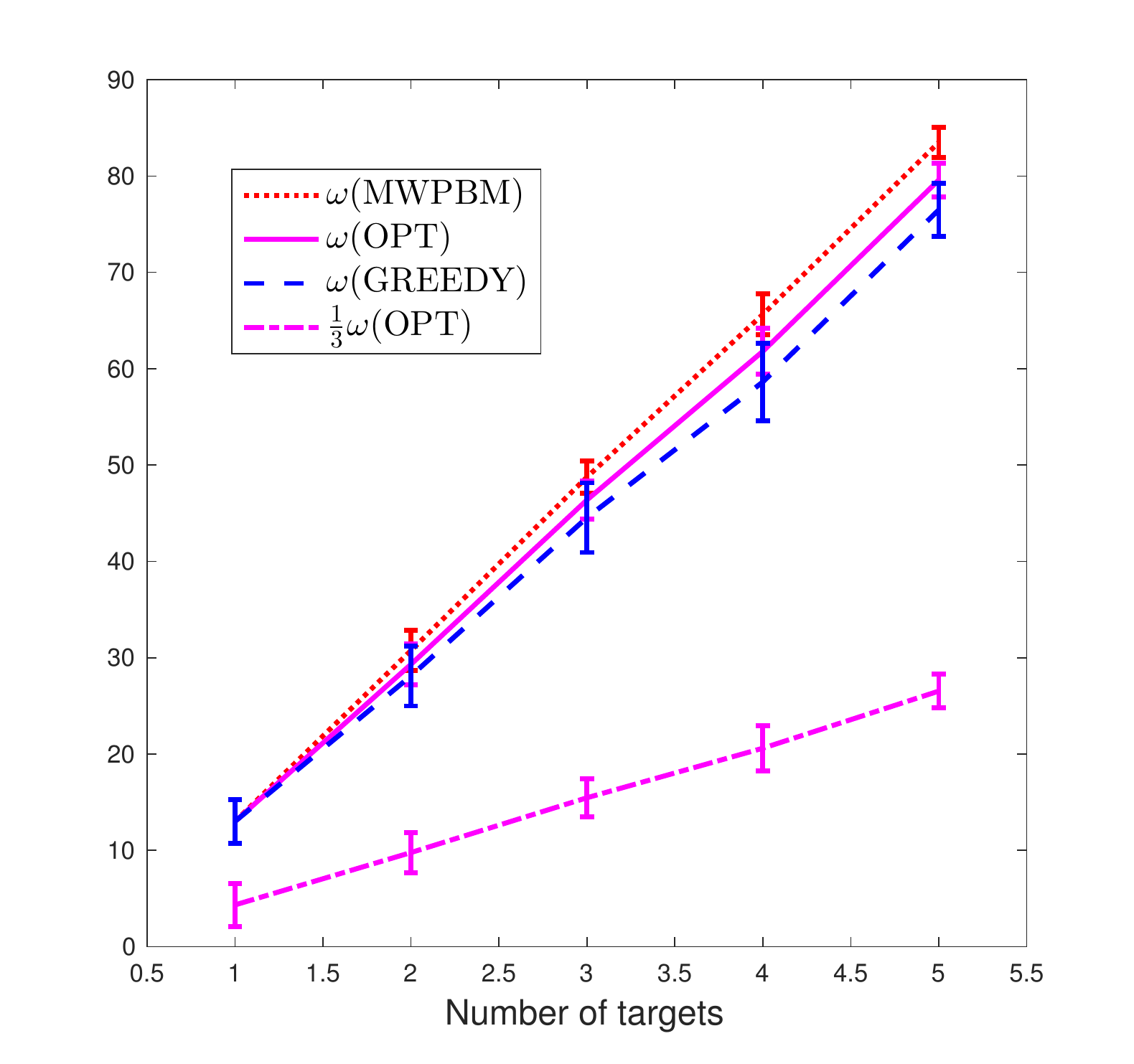}}
\subfigure[]{\includegraphics[width=0.8\columnwidth]{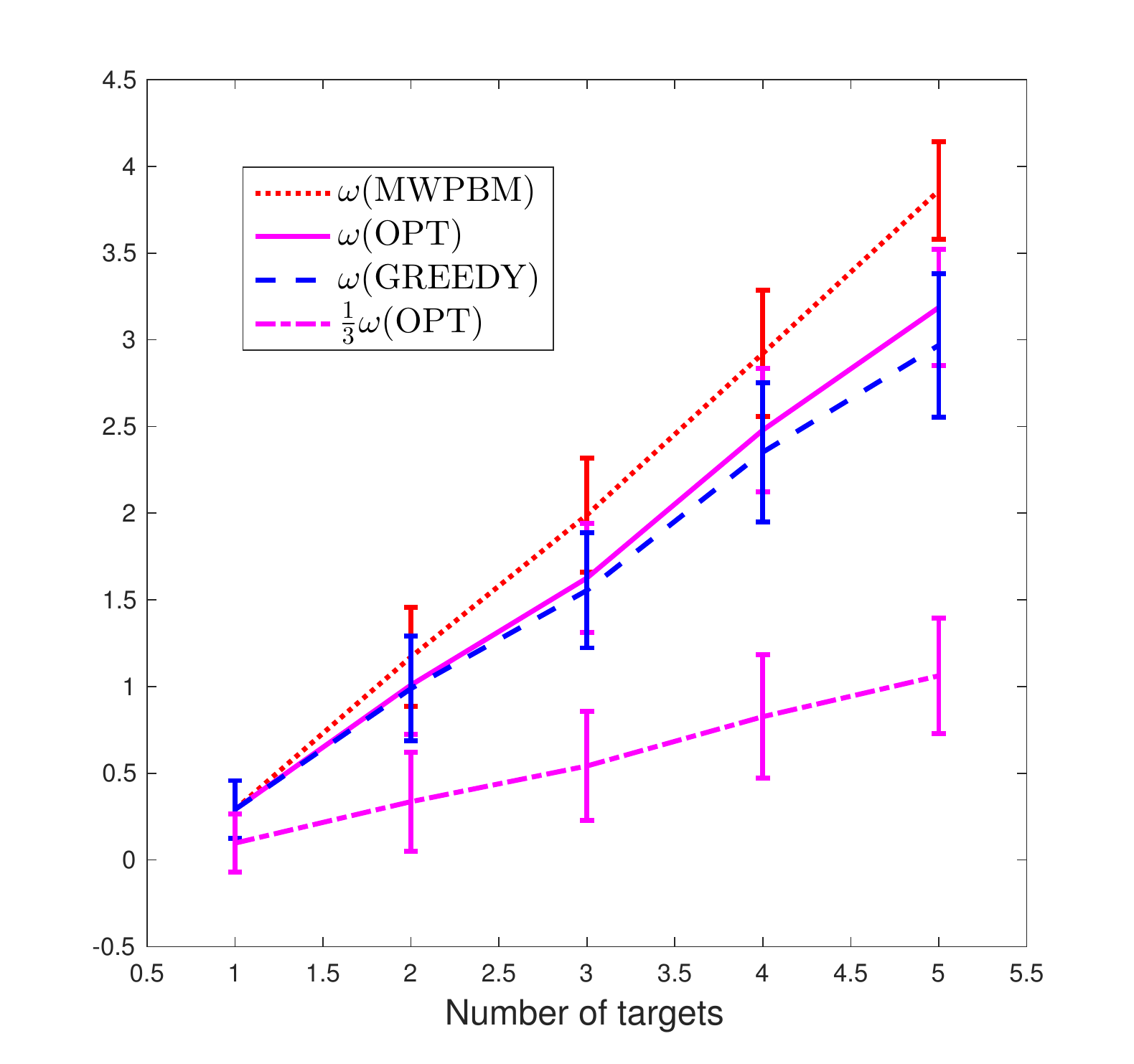}}
}
\caption{Comparison of \rev{the total value charged} by greedy approach (Algorithm~\ref{algorithm:unique_pair_assignment}) with the brute-force algorithm and the maximum perfect pair matching by using log det $\log(\det(\mathbb{O}(p_{t_l})))$ (a) and inverse condition number $\underline{C}^{-1}(O(p_{t_l}, u_{t_l}))$ (b), respectively.\label{fig:comparsion_perfect_unique_bf}} 
\end{figure}

\begin{figure}[htb]
\centering{
\subfigure[]{\includegraphics[width=0.8\columnwidth]{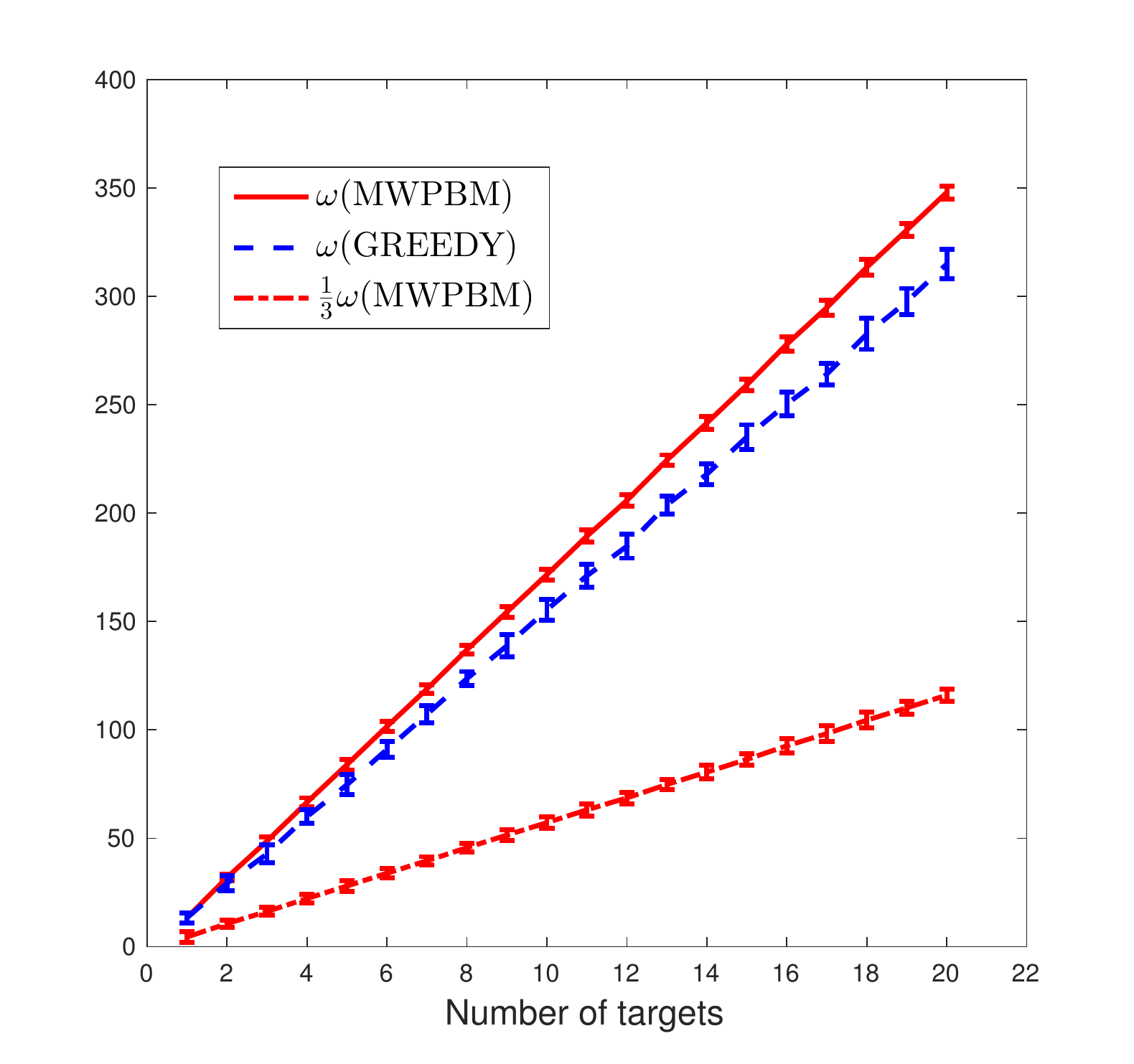}}
\subfigure[]{\includegraphics[width=0.8\columnwidth]{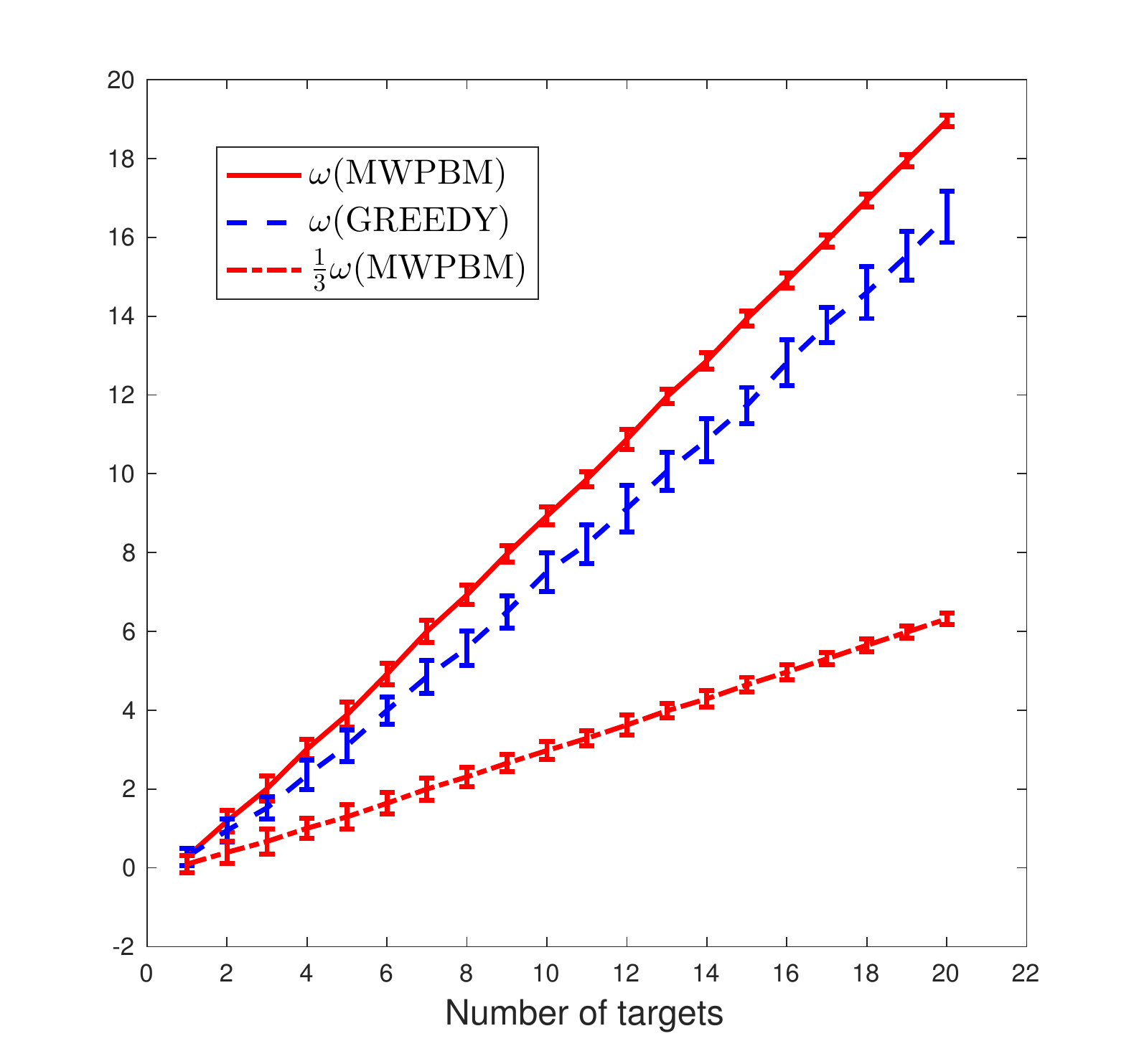}}
}
\caption{Comparison of \rev{the total value charged} by greedy approach (Algorithm~\ref{algorithm:unique_pair_assignment}) with the maximum perfect pair matching by using log det $\log(\det(\mathbb{O}(p_{t_l})))$ (a) and inverse condition number $\underline{C}^{-1}(O(p_{t_l}, u_{t_l}))$ (b), respectively.\label{fig:comparsion_perfect_unique}} 
\end{figure}

\section{Conclusion}
\label{sec:Conclusion}
In this paper, we solved sensor assignment problems to improve the observability for target tracking. We derived the lower bound on the inverse of the condition number of the observability matrix for a system with a mobile target and $N$ stationary sensors. The lower bound considers only the known part of the observability matrix --- the sensor-target relative position and an upper bound on the target's speed. We showed how this lower bound can be employed for sensor selection. 
We considered two sensor assignment problems for which we presented constant-factor approximation algorithms. While we presented the two algorithms as alternatives to each other, they can also be combined to give better results. \rev{We proved the log det, $\log\det(\mathbb{O}(p_{t_l}))$ is submodular and monotone only when $\mathbb{O}(p_{t_l})$ is non-singular and performs better in terms of tracking than the trace (which is submodular and monotone). We can combine the greedy general assignment with the greedy \revo{non-overlapping} pair assignment to use log det as the observability measure. First, we use the \revo{non-overlapping} assignment to assign a pair of sensors to each target to make sure that $\mathbb{O}(p_{t_l})$ is non-singular. Then, we can improve the assignment strategy by using the greedy general assignment strategy to assign more sensors to each target.} 

Our immediate work is focused on assigning sensors to cover an area instead of tracking a group of targets. Another avenue is designing an efficient set covering strategy based on observability measures which are submodular and monotone. \revo{In many scenarios, the sensors are actually robots that are mobile~\cite{spletzer2003dynamic,martinez2006optimal,hollinger2009efficient,chung2006decentralized,huang2015bank,zhou2017active,zhou2018active,khodayi2018distributed}. in such cases, in addition to solving the assignment problem, we can also optimize the trajectories of the sensors. An immediate future work is to devise joint assignment and planning algorithms for better observability. }

Many of the results presented can be easily extended to 3D tracking. We conjecture that at least three sensors will be required for ensuring non-trivial lower bounds of the inverse condition number in 3D. The \revo{non-overlapping} pair assignment can be extended to assigning \revo{non-overlapping} triplets. We conjecture that this algorithm will yield a $1/4$-approximation.

\appendix
\section*{Proof for Theorem~\ref{thm:lower_bound}}
\begin{proof}
The singular values of $O(p_{t_l},u_{t_l})$ can be found as the square-root of the eigenvalues of the \emph{symmetric observability matrix},  $\mathbb{O}(p_{t_l},u_{t_l})$, given as~\cite{Strang1976Linear},
\begin{eqnarray}
\mathbb{O}(p_{t_l},u_{t_l}) &=&O^{T}(p_{t_l},u_{t_l})O(p_{t_l},u_{t_l})\nonumber\\
&=&O^{T}(p_{t_l})O(p_{t_l})+O^{T}(u_{t_l})O(u_{t_l}).
\label{sy_ob_matrix}
\end{eqnarray}
\begin{equation}
\left\{
                \begin{array}{ll}
                  \sqrt{{\lambda_{\min}(\mathbb{O}(p_{t_l},u_{t_l}))}}={\sigma_{\min}(O(p_{t_l},u_{t_l}))},\\
                 \sqrt{{\lambda_{\max}(\mathbb{O}(p_{t_l},u_{t_l}))}}={\sigma_{\max}(O(p_{t_l},u_{t_l}))}.
                \end{array}
              \right.
\label{eqn:min/max_singualr_observability_metrix}
\end{equation}

We can use \emph{Weyl and dual Weyl} inequalities to bound the singular values. For Hermitian matrices $X$ and $Y$ with $r$ eigenvalues written in increasing order $\lambda_1(X)\leq \lambda_2(X)\leq\ldots\leq\lambda_r(X)$ and $\lambda_1(Y)\leq \lambda_2(Y)\leq\ldots\leq\lambda_r(Y)$, respectively, the \emph{Weyl inequalities}~\cite{franklin2012matrix} is given by,
\begin{equation}
\lambda_{i+j-1}(X+Y)\geq \lambda_{i}(X)+\lambda_{j}(Y)
\label{eqn:Weyl_inequalities}
\end{equation}
where $i,j\geq 1$ and $i+j-1\leq r$. Similarly, the \emph{dual Weyl inequalities} is given by
\begin{equation}
\lambda_{i+j-r}(X+Y)\leq \lambda_{i}(X)+\lambda_{j}(Y)
\label{eqn:Weyl_inequalities}
\end{equation}
where $i,j\geq 1$ and $i+j-r\leq r$.

Since $\mathbb{O}(p_{t_l},u_{t_l})\in \mathbb{R}^{2\times 2}$, $O^{T}(p_{t_l})O(p_{t_l})\in \mathbb{R}^{2\times 2}$ and $O^{T}(u_{t_l})O(u_{t_l})\in \mathbb{R}^{2\times 2}$ are symmetric matrices, they are Hermitian with the eigenvalues (in ascending order) as $\lambda_1(\mathbb{O}(p_{t_l},u_{t_l}))\leq \lambda_2(\mathbb{O}(p_{t_l},u_{t_l}))$, $\lambda_1(O^{T}(p_{t_l})O(p_{t_l}))\leq \lambda_2(O^{T}(p_{t_l})O(p_{t_l}))$ and $\lambda_1(O^{T}(u_{t_l})O(u_{t_l}))\leq \lambda_2(O^{T}(u_{t_l})O(u_{t_l}))$. Following the \emph{Weyl and dual Weyl} inequalities, we get
\begin{align}
&\lambda_1(\mathbb{O}(p_{t_l},u_{t_l})) \geq \lambda_1(O^{T}(p_{t_l})O(p_{t_l})) + \lambda_1(O^{T}(u_{t_l})O(u_{t_l})),\nonumber\\
&\lambda_2(\mathbb{O}(p_{t_l},u_{t_l})) \leq
\lambda_2(O^{T}(p_{t_l})O(p_{t_l})) + \lambda_2(O^{T}(u_{t_l})O(u_{t_l})).
\label{eqn:Weyl_eigen_four}
\end{align}
Thus,
\begin{align}
&\frac{\lambda_{1}(\mathbb{O}(p_{t_l},u_{t_l}))}{\lambda_{2}(\mathbb{O}(p_{t_l},u_{t_l}))}\geq
\nonumber\\
&\frac{\lambda_1(O^{T}(p_{t_l})O(p_{t_l})) + \lambda_1(O^{T}(u_{t_l})O(u_{t_l}))}{\lambda_2(O^{T}(p_{t_l})O(p_{t_l})) + \lambda_2(O^{T}(u_{t_l})O(u_{t_l}))}.
\label{eqn:lower_bound_inequality}
\end{align}
Then, from Equation \ref{eqn:min/max_singualr_observability_metrix} and Equation \ref{eqn:lower_bound_inequality},  the inverse of the condition number of the local nonlinear observability matrix, 
\begin{align}
\label{eqn:lower_bound_inversecond}
&C^{-1}(O(p_{t_l},u_{t_l}))=\sqrt{\frac{\lambda_{1}(\mathbb{O}(p_{t_l},u_{t_l}))}{\lambda_{2}(\mathbb{O}(p_{t_l},u_{t_l}))}}\nonumber\\
&\geq \sqrt{\frac{\lambda_1(O^{T}(p_{t_l})O(p_{t_l})) + \lambda_1(O^{T}(u_{t_l})O(u_{t_l}))}{\lambda_2(O^{T}(p_{t_l})O(p_{t_l})) + \lambda_2(O^{T}(u_{t_l})O(u_{t_l}))}}.
\end{align}
\revo{Thus, we have the lower of the inverse of the condition number, \begin{equation}
    \sqrt{\frac{\lambda_1(O^{T}(p_{t_l})O(p_{t_l})) + \lambda_1(O^{T}(u_{t_l})O(u_{t_l}))}{\lambda_2(O^{T}(p_{t_l})O(p_{t_l})) + \lambda_2(O^{T}(u_{t_l})O(u_{t_l}))}}.
    \label{eqn:lowerbound_inverse}
\end{equation}}
By calculating the eigenvalues of symmetric matrix of target's control contribution,
$$O^{T}(u_{t_l})O(u_{t_l})=\begin{bmatrix}
     u_{lx}^{2} & u_{lx}u_{ly}\\
    u_{ly}u_{lx} & u_{ly}^{2}
\end{bmatrix},$$
we have, 
\begin{eqnarray}
&&\lambda_{1}(O^{T}(u_{t_l})O(u_{t_l}))=0\nonumber\\
&&\lambda_{2}(O^{T}(u_{t_l})O(u_{t_l}))=u_{lx}^{2}+u_{ly}^{2}=u_{t_l}^{2}.
\label{eqn:max_min_eigenu}
\end{eqnarray}
\noindent Then the lower bound of $C^{-1}(O(o,u_o))$ \revo{(Equation~\ref{eqn:lowerbound_inverse})} is calculated as  \begin{eqnarray}
\underline{C}^{-1}(O(p_{t_l},u_{t_l})) &=&  \sqrt{\frac{\lambda_1(O^{T}(p_{t_l})O(p_{t_l}))}{\lambda_2(O^{T}(p_{t_l})O(p_{t_l})) + u_{t_l}^{2}}}\nonumber\\
&=& \frac{\sigma_{\min}(O(p_{t_l}))}{\sqrt{\sigma^{2}_{\max}(O(p_{t_l}))+ u_{t_l}^{2}}}
\label{eqn:inverse_lower_bound_simply}
\end{eqnarray}
Equation~\ref{eqn:inverse_lower_bound_simply} gives the main lower bound. Note that $C^{-1}(O(p_{t_l},u_{t_l}))$ cannot be determined since target's control input, $u_{t_l}$, is unknown. However, we know that $||u_{t_l}||_2 \leq u_{t_l,\max}$. Therefore,
\begin{equation}
\underline{C}^{-1}(O(p_{t_l},u_{t_l})) \geq \frac{\sigma_{\min}(O(p_{t_l}))}{\sqrt{\sigma^{2}_{\max}(O(p_{t_l}))+ u_{t_l,\max}^{2}}}
\label{eqn:inverse_lower_bound_withumax}
\end{equation}
This yields our main lower bound result.
\label{proof:theorem1}
\end{proof}
\section*{Proof for \revo{Corollary~\ref{cor:sensor_n_1}}}
\begin{proof}
The local observability matrix for one-sensor-target, $s_i-t_l$ system can be derived from Equation \ref{eqn:new_multi_sensor_target} as,
\begin{equation}
O_{i}(p_{t_l},u_{t_l})=\begin{bmatrix}
    x_{t_l}-x_{s_i} & y_{t_l}-y_{s_i} \\ 
     u_{lx} & u_{ly}   
\end{bmatrix}.
\label{eqn:observability_i_o}
\end{equation}
The sensor-target relative state contribution is 
\begin{equation}
  O_{i}(p_{t_l})=\begin{bmatrix}
    x_{t_l}-x_{s_i} & y_{t_l}-y_{s_i}
\end{bmatrix}.
\label{eqn:sensor_tar_realtive}
\end{equation}
The $s_i-t_l$ system is weakly locally observable if $O_{i}(p_{t_l},u_{t_l})$ has full column rank, i.e., $(x_{t_l}-x_{s_i})u_{ly} \neq (y_{t_l}-y_{s_i})u_{lx}$. However, the sensor does not know the target's control input, $u_{t_l}$. 


Given the symmetric matrix by sensor-target relative state contribution of $s_i-t_l$ system,
\begin{equation}
    O_{i}^{T}(p_{t_l})O_{i}(p_{t_l})=\begin{bmatrix}
    (x_{t_l}-x_{s_i})^{2},(x_{t_l}-x_{s_i})(y_{t_l}-y_{s_i}) \\ 
     (x_{t_l}-x_{s_i})(y_{t_l}-y_{s_i}), (y_{t_l}-y_{s_i})^{2} 
\end{bmatrix},
\end{equation}
we have, 
\begin{equation}
\left\{
                \begin{array}{ll}
                  {\sigma_{\min}(O_{i}(p_{t_l}))} =\sqrt{{\lambda_{\min}(O_{i}^{T}(p_{t_l})O_{i}(p_{t_l}))}}=0,\\
                 {\sigma_{\max}(O_{i}(p_{t_l}))} =\sqrt{{\lambda_{\max}(O_{i}^{T}(p_{t_l})O_{i}(p_{t_l}))}}\\
~~~~~~~~~~~~~~~=\sqrt{(x_{t_l}-x_{s_i})^{2}+(y_{t_l}-y_{s_i})^{2}}.
                \end{array}
              \right.
\label{eqn:min/max_singualr_observability_metrix_relative_state}
\end{equation}
Thus, from Equation \ref{eqn:inverse_lower_bound_simply}, the lower bound for $C^{-1}(O_{i}(p_{t_l},u_{t_l}))$ is $\frac{\sigma_{\min}(O_{i}(p_{t_l}))}{\sqrt{\sigma^{2}_{\max}(O_{i}(p_{t_l}))+ u_{t_l}^{2}}}=0$. Consequently, the lower bound cannot be controlled by the sensor. 
\end{proof}
\section*{Proof for \revo{Corollary~\ref{cor:sensor_n_gre2}}}
\begin{proof} According to Equation~\ref{eqn:min/max_singualr_observability_metrix}, we have,
\begin{equation}
\frac{\sigma_{\min}(O(p_{t_l},u_{t_l}))}{\sigma_{\max}(O(p_{t_l},u_{t_l}))}=\frac{\sqrt{\lambda_{\min}(\mathbb{O}(p_{t_l},u_{t_l}))}}{\sqrt{\lambda_{\max}(\mathbb{O}(p_{t_l},u_{t_l}))}}.
\label{eqn:eigensquare_observability_metrix}
\end{equation}
Following Equations~\ref{eqn:lower_bound_inequality} and \ref{eqn:max_min_eigenu}, we have the lower bound for $\frac{\lambda_{\min}(\mathbb{O}(p_{t_l},u_{t_l}))}{\lambda_{\max}(\mathbb{O}(p_{t_l},u_{t_l}))}$, described as
\begin{eqnarray}
\frac{\lambda_{\min}(\mathbb{O}(p_{t_l},u_{t_l}))}{\lambda_{\max}(\mathbb{O}(p_{t_l},u_{t_l}))}\geq \frac{\lambda_{\min}(O^{T}(p_{t_l})O(p_{t_l}))}{\lambda_{\max}(O^{T}(p_{t_l})O(p_{t_l})) + u_{t_l}^{2}},
\label{eqn:upper_lower_bounds}
\end{eqnarray}
Now, we equivalently rewrite the statement of the theorem (using eigenvalues instead of singular values) as:
\emph{if} 
\begin{equation}
\left\{
                \begin{array}{ll}
                 \frac{\lambda_{\min}(\revo{O^{'}}^T(p_{t_l})\revo{O^{'}}(p_{t_l}))}{\lambda_{\max}(\revo{O^{'}}^{T}(p_{t_l})\revo{O^{'}}(p_{t_l}))}\geq\frac{\lambda_{\min}(O^{T}(p_{t_l})O(p_{t_l}))}{\lambda_{\max}(O^{T}(p_{t_l})O(p_{t_l}))},\\
                \lambda_{\min}(\revo{O^{'}}^{T}(p_{t_l})\revo{O^{'}}(p_{t_l}))\geq \lambda_{\revo{\min}}(O^{T}(p_{t_l})O(p_{t_l}))
                \end{array}
              \right.
\label{eqn:assumption_singular_condition}
\end{equation}
\emph{then} 
\begin{eqnarray}
&&\frac{\lambda_{\min}(\revo{O^{'}}^{T}(p_{t_l})\revo{O^{'}}(p_{t_l}))}{\lambda_{\max}(\revo{O^{'}}^{T}(p_{t_l})\revo{O^{'}}(p_{t_l})) + u_{t_l}^{2}} \nonumber\\
&&\geq \frac{\lambda_{\min}(O^{T}(p_{t_l})O(p_{t_l}))}{\lambda_{\max}(O^{T}(p_{t_l})O(p_{t_l})) + u_{t_l}^{2}}
\label{eqn:conslusion_singular_condition}
\end{eqnarray}
where the $\lambda$ and $\lambda'$ denotes the eigenvalues before and after the sensors increase $C^{-1}(O(p_{t_l}))$ and $\sigma_{\min}(O(p_{t_l})$, respectively.

We start with the left-hand side of Equation~\ref{eqn:conslusion_singular_condition} to get,
\begin{eqnarray}
&&\frac{\lambda_{\min,l}^{'}}{\lambda_{\max,l}^{'} + u_{t_l}^{2}}-\frac{\lambda_{\min,l}}{\lambda_{\max,l} + u_{t_l}^{2}}\nonumber\\
&=&\frac{\lambda_{\min,l}^{'}\lambda_{\max,l}-\lambda_{\max,l}^{'}\lambda_{\min,l}}{(\lambda_{\max,l}^{'} + u_{t_l}^{2})(\lambda_{\max,l} + u_{t_l}^{2})}\nonumber\\
&+&\frac{u_{t_l}^{2}(\lambda_{\min,l}^{'}-\lambda_{\min,l})}{(\lambda_{\max,l}^{'} + u_{t_l}^{2})(\lambda_{\max,l} + u_{t_l}^{2})}
\label{eqn:conclude_proof}
\end{eqnarray}
where $\lambda_{\min,l}^{'}$, $\lambda_{\max,l}^{'}$, $\lambda_{\min,l}$ and $\lambda_{\max,l}$ denote the simplified forms of $\lambda_{\min}(\revo{O^{'}}^{T}(p_{t_l})\revo{O^{'}}(p_{t_l}))$, $\lambda_{\max}(\revo{O^{'}}^{T}(p_{t_l})\revo{O^{'}}(p_{t_l}))$, $\lambda_{\min}(O^{T}(p_{t_l})O(p_{t_l}))$ and $\lambda_{\max}(O^{T}(p_{t_l})O(p_{t_l}))$, respectively. \revo{Note that Equation~\ref{eqn:assumption_singular_condition} can be reordered to match the numerator in the
second line of Equation~\ref{eqn:conclude_proof}. Then we have,} 
\begin{equation}
\frac{\lambda_{\min,l}^{'}}{\lambda_{\max,l}^{'} + u_{t_l}^{2}}-\frac{\lambda_{\min,l}}{\lambda_{\max,l} + u_{t_l}^{2}} \geq 0.
\label{eqn:conclusion_proof}
\end{equation}
Hence the claim is proved.
\end{proof}
\begin{rem}
Equation~\ref{eqn:assumption_singular_condition} is sufficient, but not necessary condition to guarantee Equation~\ref{eqn:conslusion_singular_condition}. This is because Equation~\ref{eqn:conslusion_singular_condition} can be established with a weaker condition, 
\begin{equation*}
\lambda_{\min,l}^{'}\lambda_{\max,l}-\lambda_{\max,l}^{'}\lambda_{\min,l}+u_{t_l}^{2}(\lambda_{\min,l}^{'}-\lambda_{\min,l})\geq 0.
\label{eqn:weaker_assumption*}
\end{equation*}
We choose the stricter condition (Equation~\ref{eqn:assumption_singular_condition}) because it is conceptually easy to separate and eliminate the influence on the degree of observability from target's control input, $u_{t_l}$, which is unknown and uncontrolled.
\label{remark:rem2}
\end{rem}
\rev{
\section*{Proof for Theorem~\ref{thm:submodular_observability_measure}}
We first give the proofs of the properties of the \emph{symmetric observability matrix by sensor-target relative state contribution}, $\mathbb{O}(p_{t_l})$. Then we obtain the similar result for the \emph{symmetric observability matrix}, $\mathbb{O}(p_{t_l}, u_{t_l})$ by a minor extension. For the target $t_l$, denote its sensor-target relative state vector as $r_{il} = [x_{tl}-x_{si}, y_{tl}-y_{si}], i\in\{1,2,\cdots, N\}$. Denote its ground sensor-target relative state set as $\mathcal{V}=\{r_{1l}, r_{2l}, \cdots, r_{Nl}\}$. For a given $\mathcal{W} \subseteq \mathcal{V}$, we form $R_{\mathcal{W}}=[R_{0l}, r_{wl}]^{T}$ with a (possibly empty) existing matrix $R_{0l}$ and the associated sensor-target relative vector $r_{wl}\in \mathcal{W}$. We calculate the associated symmetric observability matrix by sensor-target relative state contribution as $\mathbb{O}(p_{t_l})_{\mathcal{W}} = R_{wl}^{T}R_{wl}$. To \revo{simplify} notation, we write $\mathbb{O}(p_{t_l})_{\mathcal{W}}$ as $\mathbb{O}_{\mathcal{W}}$.
\begin{lem}
The set function mapping subsets $\mathcal{W} \subseteq V$ to the trace of the associated symmetric observability matrix by sensor-target relative state contribution, $f(\mathcal{W}) = \text{trace}(\mathbb{O}_{\mathcal{W}})$ is modular (submodular) and monotone increasing.
\label{lem_trace_modular}
\end{lem}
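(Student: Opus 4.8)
The plan is to exploit the additive structure of the trace. First I would observe that stacking the relative-state row vectors into $R_{\mathcal{W}}$ means the associated symmetric matrix factors as a sum of rank-one outer products, one per element of $\mathcal{W}$:
\begin{equation}
\mathbb{O}_{\mathcal{W}} = R_{\mathcal{W}}^{T} R_{\mathcal{W}} = \sum_{r_{il} \in \mathcal{W}} r_{il}^{T} r_{il}.
\label{eqn:outer_product_decomp}
\end{equation}
Here each summand $r_{il}^{T} r_{il}$ is the $2\times 2$ outer product contributed by a single assigned sensor, and there are no cross terms between distinct sensors because the rows of $R_{\mathcal{W}}$ are precisely the vectors $r_{il}\in\mathcal{W}$.

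Taking the trace of Equation~\ref{eqn:outer_product_decomp} and using linearity of the trace together with the identity $\text{trace}(r_{il}^{T} r_{il}) = \|r_{il}\|_{2}^{2}$, I would arrive at
\begin{equation}
f(\mathcal{W}) = \text{trace}(\mathbb{O}_{\mathcal{W}}) = \sum_{r_{il} \in \mathcal{W}} \|r_{il}\|_{2}^{2}.
\label{eqn:trace_sum}
\end{equation}
This exhibits $f$ as a sum of per-element weights $g(r_{il}) = \|r_{il}\|_{2}^{2} = (x_{t_l}-x_{s_i})^{2}+(y_{t_l}-y_{s_i})^{2}$, i.e.\ a \emph{modular} set function. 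Modularity immediately yields submodularity with equality in the diminishing-returns inequality: for any $\mathcal{A} \subseteq \mathcal{B} \subseteq \mathcal{V}$ and any $r \notin \mathcal{B}$, both marginal gains $f(\mathcal{A}\cup\{r\})-f(\mathcal{A})$ and $f(\mathcal{B}\cup\{r\})-f(\mathcal{B})$ equal $g(r)$. For monotonicity, each weight satisfies $\|r_{il}\|_{2}^{2} \geq 0$ (in fact strictly positive under the no-collision assumption $\|p_{s_i}-p_{t_l}\|_{2}\neq 0$), so $f(\mathcal{B}) - f(\mathcal{A}) = \sum_{r \in \mathcal{B} \setminus \mathcal{A}} \|r\|_{2}^{2} \geq 0$.

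There is essentially no hard step here; the entire content lies in recognizing the outer-product decomposition in Equation~\ref{eqn:outer_product_decomp}. The only point requiring care is the bookkeeping verifying that $R_{\mathcal{W}}^{T} R_{\mathcal{W}}$ sums the outer products over exactly the elements of $\mathcal{W}$, so that the trace separates cleanly into one nonnegative term per assigned sensor and no interaction terms survive. Once that is established, both the modularity and the monotonicity claims follow from the elementary observations above. I expect the extension to $\mathbb{O}(p_{t_l},u_{t_l})$ to require only adding the fixed control-contribution term $O^{T}(u_{t_l})O(u_{t_l})$, which is independent of $\mathcal{W}$ and hence shifts $f$ by a constant, preserving both properties.
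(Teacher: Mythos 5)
Your proposal is correct and follows essentially the same route as the paper's proof: both rely on the outer-product decomposition $\mathbb{O}_{\mathcal{W}} = \sum_{r\in\mathcal{W}} r^{T}r$ and the linearity of the trace to exhibit $f$ as a modular function with nonnegative per-element weights. Your version is slightly more explicit (identifying each weight as $\|r_{il}\|_{2}^{2}$ and spelling out the marginal-gain equality), but the key idea is identical.
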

\begin{proof}
The symmetric observability matrix by sensor-target relative state contribution associated with $\mathcal{W}$,  $\mathbb{O}_{\mathcal{W}}$ can be calculated as
\begin{equation}
\mathbb{O}_{\mathcal{W}} = R_{wl}^{T}R_{wl} = \sum_{r_{wl}\in \mathcal{W}} r_{wl}^{T}r_{wl}.
\label{eqn:additivity_symmetric}
\end{equation}
Thus, for any $\mathcal{W} \subseteq V$, $\mathbb{O}_{\mathcal{W}}$ is simply a sum of the symmetric observability matrix by sensor-target relative state contribution associated with the individual rows of $R_{\mathcal{W}}$. Given the trace is a linear matrix function, we have 
\begin{eqnarray}
f(\mathcal{W}) &=& \text{trace}(\mathbb{O}_{\mathcal{W}}) \nonumber\\
&=& \text{trace}{\sum_{r_{wl}\in \mathcal{W}} r_{wl}^{T}r_{wl}}\nonumber\\
&=&\sum_{r_{wl}\in \mathcal{W}} \text{trace}(r_{wl}^{T}r_{wl}).
\label{eqn:trace_fun}
\end{eqnarray}
If no sensors are assigned to the target $t_l$, define $trace(\emptyset) =0$. Then we have $f(\mathcal{W}) = \text{trace} (\mathbb{O}_{\mathcal{W}})$ is a modular (submodular) and monotone increasing set function. 
\label{proof_trace_modular}
\end{proof}

\begin{lem}
The set function mapping subsets $\mathcal{W} \subseteq V$ to the rank of the associated symmetric observability matrix by sensor-target relative state contribution, $f(\mathcal{W}) = \text{rank}(\mathbb{O}_{\mathcal{W}})$ is submodular and monotone increasing.
\label{lem_rank_modular}
\end{lem}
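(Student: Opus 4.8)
The plan is to reduce the rank of the symmetric observability matrix to the rank of the underlying row matrix $R_{\mathcal{W}}$, and then recognize the latter as the rank function of a linear (vector) matroid, which is classically monotone and submodular. First I would invoke the identity $\text{rank}(A^{T}A) = \text{rank}(A)$, valid for any real matrix $A$; applied to $\mathbb{O}_{\mathcal{W}} = R_{\mathcal{W}}^{T}R_{\mathcal{W}}$ this gives $f(\mathcal{W}) = \text{rank}(\mathbb{O}_{\mathcal{W}}) = \text{rank}(R_{\mathcal{W}}) = \dim \operatorname{span}\{\, r_{wl} : r_{wl}\in \mathcal{W} \,\}$. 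Thus $f$ is exactly the dimension of the linear span of the selected sensor-target relative-state vectors, with the convention $f(\emptyset)=0$.

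Monotonicity is then immediate: if $\mathcal{A}\subseteq \mathcal{B}\subseteq \mathcal{V}$ then $\operatorname{span}(\mathcal{A})\subseteq \operatorname{span}(\mathcal{B})$, so $\dim\operatorname{span}(\mathcal{A}) \leq \dim\operatorname{span}(\mathcal{B})$, i.e. $f(\mathcal{A})\leq f(\mathcal{B})$. For submodularity I would verify the diminishing-returns inequality $f(\mathcal{A}\cup\{r\}) - f(\mathcal{A}) \geq f(\mathcal{B}\cup\{r\}) - f(\mathcal{B})$ for all $\mathcal{A}\subseteq \mathcal{B}$ and any $r\in\mathcal{V}$. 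The marginal gain of adjoining a vector $r$ equals $1$ if $r\notin\operatorname{span}$ of the current set and $0$ otherwise. Since $\operatorname{span}(\mathcal{A})\subseteq\operatorname{span}(\mathcal{B})$, the event $r\notin\operatorname{span}(\mathcal{B})$ implies $r\notin\operatorname{span}(\mathcal{A})$; hence whenever the marginal gain for $\mathcal{B}$ is $1$, the marginal gain for $\mathcal{A}$ is also $1$, and the displayed inequality holds in every case. This is precisely the statement that the rank function of a vector matroid is submodular, mirroring the Gramian-rank argument of~\cite{summers2016submodularity}.

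Finally, the extension to the full symmetric observability matrix $\mathbb{O}(p_{t_l},u_{t_l})$ would follow by appending the fixed control-contribution row $O(u_{t_l})$ to $R_{\mathcal{W}}$; since this row is common to every subset, the identical span/matroid argument applies to the augmented family of vectors. The only mildly delicate point—and the main thing to get right—is the marginal-gain case analysis, ensuring that rank-zero contributions from zero or coincident relative-state vectors are handled. Because $r_{wl}^{T}r_{wl}$ is rank one exactly when $r_{wl}\neq 0$, and the problem statement assumes no sensor coincides with the target, each nonzero $r_{wl}$ contributes a genuine rank-one term and the underlying matroid structure is intact, so the two properties carry over without modification.
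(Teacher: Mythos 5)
Your proof is correct, but it takes a genuinely different route from the paper's. The paper works directly with the symmetric matrices: it uses the additivity $\mathbb{O}_{\mathcal{W}} = \sum_{r\in\mathcal{W}} r^{T}r$, the identity $\text{rank}(Q_1+Q_2)=\text{rank}(Q_1)+\text{rank}(Q_2)-\dim(\text{range}(Q_1)\cap\text{range}(Q_2))$, and the characterization (citing Lov\'asz) that $f$ is submodular iff each derived marginal-gain function $f_r(\mathcal{W})=f(\mathcal{W}\cup\{r\})-f(\mathcal{W})$ is monotone decreasing; it then observes that $f_r(\mathcal{W})=\text{rank}(\mathbb{O}_{r})-\dim(\text{range}(\mathbb{O}_{\mathcal{W}})\cap\text{range}(\mathbb{O}_{r}))$ can only decrease as $\text{range}(\mathbb{O}_{\mathcal{W}})$ grows. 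You instead collapse the whole question onto the row matrix via $\text{rank}(R_{\mathcal{W}}^{T}R_{\mathcal{W}})=\text{rank}(R_{\mathcal{W}})=\dim\operatorname{span}\{r_{wl}:r_{wl}\in\mathcal{W}\}$ and recognize this as the rank function of a linear matroid, for which the $0/1$ marginal-gain case analysis gives both properties immediately. Your route is more elementary (no rank-sum formula, no appeal to the derived-function characterization) and makes the combinatorial structure explicit; the paper's route has the advantage of running in parallel with its log-det proof, which uses the same marginal-gain machinery and the same additivity of $\mathbb{O}_{\mathcal{W}}$, so the two lemmas share a template. Your handling of the extension to $\mathbb{O}(p_{t_l},u_{t_l})$ by appending the fixed control row to the spanning family is also sound and matches the paper's remark that the fixed positive-semidefinite control contribution does not disturb the argument.
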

\begin{proof}
Given two linear transformations $Q_1, Q_2 \in \mathbb{R}^{n\times n}$, we have 
\begin{align}
&\text{rank}(Q_1+Q_2)\nonumber\\
&=\text{rank}(Q_1)+ \text{rank}(Q_2)\nonumber\\
&-\text{dim}(\text{range}(Q_1) \cap \text{range}(Q_2)).
\end{align}
\label{proof_rank_modular}
From~\cite{lovasz1983submodular}, we know that a set function $f: 2^{\mathcal{V}} \to \mathbb{R}$ is submodular if and only if the derived set functions $f_r : 2^{\mathcal{V}-\{r\}} \to \mathbb{R}$
$$f_r(\mathcal{W}) = f(\mathcal{W}\cup \{r\})-f(\mathcal{W}),$$ are monotone decreasing for all $r\in \mathcal{V}$. We can form the marginal gain functions for rank measure as
\begin{align}
&f_r(\mathcal{W})=\text{rank}(\mathbb{O}_{\mathcal{W}\cup{r}})-\text{rank}(\mathbb{O}_{\mathcal{W}})\nonumber\\
&=\text{rank}(\mathbb{O}_{r})-\text{dim}(\text{range}(\mathbb{O}_{\mathcal{W}}) \cap \text{range}(\mathbb{O}_{r})).
\end{align}
Note that, $\text{rank}(\mathbb{O}_{r})$ is a constant and $\text{dim}(\text{range}(\mathbb{O}_{\mathcal{W}}) )$ only increases with $\mathcal{W}$. Thus, \revo{$f_r$} is monotone decreasing which means $f(\mathcal{W}) = \text{rank}(\mathbb{Q}_{\mathcal{W}})$ is a submodular function. From the additivity property of the $\mathbb{O}_{\mathcal{W}}$ (Equation~\ref{eqn:additivity_symmetric}), it is clear that $f(\mathcal{W}) = \text{rank}(\mathbb{Q}_{\mathcal{W}})$ is monotone increasing. 
\end{proof}

\begin{lem}
The set function mapping subsets $\mathcal{W} \subseteq V$ to the log det of the associated symmetric observability matrix by sensor-target relative state contribution, $f(\mathcal{W}) = \log\det(\mathbb{O}_{\mathcal{W}})$ is submodular and monotone increasing if $\mathbb{O}_{\mathcal{W}}$ is non-singular.
\label{lem_logdet_submodular}
\end{lem}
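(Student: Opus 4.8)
The plan is to exploit the additivity established in Equation~\ref{eqn:additivity_symmetric}, which writes $\mathbb{O}_{\mathcal{W}}=\sum_{r_{wl}\in\mathcal{W}} r_{wl}^{T}r_{wl}$ as a sum of rank-one positive semidefinite outer products. Under this representation, enlarging $\mathcal{W}$ corresponds to adding rank-one PSD terms, so both claimed properties reduce to standard Loewner-order arguments in the spirit of~\cite{summers2016submodularity}. For monotonicity I would first note that $\mathcal{W}\subseteq\mathcal{W}'$ gives $\mathbb{O}_{\mathcal{W}'}-\mathbb{O}_{\mathcal{W}}=\sum_{r_{wl}\in\mathcal{W}'\setminus\mathcal{W}} r_{wl}^{T}r_{wl}\succeq 0$, i.e.\ $\mathbb{O}_{\mathcal{W}}\preceq \mathbb{O}_{\mathcal{W}'}$. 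Since $\log\det$ is monotone increasing on the cone of positive definite matrices, whenever both matrices are nonsingular we obtain $f(\mathcal{W})\le f(\mathcal{W}')$.

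For submodularity I would invoke the same Lov\'asz characterization already used in the proof of Lemma~\ref{lem_rank_modular}~\cite{lovasz1983submodular}: it suffices to show the marginal-gain function $f_r(\mathcal{W})=f(\mathcal{W}\cup\{r\})-f(\mathcal{W})$ is monotone decreasing in $\mathcal{W}$ for each fixed $r$. Writing the newly added row as the column vector $v=r_{wl}^{T}$, so that $\mathbb{O}_{\mathcal{W}\cup\{r\}}=\mathbb{O}_{\mathcal{W}}+vv^{T}$, the matrix determinant lemma yields
\begin{equation}
f_r(\mathcal{W})=\log\det(\mathbb{O}_{\mathcal{W}}+vv^{T})-\log\det(\mathbb{O}_{\mathcal{W}})=\log\!\left(1+v^{T}\mathbb{O}_{\mathcal{W}}^{-1}v\right).
\label{eqn:logdet_margin}
\end{equation}
For $\mathcal{W}\subseteq\mathcal{W}'$ with both symmetric observability matrices nonsingular, $\mathbb{O}_{\mathcal{W}}\preceq\mathbb{O}_{\mathcal{W}'}$ implies $\mathbb{O}_{\mathcal{W}}^{-1}\succeq\mathbb{O}_{\mathcal{W}'}^{-1}$, because matrix inversion reverses the Loewner order on positive definite matrices. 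Hence $v^{T}\mathbb{O}_{\mathcal{W}}^{-1}v\ge v^{T}\mathbb{O}_{\mathcal{W}'}^{-1}v$, and since $\log(1+\cdot)$ is increasing, $f_r(\mathcal{W})\ge f_r(\mathcal{W}')$. This is exactly the decreasing-marginal-gain condition, so $f$ is submodular.

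The main obstacle is purely the singularity issue, which is why the hypothesis is needed: the determinant lemma in \eqref{eqn:logdet_margin}, the existence of $\mathbb{O}_{\mathcal{W}}^{-1}$, and the order-reversal of inversion all require $\mathbb{O}_{\mathcal{W}}$ to be positive definite, and $\log\det$ is otherwise $-\infty$. I would therefore restrict the domain to subsets on which $\mathbb{O}_{\mathcal{W}}$ is nonsingular, noting that any single rank-one term keeps the rank at $1<2$, so nonsingularity forces $|\mathcal{W}|\ge 2$ with two linearly independent relative-state vectors; this is consistent with the remark following Theorem~\ref{thm:submodular_observability_measure} that at least two (non-collinear) sensors must be assigned. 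Finally, the analogous statement for $\mathbb{O}(p_{t_l},u_{t_l})$ follows verbatim, since by Equation~\ref{sy_ob_matrix} it differs from $\mathbb{O}(p_{t_l})$ only by the additional fixed PSD term $O^{T}(u_{t_l})O(u_{t_l})$, which does not affect the Loewner-order comparisons used above.
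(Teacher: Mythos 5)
Your proof is correct, and for the submodularity step it takes a genuinely different route from the paper's. The monotonicity argument coincides with the paper's (Loewner monotonicity of $\log\det$ under the additive PSD structure of $\mathbb{O}_{\mathcal{W}}$), and both proofs reduce submodularity to showing that the marginal gain $f_r(\mathcal{W})=f(\mathcal{W}\cup\{r\})-f(\mathcal{W})$ is decreasing in $\mathcal{W}$, via the characterization from \cite{lovasz1983submodular}. Where you diverge is in how that decrease is established. The paper interpolates $\mathbb{O}(\gamma)=\mathbb{O}_{\mathcal{W}_1}+\gamma(\mathbb{O}_{\mathcal{W}_2}-\mathbb{O}_{\mathcal{W}_1})$, differentiates $\log\det(\mathbb{O}(\gamma)+\mathbb{O}_{r})-\log\det(\mathbb{O}(\gamma))$ using $\dv{}{\gamma}\log\det X(\gamma)=\text{trace}[X(\gamma)^{-1}\dv{}{\gamma}X(\gamma)]$, observes that the derivative is the trace of a product of a negative semidefinite with a positive semidefinite matrix and is hence nonpositive, and integrates over $\gamma\in[0,1]$. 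You instead exploit the fact that each added element contributes a rank-one update $vv^{T}$ with $v=r_{wl}^{T}$, so the matrix determinant lemma collapses the marginal gain to the closed form $\log\left(1+v^{T}\mathbb{O}_{\mathcal{W}}^{-1}v\right)$, after which antitonicity of matrix inversion on the positive definite cone and monotonicity of $\log(1+\cdot)$ finish the argument. Your route is more elementary (no matrix calculus or integral representation) and produces an explicit marginal-gain formula that is also useful computationally for fast greedy updates; the paper's path-derivative argument is rank-agnostic and would carry over unchanged if an element contributed a higher-rank PSD increment (e.g., a sensor supplying several measurement rows), whereas your version would then need the generalized identity $\log\det(I+V^{T}\mathbb{O}_{\mathcal{W}}^{-1}V)$. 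Both treatments correctly confine the claim to the positive definite regime, which is exactly where the nonsingularity hypothesis of the lemma is required.
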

\begin{proof}
We use the similar idea to show the submodularity of the log det measure, namely, showing that the derived set functions, $f_r : 2^{\mathcal{V}-\{r\}} \to \mathbb{R}$ 
\begin{align*}
&f_r(\mathcal{W})=\log\det(\mathbb{O}_{\mathcal{W}\cup{r}})-\log \det(\mathbb{O}_{\mathcal{W}})\\
&=\log\det(\mathbb{O}_{\mathcal{W}} + \mathbb{O}_{r}) - \log\det(\mathbb{O}_{\mathcal{W}}),
\label{eqn:marginal_logdet}
\end{align*}
are monotone decreasing for any $r\in \mathcal{V}$. Take any $\mathcal{W}_1 \subseteq \mathcal{W}_2 \subseteq \mathcal{V}-\{r\}$. By the additivity property of the $\mathbb{O}_{\mathcal{W}}$ (Equation~\ref{eqn:additivity_symmetric}), it is clear that $\mathcal{W}_1 \subseteq \mathcal{W}_2 \Rightarrow \mathbb{O}_{\mathcal{W}_1} \preceq \mathbb{O}_{\mathcal{W}_2}$. Define $\mathbb{O}(\gamma) = \mathbb{O}_{\mathcal{W}_1} + \gamma(\mathbb{O}_{\mathcal{W}_2} - \mathbb{O}_{\mathcal{W}_1})$ for $\gamma \in [0,1]$. Clearly, $\mathbb{O}(0) = \mathbb{O}_{\mathcal{W}_1}$ and $\mathbb{O}(1) = \mathbb{O}_{\mathcal{W}_2}$. Now define 
\begin{equation}
\hat{f}_r(\mathbb{O}(\gamma)) = \log\det (\mathbb{O}(\gamma) + \mathbb{O}_{r}) -\log\det(\mathbb{O}(\gamma)). 
\label{eqn:auxiliary}
\end{equation}
Note that $\hat{f}_r(\mathbb{O}(0)) = \hat{f}_r(\mathbb{O}_{\mathcal{W}_1})$ and $\hat{f}_r(\mathbb{O}(1)) = \hat{f}_r(\mathbb{O}_{\mathcal{W}_2})$. We have 
\begin{align}
&\dv{}{\gamma} \hat{f}_r(\mathbb{O}(\gamma))\nonumber\\
& = \dv{}{\gamma}[\log\det (\mathbb{O}(\gamma) + \mathbb{O}_{r}) -\log\det(\mathbb{O}(\gamma))]\nonumber\\
& = \text{trace}[(\mathbb{O}(\gamma) + \mathbb{O}_{r})^{-1}(\mathbb{O}_{\mathcal{W}_2}-\mathbb{O}_{\mathcal{W}_1})]\nonumber\\
&~~-\text{trace}[(\mathbb{O}(\gamma))^{-1}(\mathbb{O}_{\mathcal{W}_2}-\mathbb{O}_{\mathcal{W}_1})]\nonumber\\
& = \text{trace}[((\mathbb{O}(\gamma) + \mathbb{O}_{r})^{-1} - (\mathbb{O}(\gamma))^{-1}) (\mathbb{O}_{\mathcal{W}_2}-\mathbb{O}_{\mathcal{W}_1})]\nonumber\\
& \leq 0.
\end{align}
The second equality follows by the matrix derivative formula $\dv{}{t}\log\det X(\gamma) = \text{trace}[X(\gamma)^{-1}\dv{}{\gamma}X(\gamma)]$~\cite{petersen2008matrix}. \revo{Notably, since $\mathbb{O}(\gamma)\succeq 0$ and $\mathbb{O}_{r} \succeq 0$, we have $\mathbb{O}(\gamma) +  \mathbb{O}_{r} \succeq \mathbb{O}(\gamma)$ and thus $(\mathbb{O}(\gamma) + \mathbb{O}_{r})^{-1} - (\mathbb{O}(\gamma))^{-1} \preceq 0$.  Given} $(\mathbb{O}(\gamma) + \mathbb{O}_{r})^{-1} - (\mathbb{O}(\gamma))^{-1} \preceq 0$, and $\mathbb{O}_{\mathcal{W}_2}-\mathbb{O}_{\mathcal{W}_1}\succeq 0$, the last inequality holds. Since 
$$\hat{f}_r(\mathbb{O}(1)) = \hat{f}_r(\mathbb{O}(0)) + \int_{0}^{1} \dv{}{\gamma} \hat{f}_r(\mathbb{O}(\gamma)) d\gamma,$$ it follows that $\hat{f}_r(\mathbb{O}(1)) = \hat{f}_r(\mathbb{O}_{\mathcal{W}_2}) \leq \hat{f}_r(\mathbb{O}(0)) = \hat{f}_r(\mathbb{O}_{\mathcal{W}_1})$. Thus, we have $f_r$ is monotone decreasing, and $f$ is submodular. Similarly, the additivity property of the $\mathbb{O}_{\mathcal{W}}$ (Equation~\ref{eqn:additivity_symmetric}) shows that $f$ is monotone increasing. 

However, the proof of the monotonicity and submodularity for the log det is based on the condition that $\mathbb{O}(p_{t_l})$ is non-singular. Otherwise, the conclusion does not hold. For example, if a single sensor is assigned to target $t_l$, $\mathbb{O}(p_{t_l})$ is always singular (see the proof of Corollary~\ref{cor:sensor_n_1}), and thus $\log\det \mathbb{O}(p_{t_l}) = -\infty$. If no sensors are  assigned, we define the empty set case as $\log\det(\emptyset) = 0$. Then, the function is not monotone increasing. 

The proof of the monotonicity and submodularity for the trace, rank and log det of the \emph{symmetric observability matrix}, $\mathbb{O}(p_{t_l},u_{t_l}):=O^{T}(p_{t_l},u_{t_l})O(p_{t_l},u_{t_l})$ is similar to the proof for that of the \emph{symmetric observability matrix by the sensor-target relative state contribution}, $\mathbb{O}(p_{t_l})$ as provided above. Because, from Equation~\ref{sy_ob_matrix}, we know, 
\begin{equation}
  \mathbb{O}(p_{t_l},u_{t_l}) =
\mathbb{O}(p_{t_l})+O^{T}(u_{t_l})O(u_{t_l}).  
\end{equation}
The \emph{unknown control input contribution} $O^{T}(u_{t_l})O(u_{t_l}) \succeq 0$ does not affect the properties of the measures for the \emph{symmetric observability matrix}, $\mathbb{O}(p_{t_l},u_{t_l})$, since assigning sensors only influences the \emph{sensor-target relative state contribution part}, $\mathbb{O}(p_{t_l})$ . But for the log det of \emph{symmetric observability matrix}, we need to guarantee  $\mathbb{O}(p_{t_l},u_{t_l})$ is non-singular.
\label{proof_log_det}
\end{proof}
}

\newpage
\bibliographystyle{IEEEtran}
\bibliography{refs.bib}

\begin{IEEEbiography}[{\includegraphics[width=1in,height=1.25in,clip,keepaspectratio]{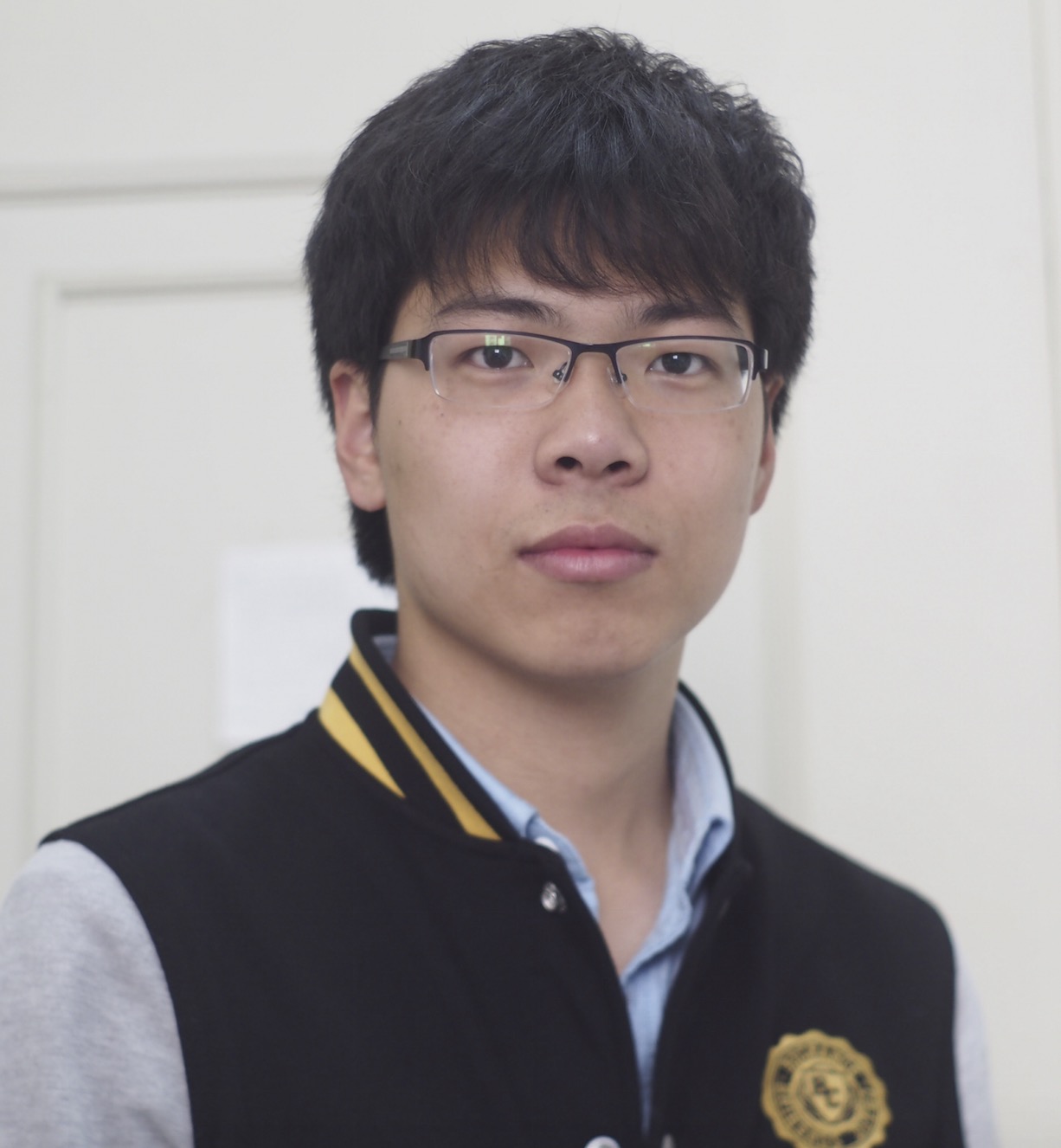}}]{Lifeng Zhou}  received the
B.S. degree in Automation from Huazhong University of Science and Technology, Wuhan, China, in 2013, the M.Sc.
degree in Automation from Shanghai Jiao Tong University, Shanghai, China, in 2016. He
is currently pursuing the Ph.D. degree in Electrical and Computer Engineering, Virginia Tech,
Blacksburg, VA, USA. 

His research interests include multi-robot coordination, event-based control, sensor assignment and risk-averse decision making.
\end{IEEEbiography}

\begin{IEEEbiography}[{\includegraphics[width=1in,height=1.25in,clip,keepaspectratio]{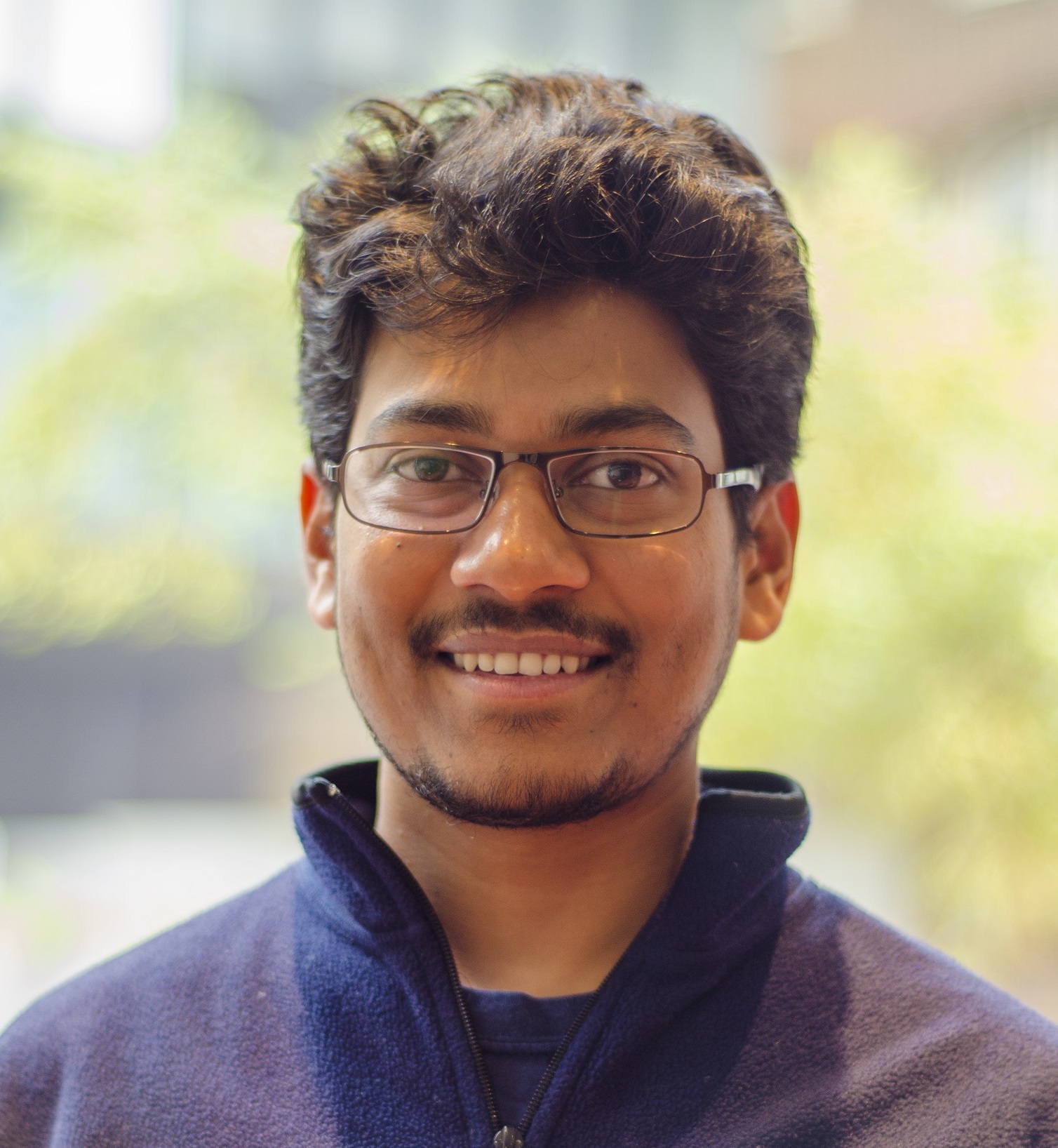}}]{Pratap Tokekar} is an Assistant Professor in the Department of Electrical and Computer Engineering at Virginia Tech. Previously, he was a Postdoctoral Researcher at the GRASP lab of University of Pennsylvania. He obtained his Ph.D. in Computer Science from the University of Minnesota in 2014 and Bachelor of Technology degree in Electronics and Telecommunication from College of Engineering Pune, India in 2008. He is a recipient of the NSF CISE Research Initiation Initiative award. His research interests include algorithmic and field robotics, and cyber physical systems, and their applications to precision agriculture and environmental monitoring.
\end{IEEEbiography}

\end{document}